\documentclass{article}

\usepackage[top=1in, bottom=1in, left=1in, right=1in]{geometry}

\usepackage[T1]{fontenc}
\usepackage{xcolor} 
\definecolor{safeOrange}{HTML}{BDFFF4}
\definecolor{backgroundOrange}{HTML}{FDD7D7}
\definecolor{safePurple}{HTML}{FE6100}
\definecolor{safeLighterPurple}{HTML}{9582F3}
\definecolor{safePink}{HTML}{D81B60}
\definecolor{safeBlue}{HTML}{1E88E5}
\usepackage{mathtools} 
\usepackage{amsthm}
\usepackage{orcidlink}
\usepackage[table]{xcolor}
\usepackage[dvipsnames,svgnames]{xcolor}
\usepackage{pdflscape}

\usepackage{comment,url,algorithm,algorithmic,graphicx,subcaption,relsize}
\usepackage{amssymb,amsfonts,amsmath,amsthm,amscd,dsfont,mathrsfs,mathtools,nicefrac}
\usepackage{float,psfrag,epsfig,color,xcolor,url,hyperref}
\usepackage{epstopdf,bbm,enumitem}
\usepackage[toc,page]{appendix}

\def\ddefloop#1{\ifx\ddefloop#1\else\ddef{#1}\expandafter\ddefloop\fi}

\def\ddef#1{\expandafter\def\csname bb#1\endcsname{\ensuremath{\mathbb{#1}}}}
\ddefloop ABCDEFGHIJKLMNOPQRSTUVWXYZ\ddefloop

\def\ddef#1{\expandafter\def\csname c#1\endcsname{\ensuremath{\mathcal{#1}}}}
\ddefloop ABCDEFGHIJKLMNOPQRSTUVWXYZ\ddefloop

\def\ddef#1{\expandafter\def\csname s#1\endcsname{\ensuremath{\mathscr{#1}}}}
\ddefloop ABCDEFGHIJKLMNOPQRSTUVWXYZ\ddefloop

\def\ddef#1{\expandafter\def\csname v#1\endcsname{\ensuremath{\boldsymbol{#1}}}}
\ddefloop ABCDEFGHIJKLMNOPQRSTUVWXYZabcdefghijklmnopqrstuvwxyz\ddefloop

\def\ddef#1{\expandafter\def\csname v#1\endcsname{\ensuremath{\boldsymbol{\csname #1\endcsname}}}}
\ddefloop {alpha}{beta}{gamma}{delta}{epsilon}{varepsilon}{zeta}{eta}{theta}{vartheta}{iota}{kappa}{lambda}{mu}{nu}{xi}{pi}{varpi}{rho}{varrho}{sigma}{varsigma}{tau}{upsilon}{phi}{varphi}{chi}{psi}{omega}{Gamma}{Delta}{Theta}{Lambda}{Xi}{Pi}{Sigma}{varSigma}{Upsilon}{Phi}{Psi}{Omega}{ell}\ddefloop

\def\balign#1\ealign{\begin{align}#1\end{align}}
\def\baligns#1\ealigns{\begin{align*}#1\end{align*}}
\def\balignat#1\ealign{\begin{alignat}#1\end{alignat}}
\def\balignats#1\ealigns{\begin{alignat*}#1\end{alignat*}}
\def\bitemize#1\eitemize{\begin{itemize}#1\end{itemize}}
\def\benumerate#1\eenumerate{\begin{enumerate}#1\end{enumerate}}

\newenvironment{talign*}
 {\csname align*\endcsname}
 {\endalign}
\newenvironment{talign}
 {\csname align\endcsname}
 {\endalign}

\def\balignst#1\ealignst{\begin{talign*}#1\end{talign*}}
\def\balignt#1\ealignt{\begin{talign}#1\end{talign}}

\let\originalleft\left
\let\originalright\right
\renewcommand{\left}{\mathopen{}\mathclose\bgroup\originalleft}
\renewcommand{\right}{\aftergroup\egroup\originalright}

\def\tinycitep*#1{{\tiny\citep*{#1}}}
\def\tinycitealt*#1{{\tiny\citealt*{#1}}}
\def\tinycite*#1{{\tiny\cite*{#1}}}
\def\smallcitep*#1{{\scriptsize\citep*{#1}}}
\def\smallcitealt*#1{{\scriptsize\citealt*{#1}}}
\def\smallcite*#1{{\scriptsize\cite*{#1}}}

\def\mbb#1{\mathbb{#1}}

\theoremstyle{plain}  
\newtheorem*{remark}{\textbf{Remark}}

\def\<{\left\langle} 
\def\>{\right\rangle}

\def\P{\mbb{P}} 

\DeclareSymbolFont{rsfs}{U}{rsfs}{m}{n}
\DeclareSymbolFontAlphabet{\mathscrsfs}{rsfs}
\ifdefined\nonewproofenvironments\else
\ifdefined\ispres\else
\newtheorem{theorem}{Theorem}
\newtheorem{lemma}[theorem]{Lemma}
\newtheorem{corollary}[theorem]{Corollary}

\newtheorem{fact}[theorem]{Fact}

\renewenvironment{proof}{\noindent\textbf{Proof.}\hspace*{.3em}}{\qed\\}
\newenvironment{proof-sketch}{\noindent\textbf{Proof Sketch}
  \hspace*{1em}}{\qed\bigskip\\}
\newenvironment{proof-idea}{\noindent\textbf{Proof Idea}
  \hspace*{1em}}{\qed\bigskip\\}
\newenvironment{proof-of-lemma}[1][{}]{\noindent\textbf{Proof of Lemma {#1}}
  \hspace*{1em}}{\qed\\}
\newenvironment{proof-of-theorem}[1][{}]{\noindent\textbf{Proof of Theorem {#1}}
  \hspace*{1em}}{\qed\\}
\newenvironment{proof-of-corollary}[1][{}]{\noindent\textbf{Proof of Corollary {#1}}
  \hspace*{1em}}{\qed\\}
\newenvironment{proof-attempt}{\noindent\textbf{Proof Attempt}
  \hspace*{1em}}{\qed\bigskip\\}
\fi

\newtheorem{assumption}{Assumption}

\fi
\makeatletter
\@addtoreset{equation}{section}
\makeatother

\hypersetup{
  colorlinks,
  linkcolor={red!50!black},
  citecolor={blue!50!black},
  urlcolor={blue!80!black}
}

\usepackage{authblk}
\usepackage{etoc}
\usepackage{wrapfig}
\usepackage{bm}
\usepackage{algorithmic}
\usepackage{multicol}
\usepackage{booktabs,tabularx}
\usepackage[round]{natbib}
\allowdisplaybreaks

\newcommand\iidsim{\stackrel{\mathclap{i.i.d.}}{\sim}}

\makeatletter
\renewcommand{\paragraph}{%
  \@startsection{paragraph}{4}%
  {\z@}{1.5ex \@plus 1ex \@minus .2ex}{-1em}%
  {\normalfont\normalsize\bfseries}%
}
\makeatother

\title{Distinguishability and Recovery for a Doubly Sparse Spiked Wigner Model}
\author[1]{Ioana Dumitriu\thanks{Email: \texttt{idumitriu@ucsd.edu}}}
\author[1]{JD Flynn\thanks{Email: \texttt{jdflynn@ucsd.edu}}\orcidlink{0009-0007-5345-0135}}
\author[2]{Zhichao Wang\thanks{Email: \texttt{zhichao.wang@berkeley.edu}}}

\affil[1]{University of California, San Diego}
\affil[2]{University of California, Berkeley}
\date{}

\begin{document}

\maketitle

\begin{abstract}
We study distinguishability and recovery of a sparse rank-one signal in a spiked Wigner model with independently sparse noise through a variety of algorithms and parameter regimes. The signal and noise have separate sparsity parameters, while the signal-to-noise ratio (SNR) remains of constant order as the matrix size grows. 
For sub-Gaussian noise and spikes whose nonzero coordinates are bounded away from zero, a leave-one-out quadratic thresholding procedure  \citep[a generalization of][]{ding2022subexponentialtimealgorithmssparsepca} identifies the support. Sign synchronization yields exact recovery for a Rademacher spike, and spectral refinement yields strong recovery. The quadratic thresholding search parameter gives an explicit tradeoff between signal sparsity, noise sparsity, and computational cost. We also study extensions to sub-Gaussian spike priors through diagonal and quadratic thresholding. For Rademacher spikes with continuous nonzero noise, we develop an identical search algorithm that exploits unblurred signal entries, the existence of which permits distinguishability; connectivity of the graph they reveal provides a sufficient condition for exact recovery. These guarantees do not require moment assumptions on the continuous noise distribution. Together with our previous PCA results \citep{dumitriu2026bbpphasetransitiondoubly}, our analysis covers almost the entire space of parameters and regimes for distinguishing and recovering in our doubly sparse spiked Wigner model. 
\end{abstract}

\paragraph{Keywords: sparse PCA, sparse Wigner, deformed random matrix models, signal recovery}

\tableofcontents

\section{Introduction}

\subsection{Random Matrix Model}
Many applications across signal processing, graph theory, data science, and other statistical fields consist of observing a corrupted data set and extracting an informative signal. While this procedure was not a novel idea at the time, \citet{johnstone2001} coined ``the spiked model'' within the field of statistics for what is essentially a vector perturbed by noise. 
Since then, the Spiked Wishart Model and Spiked Wigner Model have emerged as the two predominantly studied classes of models, depending on whether the analysis is for the covariance matrix or not. Although there is literature \citep[e.g.,][]{bresler2025computationalequivalencespikedcovariance} that suggests a computational equivalence of the two models, in this paper, we will consider the \textit{Spiked Wigner Model}:
\begin{align}
    X=\frac{\theta}{n}vv^T+\frac{1}{\sqrt{n}}W~,\label{spiked_wigner_dense}
\end{align}
where $X$ is the observed matrix, $\theta$ is the SNR, $v$ is the spike vector, and $W$ is the Wigner matrix modeling the random noise. 
Once a model is decided upon, one considers two fundamental tasks: we can \textit{distinguish} if there exists a test statistic that can determine whether $X$ is drawn from the planted ($\theta > 0$) or null ($\theta=0$) distribution, and we can \textit{recover} if we can produce some sort of estimate for the spike vector $v$.
Within each problem, we consider two thresholds. First, one can search for an \textit{Information-Theoretic (IT) threshold} that determines the relationship between model parameters that allows for solvability. The IT threshold generally relates to distinguishability (i.e. above a specified value, there exists a statistic that can distinguish, and below that no such statistic exists). When we are above the IT threshold, one can then consider the threshold for a polynomial time algorithm for distinguishability and recovery. These thresholds can be distinct depending on the characteristics of the model. 

\subsection{The Spectral Approach}
\paragraph{BBP.}Using principal component analysis (PCA), one can distinguish and recover the spike by considering the top eigenvalue and corresponding (top) eigenvector of $X$, respectively. In their landmark paper, Baik, Ben-Arous, and P\'ech\'e established the BBP phenomenon/threshold for the aforementioned Spiked Wishart Model: when the signal strength is great enough, the top eigenvalue exits the spectral bulk and the corresponding top eigenvector achieves nontrivial correlation with the signal vector, while, underneath this signal threshold, the top eigenvalue sticks to the edge of the spectrum and is thus indistinguishable from the top eigenvalue of the null model \citep{baik2004phasetransitionlargesteigenvalue}. This result opened two avenues of expansion for distinguishing and recovering with the spectral method based on the following observation: in its original form, \cite{baik2004phasetransitionlargesteigenvalue} established the BBP phenomenon for a spiked model with a strict noise assumption (Gaussian) and a general spike assumption (unit norm). 

For the first avenue, researchers worked toward relaxing the assumptions on the noise and extending it to the Spiked Wigner Model while keeping a general assumption on the spike(s). In the case of Spiked Wishart, \cite{baiksilverstein} relaxed the Gaussian condition to a noise distribution with control over the fourth moment. \cite{peche2006largest} established the BBP for the Spiked Wigner model with GUE noise, \cite{F_ral_2007} followed by generalizing the result to models with noise distributions obeying a specified moment condition, and \cite{Capitaine_2009} further relaxed the assumptions to noise satisfying a specified Poincar\'e inequality. Finally, \cite{pizzo2011finiterankdeformationswigner} rounded out the result by establishing the Spiked Wigner BBP for a noise distribution with fifth moment control for the off-diagonal and third moment control of the diagonal. \cite{benaychgeorges2010eigenvalues} took a wider perspective by providing results on leading spectral behavior for more general additive models (which in turn captures the Spiked Wigner Model) and required unitary invariance among the spike or the noise; in addition, \cite{benaychgeorges2010eigenvalues} showed eigenvector correlation alongside their eigenvalue separation result. While BBP-like results continued, further noise generalizations often adopted stricter assumptions on the spike entries, which brings us to our second direction of BBP expansion.

While such a general assumption on the signal provides great applicability, the BBP’s utility dwindles when dealing with applications where one knows information about the signal vector prior to distinguishing or recovering. If, for example, we had knowledge that the spike was sparse, then the BBP certainly holds, but it does so without exploiting the knowledge of its sparsity. Thus, one can consider if there is a different method based on knowledge of the spike sparsity that succeeds below the BBP threshold or perhaps outperforms PCA when $\theta > 1$. Furthermore, if one adopts stricter spike entry assumptions (for instance, a bounded or even atomic prior), then recovery could upgrade to an estimate matching the true entries of the spike vector rather than just a vector with positive correlation with the signal vector (such as the one produced by the top eigenvector). 

\paragraph{Doubly Sparse BBP.}The consideration of these two avenues, along with motivation from a series of real-world applications and natural occurrences of sparsity  \citep[see][for examples]{candes2011robust}, motivated us to adapt the traditional \eqref{spiked_wigner_dense} model and introduce our doubly sparse spiked model (AKA doubly sparse deformed model):
\begin{align}
    X=\frac{\theta}{np}(b\odot v)(b\odot v)^T+\frac{1}{\sqrt{nq}}A\odot W~,\label{model_sparsity_introduction}
\end{align}
where $b$ is a Bernoulli ($p$) vector and $A$ is a symmetric Bernoulli ($q$) matrix. $A,W,b,$ and $v$ are all mutually independent. We delay further specifics of model assumptions until Section \ref{model_section}.   

The doubly sparse setting naturally appears in applications, since sparsity can arise simultaneously from the underlying signal and from the mechanism generating the observed interactions. As we have noted, in high-dimensional data analysis, sparse PCA is used when only a small subset of measured variables contributes to a latent factor, while the data themselves may exhibit sparse dependence structures; applications include genomic data, image analysis, and large-scale network data \citep{guerra2021guide,park2024critical}. Sparsity need not be a property solely of the underlying signal; the observation or noise mechanism may itself be sparse. For example, robust PCA models an observed data matrix as the sum of a low-rank component and a sparse component, motivated by applications such as video surveillance and face recognition in which only a small fraction of entries may be corrupted or contaminated \citep{candes2011robust}. In imaging and inverse problems, sparsity of the underlying signal is frequently combined with incomplete or sparse measurements, as in compressed sensing for magnetic resonance imaging, radar imaging, and other undersampled imaging modalities \citep{lustig2007sparse,harding2013radar}. Similarly, in graph-based applications, the observation or interaction matrix is intrinsically sparse because each vertex interacts with only a small fraction of the population, while a latent community, anomalous subgraph, or other structured signal may itself be supported on a small subset of vertices. This perspective is particularly natural for random graph models, where sparse Erd\H{o}s--R\'enyi adjacency matrices provide a canonical example of sparse random matrices and have well-studied spectral behavior \citep{he2021fluctuations,ganguly2024spectral}. Thus, allowing both the noise matrix and the planted spike to be sparse is not merely a technical extension of the classical spiked Wigner model, but provides a framework for studying problems in which sparsity is present at multiple levels of the data-generating process.

In our recent work \citep{dumitriu2026bbpphasetransitiondoubly}, we proved a BBP phase transition for a sparse, sub-Gaussian spike vector with an unbounded number of entries ($np \to \infty$) planted in sparse, sub-Gaussian noise at supercritical sparsity ($nq\gg \log(n)$). For a more thorough review of other spectral methods as well as a detailed exploration of the results of our work, we direct the reader to the introduction and main results of this companion paper. In the next section, we briefly highlight a series of results that served as motivation for our understanding of where the IT and polynomial-time algorithm thresholds lie for our model. 

\subsection{Impossibility Threshold and Approximate Message Passing (AMP)}\label{amp_contiguity} 
\paragraph{Impossibility.}In search of the information-theoretic threshold, one must consider regions where the problem cannot be solved. One can show the impossibility of distinguishing by proving contiguity, which was first introduced by \cite{lecam1960locally}. This technique relies on a second moment method followed by a standard contiguity argument to show there exists no test statistic that can reliably distinguish between the planted and null distributions. When considering a GUE noise setting and dense ($p=1$) spike, one cannot do better than PCA:  \cite{montanari2014limitationspectralmethodsgaussian} established a limitation for spectral methods in the Gaussian hidden-clique setting, showing that spectral information cannot detect the planted structure below the corresponding spectral threshold, and \cite{perry2016optimalitysuboptimalitypcaspiked} established related results for spiked matrix models with spikes drawn from the spherical, i.i.d. Rademacher, and any other prior with a ``sufficiently sub-Gaussian bound''. In the setting where one departs from Gaussian noise and instead considers a non-Gaussian Wigner matrix with a sufficiently regular continuous noise distribution, \cite{Lesieur2015} and \cite{perry2016optimalitysuboptimalitypcaspiked} showed that PCA can be suboptimal for detection. In particular, one can exploit the non-Gaussianity of the noise by pretransforming the matrix entries according to a function related to the score of the noise distribution (a method motivated by the linearization of AMP, which we discuss more about below), producing a BBP-like transition below the ordinary spectral threshold. These results require regularity assumptions on the noise distribution, including the existence of a suitable density and finite Fisher information.

Returning to the Gaussian noise setting but instead considering a sparse Rademacher spike with fixed mask support $np$ where $p=\Theta(1)$, \cite{Lesieur2015} used statistical physics-based heuristics to provide evidence of distinguishability underneath the spectral threshold in the sparse setting. These heuristics inspired \cite{perry2016optimalitysuboptimalitypcaspiked} and \cite{banks2017informationtheoreticboundsphasetransitions} to rigorously show that the optimality of PCA greatly depends on the value of $p.$ Indeed, when $p\le p^*\approx 0.6,$ the IT threshold falls beneath the spectral threshold as both contiguity results and a successful test statistic based on the likelihood ratio emerge for $\theta<1$. However, this left a gap between their positive and negative results, and, even in the informal consideration of $p\to 0$ (informal in the sense that both of their works do not consider $p$ to be a function of $n,$ and thus this limit is taken outside the context of their work), a $\sqrt{2}$ gap between possibility and contiguity stood. Turning statistical physics heuristics to rigorous mathematical results, \cite{El_Alaoui_2020} closed this gap to $p^*\approx .092$ for sparse Rademacher and even strengthened the result by providing a generalized threshold for any bounded spike prior. 

We emphasize the above papers only consider dense ($q=1$) noise. With the introduction of noise sparsity, their impossibility results no longer apply. In fact, the methods considered in their works completely fall apart in the presence of sparse noise as they rely on a continuous noise distribution to express second moment term of interest in a clear fashion. While this present paper does present impossibility results (see Section \ref{spike_support_regimes_section}), we do not use contiguity as our discrete noise setting creates a difficult second moment term that is hard to handle. Furthermore, we feel the very presence of sparsity limits areas of possible contiguity, which is supported by just how many regions of sparsity allow for distinguishability, as seen in Table \ref{distinguish_table}. 

\paragraph{AMP.} Throughout this introduction, we have often mentioned statistical physics methods, which naturally introduces a different spectral procedure called AMP. See Section \ref{section:AMP_lit_review} for a detailed literature review and discussion of possible future applications to our model.

\subsection{Beyond Spectral Methods and AMP: Thresholding Algorithms}
Following our BBP result, we naturally considered the entire landscape of solvability for our model, which brings us to the central question of this work: 
\vspace{-1.mm}
\begin{center}
    \textit{
    What regimes and models of spike and noise sparsity allow for distinguishing and recovering the spike?}
\end{center}
Under the assumptions $\theta>1,$ $np\to\infty,$ and $nq\gg {\log(n)}$, PCA achieves both distinguishability and weak recovery (correlation). This leaves the question: what occurs outside these assumptions? For example, for a noise matrix where the spectral bulk is not controlled ($q \lesssim\frac{\log(n)}{n}),$ can we still distinguish and recover? Or, given a spike with bounded support ($np\not\to\infty$), are there values of $p$ where the signal can still be recovered? Or, even if we are within the sparsity regions of the BBP, can we use other methods under the spectral threshold ($\theta <1$) for signal recovery? 

Here, we shed some light on this central question, along with all the accompanying inquiries, through the introduction of two different thresholding algorithms. 
\subsubsection{Traditional Thresholding}\label{thresholding_review}
The first method we explore in this work involves thresholding on the sizes of the entries in the observed matrix. Due to the scaling of the spike and noise terms in \eqref{model_sparsity_introduction}, which ensure the two terms are spectrally comparable, the asymptotic behavior and size of the entries of the spike and noise can vary quite drastically depending on their different sparsities, and we can exploit this difference when attempting to detect or extract the spike vector. The notion of thresholding spiked models is not a novel concept. In its inception, \cite{Johnstone_diag_threshold} first proposed a diagonal thresholding procedure to recover in the Spiked Wishart model where the spike vector $v$ is sparse in a suitable basis (specifically, the wavelet domain, meaning $v$ belongs to a weak $l_p$-ball where $p\in(0,1)$). \cite{Amini_2009} further restricted assumptions on the spike by assuming each entry had equal magnitude of $\frac{1}{\sqrt{np}}$ (i.e. support, but not sign, recovery) and established exact recovery under the sparsity condition $p\lesssim \frac{1}{\sqrt{n\log n}}.$ In pursuit of the polynomial time algorithm threshold for recovery, a series of papers worked toward outperforming diagonal thresholding \citep{moghaddam2005spectral, zou2006sparse, daspremont2007direct, daspremont2008optimal, witten2009penalized}, and, through a covariance thresholding method, \cite{Krauthgamer_2015, deshpande2016sparsepcacovariancethresholding} extended the sparsity to $p\lesssim\frac{1}{\sqrt{n}}$ for signals satisfying a lower bound assumption (and further conditions on basis vectors overlap, see A2 in their work). As previously hinted, the polynomial time algorithm threshold does not necessarily align with the IT threshold; indeed, exhaustively searching over all possible support sets of size $np$ results in exact support recovery when $p\lesssim \frac{1}{\log n}$ \citep{Amini_2009}. So, what is possible in this in-between region $\frac{1}{\sqrt{n}}\lesssim p \lesssim \frac{1}{\log n}$? Some theorize a ``possible yet hard'' regime, which is supported by reducing the problem \citep{berthet2013computational, gao2017sparse, brennan2021} to the widely-conjectured possible-yet-hard \citep{ barak2019nearly, deshpande2015improved} \textit{planted clique problem} \cite[introduced in][where the authors sought to identify a clique hidden in an Erd\H{o}s--R\'enyi graph $\mathcal G(n,1/2)$.]{jerrum1992large, alon1998finding}.

This very question of balancing sparsity with runtime brought \cite{ding2022subexponentialtimealgorithmssparsepca} to formulate a thresholding procedure for both the Spiked Wishart and the Spiked Wigner models (with spike satisfying Assumption \ref{bounded_spike_assumption} and GOE noise) that seesaws between the two parameters: a sparser signal allows for a shorter runtime (i.e. polynomial time), and a denser signal leads to longer, sub-exponential runtimes. 

\paragraph{Our Contribution.} Here, we extend the work of \cite{ding2022subexponentialtimealgorithmssparsepca} by creating an algorithm that thresholds on a quadratic form of our doubly sparse deformed model. Due to the fact that their proof procedures fall apart for our noise set up (sub-Gaussian and sparse), we present a different method, based on the leave-one-out technique that is broader and extends to our model. Similar to their findings, we see a payoff between runtime and signal sparsity, but also with an added parameter of noise sparsity.

\paragraph{Related Works on Doubly Sparse Models.} Following the efforts of  \cite{ding2022subexponentialtimealgorithmssparsepca}, \cite{dOrsiNNS22} made progress in two directions (in relation to this current work). First, unlike the sparsity-runtime tradeoff for exact recovery of \cite{ding2022subexponentialtimealgorithmssparsepca}, \cite{dOrsiNNS22} relaxed the task to weak recovery to produce a fixed quasi-polynomial time algorithm. Furthermore, their work sought to relax the constraints upon the noise assumptions by considering heavier tailed noise. In this heavier tailed setting, \cite{novikov23a} followed up by providing improved algorithmic runtime, which interpolates between polynomial-time and quasi-polynomial time algorithms. 

The heavier tailed setting of \cite{dOrsiNNS22} and \cite{novikov23a} and our doubly sparse setting do overlap in some regimes, but are generally complementary. Neither their works nor ours can capture the totality of the respective model settings and results. To summarize, they show that the Huber-loss minimizer achieves weak/strong recovery with a signal following an upper bounded prior corrupted by heavy tailed, symmetric noise. In certain overlapping regimes, our methods are better (faster or leading to better recovery), while theirs are similarly better in others. 

\subsubsection{Identical Search: Unique to Sparse, Atomic Noise}
Our second thresholding algorithm sits in a unique space within the literature of thresholding procedures. At its core, it is an algorithm based on thresholding. Yet, the power of the algorithm lies within the exploitation of the noise itself (or lack thereof) rather than the values of the noise entries. While previous thresholding procedures exploit the variation of the size of the entries between the spike and the noise, our algorithm instead balances the atomic prior of the spike with the dwindling presence of the noise. More specifically, we explore the regime of sparsity where $q$ is low enough so that most of the spike entries are revealed. Although the algorithm works for any atomic spike distribution, we adopt the commonly considered sparse Rademacher distribution, which, as we previously mentioned, has been widely considered within the spiked model literature \citep{perry2016optimalitysuboptimalitypcaspiked, banks2017informationtheoreticboundsphasetransitions, Amini_2009, El_Alaoui_2020}.

\subsection{Notation}
We use $\|\cdot\|_2$ and $\| \cdot\|$ interchangeably as the operator norm for the matrix. $\|\cdot\|_0$ represents the size of the support of a vector, most commonly used when referencing the number of nonzero components in a vector (i.e $\|v\|_0=|\text{supp}(v)|$).   $\|X\|_{\psi_l}$ represents the sub-Gaussian ($l=2$) or sub-exponential ($l=1$) norm $$\|X\|_{\psi_l}=\inf\{K>0 : \mathbb E \exp(|X|^l/K^l)\le 2 \}~.$$
In descending order (i.e. $\lambda_1\ge\lambda_2\ge\cdots\ge\lambda_n$), we use $(\lambda_i(X), u_i(X))$ to represent the top $i$-th normalized leading eigenpair of a matrix $X$.

We adopt standard asymptotic notations $O(\cdot), \Theta(\cdot),$ and $ \Omega(\cdot)$. And, with these asymptotic notations, we use $A\lesssim B$ if $A\le CB$ for some absolute constant $C$ and $A \ll B$ if $A/B=o(1)$. The $\odot$ represents the Hadamard product for vectors and matrices: $(Y\odot Z)_{ij}=Y_{ij}Z_{ij}$, for $i,j\in [n]$. $X^{(i)}$ represents the matrix $X$ with the $i$-th row and column replaced by zeros.

\section{Model Setup}\label{model_section}

To formally define the model \eqref{model}, we first define its components. 

\subsection{The Spike}

We begin by considering the underlying sparse signal. We denote the SNR $\theta>0$ paired with the spike vector $v=\tilde{v} \odot b\in\mathbb{R}^n$ where $b ~\iidsim~\mathcal B$ and $\tilde{v}~\iidsim~ \tilde{\mathcal V}_i$ for $i\in[3]$, are mutually independent. Here, $\tilde{\mathcal V}_i$ represents the dense prior, and $\mathcal B$ represents our method of sparsity sampling: 
\begin{assumption}[Small-Ball Condition]\label{spike_small_ball_assumption}
We have $\tilde v\overset{\mathrm{i.i.d.}}{\sim}\tilde{\mathcal V}_1$ if there exists a constant $C_s>0$ such that $\mathbb P(|\tilde v_j|\leq\epsilon)\leq C_s\epsilon$ for all $\epsilon>0$ and $j\in[n]$.
\end{assumption}

We use this small ball assumption in Sections \ref{quadratic_thresholding_section} and \ref{section:DT} to ensure we have no atom at $0$. We can improve our sparsity windows by focusing on the subset of this assumption where the entries are almost surely lower bounded.

\begin{assumption}[Lower Bounded Vectors]\label{bounded_spike_assumption}
    We have $\tilde v ~\iidsim ~ \tilde{\mathcal V}_2$ if $|\tilde v_i|\ge\frac{1}{B}$ almost surely, for some $B>0$. 
\end{assumption}

For comparability purposes with the PCA results of \cite{dumitriu2026bbpphasetransitiondoubly}, we introduce the following sub-Gaussian spike assumption. 

\begin{assumption}[Sub-Gaussian Vectors]\label{assumption:spike_subG}
    We have $\tilde v\overset{\mathrm{i.i.d.}}{\sim}\tilde{\mathcal V}_3$ if the following hold. For all $j\in[n]$, the entries $\tilde v_j$ satisfy $\|\tilde v_j\|_{\psi_2}\leq K_1$ and $\mathbb E|\tilde v_j|^t=O(t)^{t/2}$ as $t\to\infty$. Furthermore, $\mathbb E[\tilde v_j]=0$ and $\mathbb E[\tilde v_j^2]=1$ for all $j\in[n]$. 
\end{assumption}

Many natural sub-Gaussian distributions follow Assumptions \ref{spike_small_ball_assumption} and \ref{assumption:spike_subG}, such as Gaussian, uniform, and Rademacher. In the case of Rademacher, we establish a final separate condition, as we can often exploit its atomic nature to improve bounds and solvability.

\begin{assumption}[Rademacher Vectors]\label{rademacher_assumption}
    We have $\tilde v ~\iidsim ~ \tilde{\mathcal V}_4$ if $\mathbb P(\tilde v_j = 1)=\mathbb P(\tilde v_j =-1)=1/2$ for all $j\in [n]$. Therefore, $\mathbb E[\tilde v_j]=0$ and $\mathbb E [\tilde v_j^2]=1$ for all $j\in[n]$.
\end{assumption}

We choose \{$\pm1$\} entries in Assumption \ref{rademacher_assumption} for the sake of simplicity and comparability with other works, as this is a standard spike prior in deformed model literature. In reality, we conjecture any centered, atomic prior can be implemented for the spike prior $\tilde{\mathcal V_3}$. Furthermore, notice how Assumption \ref{rademacher_assumption} is a subset of all three of the preceding assumptions.

\begin{assumption}[Independent Sparsity]\label{ind_bern_spike_assumption}
    When $b~\iidsim~ \mathcal B,$ we consider the natural Bernoulli sparsity with parameter $p$, i.e., $b_j ~\iidsim ~ \text{Ber}(p)$ for all $j\in [n]$.
\end{assumption}

Under the above assumption, $\|v\|_0$ can vary depending on $p,$ as it is $np$ only in expectation. All of our results will implement a conditioning technique where we fix $\|v\|_0$ prior to performing our analysis. We will do so with the techniques of Section \ref{spike_support_regimes_section}.

\subsection{The Noise}

Next, we consider the noise. Depending on the algorithm, we have two different noise regimes to consider. We begin with a dense, real Wigner matrix $W \in \mathbb R^{n\times n}$ satisfying either of the following two assumptions:
\begin{assumption}[Continuous Noise Distribution]\label{continuous_noise_dist_assumption}
    For all $\{i,j\}\in[n]^2,$ W is symmetric with independent entries $W_{ij}\sim \mathcal W_{ij}$ where $\mathcal W_{ij}$ is a centered, continuous  distribution.
\end{assumption}

We use the above to differentiate the noise from the atomic spike in Section \ref{section:identical_search}.
 
\begin{assumption}[Sub-Gaussian Wigner Matrix]\label{wigner_subg_assumption}
    For all $\{i,j \}\in [n]^2,$ $W$ is symmetric with independent, centered $(\mathbb E[W_{ii}]=\mathbb E [W_{ij}]=0)$, finite variance ($\mathbb E[W_{ii}^2]=2$ and $\mathbb E[W_{ij}^2]=1$), sub-Gaussian upper triangular entries. 
     Moreover, $\sup_{i\le j}\|W_{ij}\|_{\psi_2}\le K_2$ where $K_2\ge1$.
\end{assumption} 

When comparing results, we also control the log-Laplace transform of the entries from Assumption 3 in \cite{dumitriu2026bbpphasetransitiondoubly}; however, the results of this current work only require the above. See Lemma \ref{noise_quadratic_form_control} for its implementation.

Next, we introduce the noise mask matrix $A \in \mathbb R^{n\times n}$ satisfying the following assumption: 
\begin{assumption}[Bernoulli Noise Mask]\label{noise_mask_assumption}
    $A$ is a symmetric matrix with independent upper-triangular entries $A_{ij}=A_{ji} \sim$ Ber$(q)$ for all $\{i,j\}\in[n]^2$.
\end{assumption}
\noindent Therefore, the unscaled noise matrix is the Hadamard product matrix $W\odot A$ where $W$ and $A$ are mutually independent. 

\subsection{The Model}

Therefore, we have the following real $n\times n$ matrix representing our \textit{doubly sparse spiked Wigner model}\footnote{When $p=0$ or $q=0,$ consider the signal/noise term equal to $0,$ respectively, to avoid division by $0$.}: 
\begin{align}\label{model}
     X=\frac{\theta}{np}vv^T+\frac{1}{\sqrt{nq}}W\odot A~.
\end{align}
We sometimes use $Y:=\frac{1}{\sqrt{nq}}W\odot A.$ The above normalizations are natural as they enforce $\|\frac{1}{np}vv^T\|_2\xrightarrow[]{\P}\Theta(1)$ and $\|\frac{1}{\sqrt{nq}}W\odot A\|_2\xrightarrow[]{\P}\Theta(1)$ in most cases, thus ensuring spectral comparability between the spike and the noise, while allowing $\theta=\Theta(1)$. If the spike and noise have different spectral magnitudes (for example, such cases are present in the literature with the normalization factors subsumed into the vector $v$ and $\theta$ is sometimes a function of $n$), we could account for the difference by scaling $\theta$ or $v$ accordingly. 

\subsection{The Tasks}\label{distinguishrecoverdefn}

Formally, we define our first task of distinguishing\footnote{The nomenclature of this task in similar literature varies based on discipline. In many papers, it can also be referred to as Strong Detection. However, because detection is often associated with assuming a spike is present and determining whether or not it is detectable (i.e. just the type I error), we make the distinction by using the term distinguishability to refer to the entire statistical test (control over both types of error).} between the planted $\mathbb P$ and null $\mathbb Q$ models as follows: 
\paragraph{Distinguishability. }There exists a test statistic $\mathcal T$ such that $\mathbb P(\mathcal T(X)<\epsilon)+\mathbb Q(\mathcal T(X)\ge \epsilon)\to 0$ as $n\to \infty,$ i.e. the type I and type II errors converge to $0.$

\vspace{.25cm}

Unlike Distinguishability where we are unaware of which distribution was used for the sample, we also consider the task of recovering the planted spike knowing we are in the planted distribution, i.e. $X\sim\mathbb P.$ Depending on knowledge of the spike prior, we can sometimes achieve recovery with higher accuracy than the correlation inner product \citep[like a general BBP, i.e.][]{dumitriu2026bbpphasetransitiondoubly}. Specifically, for vectors $v_1$ and $v_2$, we introduce $$\mathcal R(v_1,v_2):= \mathbbm 1_{\text{supp}(v_1)=\text{supp}(v_2)}\max_{\omega\in\{\pm 1\} }\sum_{i=1}^{n}\mathbbm 1_{\{(v_1)_i=\omega\cdot (v_2)_i\}}~,$$
which accounts for entrywise comparison of our estimate with the spike. Thus, we introduce the varying levels of recovery: 
\paragraph{Recovery. } Given an estimator $\mathcal M,$ we can achieve 
\begin{itemize}
    \item \textbf{Weak Recovery} if $\frac{|\langle\mathcal M(X), v\rangle|}{\|\mathcal M(X)\|_2\cdot\|v\|_2}>0$;
    \item \textbf{Strong Recovery} if $\frac{|\langle\mathcal M(X), v\rangle|}{\|\mathcal M(X)\|_2\cdot\|v\|_2}\to1$;
    \item \textbf{Exact Recovery} if $\mathcal R(\mathcal M(X),v)=1$
\end{itemize}
with probability tending to $1$ as $n\to \infty$. 

\section{Main Results}
For comparability purposes, we consider Model \eqref{model} with noise drawn from a sparse, continuous sub-Gaussian distribution (Assumptions \ref{continuous_noise_dist_assumption}, \ref{wigner_subg_assumption}, and \ref{noise_mask_assumption}); however, the first three methods (see below: PCA, QT, DT) hold for Assumptions \ref{wigner_subg_assumption} and \ref{noise_mask_assumption} and the last method (see below: IS) holds for Assumptions \ref{continuous_noise_dist_assumption} and \ref{noise_mask_assumption}. Furthermore, the results of \cite{dumitriu2026bbpphasetransitiondoubly} hold for a sub-Gaussian spike (Assumption \ref{assumption:spike_subG}) without the other assumptions, but we group the assumptions together for comparability and applicability to this work's algorithms.

\subsection{Distinguishability}

\begin{fact}[Theorem \ref{distinguish_them_np_equals_constant} Restatement]
    When $np\not\to\infty,$ distinguishing is impossible. 
\end{fact}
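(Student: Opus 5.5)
The plan is to show that when $np\not\to\infty$, the planted model $\mathbb P$ puts probability bounded away from zero on the event that the spike is identically zero. On that event the observation has exactly the null law $\mathbb Q$, so no test can drive both errors to zero. The argument uses only the Bernoulli mask of Assumption \ref{ind_bern_spike_assumption}; the laws of $\tilde v$, $W$ and $A$ play no role.

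First I make the hypothesis concrete. Since $np\not\to\infty$, we have $\liminf_n np<\infty$. So there are a constant $M<\infty$ and a subsequence $n_k$ along which $n_kp_{n_k}\le M$. It suffices to show that along this subsequence the total error $\mathbb P(\mathcal T(X)<\epsilon)+\mathbb Q(\mathcal T(X)\ge\epsilon)$ stays above a positive constant, for every test statistic $\mathcal T$ and threshold $\epsilon$. Along the subsequence $p\le M/n\to 0$, so eventually $p\le 1/2$. Using $\log(1-p)\ge -2p$ for $p\in[0,1/2]$, I get
\[
\mathbb P\bigl(b=0\bigr)=(1-p)^n\ge e^{-2np}\ge e^{-2M}=:c>0 .
\]
The degenerate case $p=0$ is covered by the paper's convention that the signal term is then $0$, so that $\mathbb P=\mathbb Q$ outright.

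Next I condition on $\{b=0\}$. On this event $v=\tilde v\odot b=0$, so $X=\frac{1}{\sqrt{nq}}W\odot A$. Since $b$ is independent of $(W,A)$, the conditional law of $X$ given $b=0$ is exactly $\mathbb Q$. Hence, for any $\mathcal T$ and $\epsilon$,
\[
\mathbb P(\mathcal T(X)<\epsilon)+\mathbb Q(\mathcal T(X)\ge\epsilon)\ \ge\ c\,\mathbb Q(\mathcal T(X)<\epsilon)+\mathbb Q(\mathcal T(X)\ge\epsilon)\ \ge\ c .
\]
This does not tend to $0$, so distinguishability in the sense of Section \ref{distinguishrecoverdefn} fails.

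I expect no step to be a real obstacle; the only care needed is the interpretation of ``$np\not\to\infty$'' via a subsequence. If one wants a stronger statement when $np\to\lambda\in(0,\infty)$, the same computation gives the limiting lower bound $e^{-\lambda}$ on the total error. One can also observe more generally that $\|v\|_0$ is then tight, with a Poisson-type limit. That tightness is what feeds the conditioning technique of Section \ref{spike_support_regimes_section}, but the atom at $\|v\|_0=0$ alone already suffices for impossibility.
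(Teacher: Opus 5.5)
Your proposal is correct and takes essentially the same route as the paper: both bound the total error from below by the planted mass $(1-p)^n$ of the empty-support event, on which $X$ has exactly the null law, and show this mass stays bounded away from zero along a subsequence with bounded $np$. The only differences are minor: you use $\log(1-p)\ge-2p$ to get the uniform bound $(1-p)^n\ge e^{-2M}$ in place of the paper's limit $e^{-\lambda}$, and you work directly from $\liminf np<\infty$ (the Fact's hypothesis), whereas the paper's proof is phrased for $np=O(1)$.
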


\begin{remark}
    When $np=\Theta(1),$ there is always a nonzero probability that $\|v\|_0=0,$ meaning distinguishing is impossible. However, if one instead samples from a fixed mask distribution, i.e., where the spike support size is fixed and positive, then all of the following distinguishability results below will hold (with a few minor exceptions, which we detail in their corresponding sections). Because of these exceptions and for ease of readability, we exclude this case and defer the specificities to the results in the main body of work. 
\end{remark}


\paragraph{Distinguishing Algorithms (Informal)} Given observation $X$, one distinguishes using
\begin{itemize}
    \item \textit{Principal Component Analysis} (PCA) by thresholding  $\lambda_1(X)$ by some value $t$ corresponding to $\lambda_1(Y).$
    \item \textit{Quadratic Thresholding} (Algorithm \ref{quad_form_distinguish_alg}) by choosing an appropriate $l\in\mathbb Z^{\ge0}$ inside an interval determined by sparsity conditions $p$ and $q$, searching for the Rademacher vector $w^*$ of support size $l$ that maximizes the quadratic form $w^TXw,$ and thresholding ${w^*}^{T} Xw^{*}$. 
    \item \textit{Diagonal Thresholding} (Algorithm \ref{alg:DT_dist}) by thresholding $\max_i|X_{ii}|$.
    \item \textit{Identical Search} (Algorithms \ref{identical_search_distinguish_aware} and \ref{identical_search_distinguish_blind}) by searching\footnote{Here we informally describe the spike full knowledge case where we are aware $\lambda/p.$ For details on the spike agnostic case, see Algorithm \ref{identical_search_distinguish_blind}.} the matrix for an entry equal to $\pm\frac{\theta}{np
    }$. If there exists such an entry, conclude the distribution is planted, and, if not, conclude the distribution is null. 
\end{itemize}

We move from the strictest sparsity assumption to more general priors. The following summarizes Theorems \ref{thm:quad_subG_distinguish}, \ref{quad_form_distinguish_thm}, \ref{distinguish_gaussian_DT}, and \ref{iden_search_distinguish} with Corollary 5 of \cite{dumitriu2026bbpphasetransitiondoubly}. Refer to these results for tail bounds and details on positive constants $C_1, C_2,$ and $C_5.$

\begin{theorem}[Distinguishing when $np\to\infty$]\label{main_distinguish_thm}
    Assume $np\to\infty$. 
    \begin{enumerate}
        \item If $v$ is drawn from the \textbf{Sparse Rademacher} prior (Assumptions \ref{rademacher_assumption} and \ref{ind_bern_spike_assumption}), then
        \begin{enumerate}
            \item PCA distinguishes when $\theta>1$ and $q\gg\frac{\log(n)}{n}~;$
        \item Quadratic Thresholding distinguishes when $q\ge\frac{C_1K_2^2}{\theta^2}p\log(n)$;
        \item Diagonal  Thresholding distinguishes when $q\gtrsim \frac{4}{\theta^2 K_2^2}np^2 \log(nq)$.
        \item Identical Search distinguishes when $1-q\gg\frac{1}{(np)^2}$;
        \end{enumerate}

        \item If $v$ is drawn from the \textbf{Sparse, Lower Bounded} prior (Assumptions \ref{bounded_spike_assumption} and \ref{ind_bern_spike_assumption}), then
        \begin{enumerate}
            \item PCA, with additional Assumption \ref{assumption:spike_subG}, distinguishes when $\theta>1$ and $q\gg\frac{\log(n)}{n}~;$
            \item Quadratic Thresholding distinguishes when $q\ge\frac{C_1B^4K_2^2}{\theta^2}p\log(n)$.
            \item Diagonal  Thresholding distinguishes when $q\gtrsim \frac{4B^4}{\theta^2 K_2^2}np^2 \log(nq)$.
        \end{enumerate}

        \item If $v$ is drawn from the \textbf{Sparse, Small-Ball} prior (Assumptions \ref{spike_small_ball_assumption} and \ref{ind_bern_spike_assumption}), then
        \begin{enumerate}
            \item PCA, with additional Assumption \ref{assumption:spike_subG}, distinguishes when $\theta>1$ and $q\gg\frac{\log(n)}{n}~;$
            \item Quadratic Thresholding distinguishes when $q\ge \frac{C_5K_2^2C_s^4}{\theta^2}p\log(n)$~.
            \item Diagonal  Thresholding distinguishes when $q\gg np^2\log(nq)$.
        \end{enumerate}
    \end{enumerate}
\end{theorem}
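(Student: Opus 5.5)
This theorem is a summary statement, so the proof is an assembly: each item is obtained by invoking the corresponding detailed result and checking that its hypotheses are met in the stated prior and sparsity regime. The first step is common to all items. Using the conditioning technique of Section \ref{spike_support_regimes_section}, we note that when $np\to\infty$, Chernoff bounds give $\|v\|_0=np(1+o(1))$ with probability $1-o(1)$. We therefore condition on a fixed support size $s\in[np/2,2np]$ and average the type I and type II errors over $\|v\|_0$ at the end. The null distribution $\mathbb Q$ does not depend on $v$, so only the planted error needs this conditioning.

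\textbf{PCA items (1a, 2a, 3a).} We invoke Corollary 5 of \cite{dumitriu2026bbpphasetransitiondoubly}. Under Assumption \ref{assumption:spike_subG}, $np\to\infty$ and $nq\gg\log n$, it gives $\lambda_1(Y)\to 2$ and $\lambda_1(X)\to\theta+1/\theta>2$ for $\theta>1$. We then threshold at $t=2+\tfrac12(\theta+1/\theta-2)$. For 1a, Rademacher entries satisfy Assumption \ref{assumption:spike_subG} directly, since they are centered, have unit variance and bounded moments.

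\textbf{Quadratic thresholding items (1b, 2b, 3b).} These are Theorems \ref{quad_form_distinguish_thm} and \ref{thm:quad_subG_distinguish}. The Rademacher case 1b is the lower-bounded case 2b with $B=1$, which explains why the $B^4$ factor disappears. The mechanism is as follows.
\begin{itemize}
\item Under $\mathbb P$, taking $w=\mathrm{sgn}(v)$ restricted to $l$ support coordinates gives $w^TXw\ge \theta l^2/(npB^2)$ minus noise. This requires $l\le\|v\|_0$.
\item Under $\mathbb Q$, a union bound over the $\binom{n}{l}2^l$ test vectors, combined with the sub-Gaussian quadratic form control of Lemma \ref{noise_quadratic_form_control}, gives $\max_w w^TYw\lesssim K_2 l\sqrt{l\log n/(nq)}$.
\end{itemize}
Separation holds when $\theta l^2/(np)\gg K_2 l^{3/2}\sqrt{\log n/(nq)}$, that is, $q\gtrsim K_2^2 np\log n/(\theta^2 l)$. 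Choosing $l\asymp np$, which is allowed since the support has size about $np$, recovers $q\ge C_1 K_2^2 p\log n/\theta^2$. For the small-ball prior 3b, we use Assumption \ref{spike_small_ball_assumption}: a constant fraction of the support has $|\tilde v_j|\ge 1/(2C_s)$ with high probability. This reduces to the lower-bounded case with $B\asymp C_s$ on that subset, which produces the constant $C_5C_s^4$.

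\textbf{Diagonal thresholding items (1c, 2c, 3c).} These are Theorem \ref{distinguish_gaussian_DT}. Under $\mathbb Q$, $\max_i|X_{ii}|\lesssim K_2\sqrt{\log(nq)/(nq)}$, because only about $nq$ diagonal entries are nonzero. Under $\mathbb P$, some support index has $X_{ii}\ge\theta/(npB^2)-$(noise). Comparing the two scales gives exactly $q\gtrsim B^4np^2\log(nq)/(\theta^2K_2^2)$. In the small-ball case we cannot pick a uniform $B$, so we only obtain the $\gg$ condition.

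\textbf{Identical search (1d).} This is Theorem \ref{iden_search_distinguish}. Because the noise is continuous (Assumption \ref{continuous_noise_dist_assumption}), no null entry equals $\pm\theta/(np)$ almost surely, so the type I error is zero. In the planted model, the expected number of unmasked support pairs is about $(np)^2(1-q)\to\infty$. A second-moment argument then shows at least one such pair exists with probability $1-o(1)$.

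\textbf{Main obstacle.} I expect the hardest step to be the union bound in quadratic thresholding. A sparse sub-Gaussian quadratic form $w^T(W\odot A)w$ has mixed sub-Gaussian and sub-exponential behavior, and we need Bernstein-type control that is uniform over all $l$-sparse sign vectors without losing extra powers of $\log n$. This is where the leave-one-out decoupling is needed. We may assume Lemma \ref{noise_quadratic_form_control} supplies this control, but we must verify that the chosen $l$ stays in the sub-Gaussian range $l\log n\lesssim nq$. That range is implied by the hypothesis on $q$.
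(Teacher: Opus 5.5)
Your proposal is correct and follows the paper's route. Theorem~\ref{main_distinguish_thm} is an assembly of Corollary~5 of \cite{dumitriu2026bbpphasetransitiondoubly} with Theorems~\ref{quad_form_distinguish_thm}, \ref{thm:quad_subG_distinguish}, \ref{distinguish_gaussian_DT} and~\ref{iden_search_distinguish}, after conditioning on $\|v\|_0\approx np$ via Lemma~\ref{fixedmasklemma}, and it uses exactly the mechanisms you sketch: a sign-matched $l$-sparse test vector under $\mathbb P$, a union bound over $\mathcal I_{n,l}$ under $\mathbb Q$, a diagonal scale comparison, and an unmasked support entry for Identical Search.

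Two small corrections. First, your QT separation condition should read $q\gtrsim K_2^2np^2\log(n)/(\theta^2 l)$, not $K_2^2np\log(n)/(\theta^2 l)$; only the former gives $p\log n$ at $l\asymp np$. Likewise, matching $K_2\sqrt{\log(nq)/(nq)}$ against $\theta/(2B^2np)$ puts $K_2^2$ in the numerator of the DT condition, not the denominator, so that comparison does not give your displayed condition "exactly". Second, your ``main obstacle'' is not one: for each fixed $w$, the form $w^TYw$ is linear in the independent entries of $Y$, hence purely sub-Gaussian by Lemma~\ref{noise_quadratic_form_control}. So no leave-one-out, no Bernstein-type control, and no restriction $l\log n\lesssim nq$ is needed for distinguishing; leave-one-out enters only in the recovery proofs.
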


\begin{remark}
    The lower bounded spike prior is not a subset of the sub-Gaussian prior as no conditions are placed on the upper bounds of the entries or their means or variances. Thus, in order to achieve PCA in a lower bounded prior, one must also assume sufficient sub-Gaussian properties. 
\end{remark}

\subsection{Recovery}\label{section:main_results_recovery}

\paragraph{Recovery Algorithms (Informal)} Given $X$, one recovers an estimate of $v$ using
\begin{itemize}
    \item \textit{Principal Component Analysis} (PCA) by computing the eigenvector $v_1(X)$ corresponding to $\lambda_1(X).$
    \item \textit{Quadratic Thresholding} (Algorithms \ref{quad_recovery_alg_recov}, \ref{sign_recovery_alg}, \& \ref{alg:QT_weak_strong}) by choosing $l\in\mathbb Z^{\ge0}$ from an interval dictated by $p$ and $q$, searching for the Rademacher vector $w^*$ of support size $l$ that maximizes $w^TX^{(i)}w$, thresholding $(Xw^*)_i$, and repeating this process for all $i\in[n]$. 
    \item \textit{Diagonal Thresholding} (Algorithms \ref{alg:DT_recov} \& \ref{alg:QT_weak_strong}) by thresholding the diagonal entries of $X$.
    \item \textit{Identical Search} (Algorithm \ref{identical_search_recovery_alg}) by considering the graph with adjacency matrix $X$, searching for an edge equal to $\pm\frac{\theta}{np}$, and walking from vertex to vertex by crossing only edges with value $\pm\frac{\theta}{np}$.
\end{itemize}

Again, we present from strict to more general spike prior. The following summarizes Theorems \ref{thm:QT_subG_support_recovery}, \ref{quad_form_recovery_thm}, \ref{iden_search_recov_thm_np_to_inf}, and \ref{iden_search_recovery_thm_const_np} with Corollaries \ref{corollary:QT_exact}, \ref{quad_form_weak_strong_corollary}, \ref{cor:DT_recovery} and 6 \cite{dumitriu2026bbpphasetransitiondoubly}. See each result for tail bounds and details on positive constants $C_1, C_2, C_5,$ and $C_8.$ 

\begin{theorem}[Recovery when $np\to\infty$]\label{main_recov_thm_unbounded} Assume $np\to\infty$. Choose $\tau_4\to\infty$ and $\tau_6=\omega(np)$ with $\tau_6=o\left(\frac{\sqrt{np}}{\log(np)} \right)$. 
    
    \begin{enumerate}
        \item If $v$ is drawn from the \textbf{Sparse Rademacher} prior (Assumptions \ref{rademacher_assumption} and \ref{ind_bern_spike_assumption}), then
        \begin{enumerate}
            \item PCA achieves Weak Recovery when $\theta>1$ and $q\gg\frac{\log(n)}{n}~;$
            \item Quadratic Thresholding achieves Exact Recovery when $q\ge\frac{2C_8K_2^2}{\theta^2}p\log(n)$;
            \item Diagonal  Thresholding achieves Strong Recovery when $q\gtrsim \frac{4}{\theta^2 K_2^2}np^2 \log(nq)$.
            \item Identical Search achieves Exact Recovery when $q\le1-\frac{\log(np)+\tau_6}{np}$.
        \end{enumerate}

        \item If $v$ is drawn from the \textbf{Sparse, Lower Bounded} prior (Assumptions \ref{bounded_spike_assumption} and \ref{ind_bern_spike_assumption}), then
        \begin{enumerate}
            \item PCA, with additional Assumption \ref{spike_small_ball_assumption}, achieves Weak Recovery when $\theta>1$ and $q\gg\frac{\log(n)}{n}~;$
            \item Quadratic Thresholding achieves \text{Strong Recovery} when $q\ge\frac{2C_2B^4K_2^2}{\theta^2}p\log(n)$.
            \item Diagonal Thresholding achieves \text{Strong Recovery} when $q\gtrsim \frac{4B^4}{\theta^2 K_2^2}np^2 \log(nq)$.
        \end{enumerate}

        \item If $v$ is drawn from the \textbf{Sparse, Small-Ball} prior (Assumptions \ref{spike_small_ball_assumption} and \ref{ind_bern_spike_assumption}), then
        \begin{enumerate}
            \item PCA, with additional Assumption \ref{spike_small_ball_assumption}, achieves Weak Recovery when $\theta>1$ and $q\gg\frac{\log(n)}{n}~;$
            \item Quadratic Thresholding achieves Strong Recovery when $q\ge \frac{2C_7K_2^2C_s^2}{\theta^2}\cdot\tau_4^2(np)^2p\log(n)$~.
            \item Diagonal  Thresholding achieves Strong Recovery when $q\gg (np)^5p\log(nq)$.
        \end{enumerate}
    \end{enumerate}
\end{theorem}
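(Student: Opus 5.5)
This theorem collects results proved later in the paper, so the plan is to prove each item separately and assemble them. The PCA items come directly from Corollary 6 of \cite{dumitriu2026bbpphasetransitiondoubly}. There, a sub-Gaussian spike with $np\to\infty$, $\theta>1$, and $nq\gg\log n$ gives top-eigenvector correlation bounded away from zero. The Rademacher prior satisfies Assumption \ref{assumption:spike_subG} automatically. For the other two priors the sub-Gaussian assumption is added to the hypotheses, as the remark after Theorem \ref{main_distinguish_thm} explains. Before anything else we will condition on $\|v\|_0$. The techniques of Section \ref{spike_support_regimes_section} give $\|v\|_0=np(1+o(1))$ with probability $1-o(1)$ whenever $np\to\infty$. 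All later bounds can then be stated for a fixed support $S$ with $|S|\asymp np$.

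For Quadratic Thresholding, we fix $i$ and work with the leave-one-out matrix $X^{(i)}$. We search over Rademacher vectors $w$ of support size $l$ to maximize $w^TX^{(i)}w$. First, a union bound over the $\binom{n}{l}2^l$ candidates bounds the noise contribution to this maximum: we will use the sub-Gaussian quadratic-form bound (Lemma \ref{noise_quadratic_form_control}) on $w^TYw$, which is of order $l\sqrt{l\log n/(nq)}$. Second, we show that choosing $w$ supported on $S\setminus\{i\}$ with $w_j=\mathrm{sign}(v_j)$ gives a signal term $\frac{\theta}{np}\bigl(\sum_j|v_j|\bigr)^2\gtrsim \theta l^2/(npB^2)$. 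Comparing the two scales explains the condition $q\gtrsim K_2^2B^4p\log n/\theta^2$, with $l$ inside the allowed window. From these two facts we conclude that the maximizer $w^*$ has a constant fraction of its mass on $S$ with signs aligned to $v$. Because $X^{(i)}$ has its $i$-th row and column removed, $w^*$ is independent of row $i$ of $Y$. Therefore
\[(Xw^*)_i=\tfrac{\theta}{np}v_i\langle v,w^*\rangle+(Yw^*)_i,\]
and the second term is, conditionally on $w^*$, a sub-Gaussian sum with variance about $l/(nq)$. We will threshold this at half the signal level and take a union bound over $i$. This recovers the support exactly, and also the sign up to a global flip: the signs of $(Xw^*)_i$ equal $\mathrm{sign}(v_i)\cdot\omega$, where $\omega$ is a global sign that must then be synchronized across different $i$. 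Under the Rademacher prior, sign synchronization gives exact recovery. This is where the constant $2C_8$ enters, since a union bound over pairs is needed for synchronization. Under the lower bounded prior, we will run a spectral refinement on $X$ restricted to the recovered support. On that block the noise has operator norm $o(1)$ relative to the signal, so Davis--Kahan gives strong recovery. For the small-ball prior we instead keep only the coordinates with $|v_j|\ge 1/(\tau_4 np)$. By Assumption \ref{spike_small_ball_assumption}, with high probability there are only $O(1/\tau_4)$-fraction exceptions, which vanish, and this costs the extra factor $\tau_4^2(np)^2$.

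Diagonal Thresholding is simpler. We compare $\frac{\theta}{np}v_i^2$ against $\max_i|Y_{ii}|\lesssim K_2\sqrt{\log(nq)/(nq)}$, using sparsity of the diagonal mask so that only about $nq$ of the diagonal entries are nonzero. This yields the stated conditions, and for the small-ball prior it gives strong recovery after discarding small entries. Identical Search works on the graph $G$ on $S$ whose edges are the unmasked pairs, i.e. $A_{jk}=0$. These entries equal exactly $\pm\theta/(np)$, and by the continuity in Assumption \ref{continuous_noise_dist_assumption} no noisy entry takes these values almost surely. Sign products propagate along edges of $G$, so exact recovery follows once $G$ is connected. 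The graph $G$ is Erd\H{o}s--R\'enyi $\mathcal G(|S|,1-q)$, and the connectivity threshold $1-q\ge(\log(np)+\tau_6)/(np)$ gives this. The support itself is read off as the set of vertices incident to identical entries, and isolated vertices are excluded by the same bound.

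The main obstacle is the quadratic-thresholding step: showing that the maximizer $w^*$ is aligned with $v$ \emph{uniformly}, while balancing the union bound over $\binom{n}{l}2^l$ candidates against sparse sub-Gaussian tails. When $nq$ is small, entries of $Y$ can be large and the tail changes from sub-Gaussian to Poisson-like. We would first try a Bernstein bound that conditions on the mask $A$ and controls the number of nonzero entries inside each $l\times l$ block.
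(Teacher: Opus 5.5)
Your decomposition matches the paper's: leave-one-out QT with a uniform noise bound and row independence, pairwise sign synchronization, spectral refinement on the recovered support, diagonal thresholding, Erd\H{o}s--R\'enyi connectivity, and the companion PCA corollary. Using Davis--Kahan instead of the paper's Rayleigh-quotient bound is a harmless substitution.

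\textbf{Item 3(b) as written.} This is the one place where your sketch does not deliver the stated bound. If the floor $1/(\tau_4 np)$ simply plays the role of $1/B$, it enters twice: in the alignment bound on $|\langle w^*,v\rangle|$, and through the factor $v_i$ in $(Xw^*)_i$. The lower-bounded condition then becomes $q\gtrsim \tau_4^4(np)^4p\log n$, not the stated $\tau_4^2(np)^2p\log n$.

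The missing ingredient is the Chernoff event (the paper's $\mathcal E_{\mathrm{SB}}$) that at least $\|v\|_0/4$ support entries satisfy $|v_j|>1/(2C_s)$. Put the comparison vector on the $l$ largest entries of $v^{(i)}$. This gives $\langle \bar v^{(i)},v\rangle\ge l/(16C_s)$ uniformly in $i$. Alignment then costs only constants, and the floor enters once, linearly through $v_i$, in the detection step.

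The same event gives $\|v\|_2^2\ge\|v\|_0/(16C_s^2)$. You need this to bound the signal eigenvalue $\theta\|v\|_2^2/(np)$ from below in the refinement for 3(b) and 3(c); the floor alone only gives $\|v\|_2^2\gtrsim 1/(\tau_4^2np)$, which is too weak. Also, no coordinates need to be discarded. Since $\mathbb P(\min_{j\in\operatorname{supp}(v)}|v_j|\le 1/(\tau_4\|v\|_0))\le C_s/\tau_4$, with high probability every support entry clears the floor. This is not an $O(1/\tau_4)$ fraction of exceptions.

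\textbf{The obstacle in your last paragraph is not one.} Since $|A_{ij}W_{ij}|\le|W_{ij}|$, we have $\|A_{ij}W_{ij}\|_{\psi_2}\le K_2$ for every $q$. So Lemma \ref{noise_quadratic_form_control} holds without conditioning on $A$ or any Bernstein argument. The bound is loose by a factor $q$ in the variance, but the stated conditions already pay for that.

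\textbf{Smaller points.}
\begin{itemize}
\item Restrict the search to $w_i=0$ (the set $\mathcal I^{(i)}_{n,l}$). Otherwise $v_iw^*_i$ enters $\langle w^*,v\rangle$, and with unbounded lower-bounded entries it can cancel the alignment.
\item The factor $2$ in $2C_8$ comes from needing $\|v\|_0-1\ge np/2$. The union bound over pairs $(w,u)\in\mathcal I_{n,l}^2$ only enlarges $C_8$ itself.
\item For Identical Search, $\|v\|_0=np(1+o(1))$ is not precise enough. You need $\bigl|\|v\|_0-np\bigr|\log(np)/np=o(1)$, which Lemma \ref{fixedmasklemma} provides, to turn $1-q\ge(\log(np)+\tau_6)/(np)$ into $(1-q)\|v\|_0\ge\log\|v\|_0+\tau_6/2$.
\end{itemize}
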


\begin{theorem}[Recovery when $np=\Theta(1)$]\label{main_recov_thm_bounded} Assume $np=\Theta(1)$ and $\|v\|_0\in\mathbb N$. 

    \begin{enumerate}
        \item If $v$ is drawn from the \textbf{Sparse Rademacher} prior (Assumptions \ref{rademacher_assumption} and \ref{ind_bern_spike_assumption}), then
        \begin{enumerate}
            \item Quadratic Thresholding achieves Exact Recovery when $q\ge\frac{C_8K_2^2}{\theta^2}\cdot\frac{np^2\log(n)}{\|v\|_0}$;
            \item Identical Search achieves Exact Recovery when $q\ll1$;
            \item Diagonal  Thresholding achieves Strong Recovery when $q\gtrsim \frac{4B^4}{\theta^2 K_2^2}np^2 \log(nq)$.
        \end{enumerate}

        \item If $v$ is drawn from the \textbf{Sparse, Lower Bounded} prior (Assumptions \ref{bounded_spike_assumption} and \ref{ind_bern_spike_assumption}), then
        \begin{enumerate}
            \item Quadratic Thresholding achieves Strong Recovery when $q\ge\frac{C_2B^4K_2^2}{\theta^2}\cdot\frac{np^2\log(n)}{\|v\|_0}$.
            \item Diagonal  Thresholding achieves Strong Recovery when $q\gtrsim \frac{4B^4}{\theta^2 K_2^2}np^2 \log(nq)$.
        \end{enumerate}

        \item If $v$ is drawn from the \textbf{Sparse, Small-Ball} prior (Assumptions \ref{spike_small_ball_assumption} and \ref{ind_bern_spike_assumption}), then
        \begin{enumerate}
            \item Diagonal Thresholding achieves Strong Recovery when $q\gg p\log(nq)$~.
        \end{enumerate}
    \end{enumerate}
\end{theorem}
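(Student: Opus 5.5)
Theorem~\ref{main_recov_thm_bounded} is a summary statement, so I will prove it item by item by conditioning on the spike support and invoking the per-algorithm results of the later sections. The first step is common to all items. Condition on $b$, equivalently on $S=\mathrm{supp}(v)$ with $|S|=\|v\|_0=k\in\mathbb N$ fixed; since $np=\Theta(1)$, we have $k=O(1)$ with probability bounded away from zero. By the independence of $A,W,b,\tilde v$, the law of $X$ conditioned on $S$ is a fixed-mask model in which the spike $\frac{\theta}{np}v_Sv_S^T$ has $O(1)$ nonzero rows. Each spike entry then has magnitude $\frac{\theta}{np}|\tilde v_i\tilde v_j|=\Theta(1)$, while a nonzero noise entry has size $\frac{1}{\sqrt{nq}}|W_{ij}|$. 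All recovery events below are shown with conditional probability tending to one, uniformly over $S$ with $|S|=k$.

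For the quadratic-thresholding items (1a) and (2a), I invoke Theorem~\ref{quad_form_recovery_thm} and Corollary~\ref{corollary:QT_exact} with search size $l=k$. The leave-one-out argument controls $(X^{(i)}w^*)$ and hence the statistic $(Xw^*)_i$. For $i\in S$ the signal part is $\frac{\theta}{np}\tilde v_i\langle v,w^*\rangle\gtrsim \frac{\theta k}{npB^2}$. For every $i$ the noise part is $\frac{1}{\sqrt{nq}}\sum_{j}A_{ij}W_{ij}w^*_j$, which is sub-Gaussian with variance proxy $K_2^2 k/(nq)$ because $w^*$ is independent of row $i$.

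A union bound over the $n$ rows requires
\[
\frac{\theta^2k^2}{(np)^2B^4}\gtrsim \frac{K_2^2k\log n}{nq},
\qquad\text{i.e.}\qquad q\gtrsim \frac{B^4K_2^2}{\theta^2}\cdot\frac{np^2\log n}{k}.
\]
This is exactly the stated threshold with $\|v\|_0=k$. The main obstacle is showing that $w^*$ overlaps $v$ substantially, i.e.\ that $|\langle v,w^*\rangle|\gtrsim k$. To get this I compare $w^{*T}X^{(i)}w^*$ with the quadratic form at the truth, $\mathrm{sign}(v)^TX^{(i)}\mathrm{sign}(v)\approx \theta k^2/(np)$, and bound the supremum of the noise quadratic form over the $\binom{n}{k}2^k$ candidates by Lemma~\ref{noise_quadratic_form_control}. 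In the Rademacher case, support recovery is followed by sign synchronization (Algorithm~\ref{sign_recovery_alg}) to upgrade to exact recovery. In the lower-bounded case, support recovery together with the refinement of Corollary~\ref{quad_form_weak_strong_corollary} yields strong recovery.

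For (1b), I invoke Theorem~\ref{iden_search_recovery_thm_const_np}. Since $q\ll1$, each of the $O(k^2)=O(1)$ entries of the spike block is unmasked with probability $1-q\to1$, so with high probability the whole block is noise-free. The graph of entries equal to $\pm\frac{\theta}{np}$ is then complete on $S$ and recovers both the support and the relative signs. By the continuity of the noise (Assumption~\ref{continuous_noise_dist_assumption}), no noise entry hits $\pm\frac{\theta}{np}$ almost surely, so there are no false edges. For the diagonal-thresholding items (1c), (2b) and (3a), I apply Corollary~\ref{cor:DT_recovery}. The spike diagonal $\frac{\theta}{np}\tilde v_i^2$ is $\Theta(1)$, bounded below by $\frac{\theta}{npB^2}$ or, under the small-ball condition, by $\epsilon$ with probability $1-C_s\epsilon$. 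The noise diagonal maximum is $\lesssim K_2\sqrt{\log(nq)/(nq)}$, which is negligible under the stated conditions on $q$. So the thresholded diagonal recovers $S$ up to $o(k)$ errors, and the top eigenvector of $X_{SS}$ gives strong recovery.
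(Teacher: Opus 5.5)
Your plan is the same as the paper's. Theorem \ref{main_recov_thm_bounded} is a summary, and the paper obtains it by conditioning on $\|v\|_0=k$ and citing the following results:
\begin{itemize}
\item Theorem \ref{quad_form_recovery_thm} with Corollaries \ref{corollary:QT_exact} and \ref{quad_form_weak_strong_corollary};
\item Theorem \ref{iden_search_recovery_thm_const_np};
\item Theorem \ref{recovery_gaussian_DT} (with the remark after it) together with Corollary \ref{cor:DT_recovery}.
\end{itemize}

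The step that does not go through as written is your choice $l=k$ in items (1a) and (2a). The window $\mathcal L_R$ of Theorem \ref{quad_form_recovery_thm} ends at $\max\{\|v\|_0-1,1\}$ for a reason. For $i\in S$, the leave-one-out search works with $X^{(i)}$ and is restricted to $w_i=0$. So $\operatorname{sign}(v)\notin\mathcal I^{(i)}_{n,k}$, and any feasible $w$ meets $S$ in at most $k-1$ coordinates. The comparison value is therefore $\frac{\theta}{np}(\sum_{j\in S\setminus\{i\}}|v_j|)^2\ge\frac{\theta(k-1)^2}{B^2np}$, not $\approx\theta k^2/(np)$.

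For $k\ge 2$ this only changes constants. The cleanest fix is to take $l=k-1$ as the paper does; this moves the threshold $np^2\log(n)/\|v\|_0$ by at most a factor $2$, which is absorbed into $C_2,C_8$.

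For $k=1$, however, your leave-one-out analysis gives nothing. At the unique $i\in S$, the matrix $X^{(i)}=Y^{(i)}$ carries no signal, so $\langle v^{*(i)},v\rangle=v^{*(i)}_iv_i=0$. Then $(Xv^{*(i)})_i=e_i^TYv^{*(i)}$ is pure noise, and the support index is thresholded away. The statement allows $\|v\|_0=1$, which has probability bounded away from zero when $np=\Theta(1)$. So you need the separate Case~2 argument from the proof of Theorem \ref{quad_form_recovery_thm}; this is also why Algorithm \ref{quad_recovery_alg_recov} branches at $l=1$. That argument bounds $\max_i|Y_{ii}|<\frac{\theta}{2B^2np}$ using Lemma \ref{noise_quadratic_form_control} and a union bound. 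This holds w.h.p.\ as soon as $q\ge C\frac{B^4K_2^2}{\theta^2}np^2\log(n)$, which is exactly the stated threshold with $\|v\|_0=1$. One then thresholds the diagonal, and sign recovery is trivial.

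Two smaller remarks. In (1b) you require the whole $S\times S$ block to be unmasked, which has probability $(1-q)^{k(k+1)/2}\ge 1-\binom{k+1}{2}q$. The paper only needs the unmasked graph on $S$ to be connected, and bounds the failure probability by $\sum_{m=1}^{\lfloor k/2\rfloor}\binom{k}{m}q^{m(k-m)}\le 2^kq^{k-1}$. For fixed $k$ both give the condition $q\ll 1$, as the remark after Theorem \ref{iden_search_recovery_thm_const_np} notes. Your route is simpler; the connectivity viewpoint is what the paper exploits when $np\to\infty$ (Theorem \ref{iden_search_recov_thm_np_to_inf}).

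In (1c) and (2b), your claim that the diagonal noise and $\|Y_S\|$ are negligible needs $nq\to\infty$. The union bound $2nq\exp(-c\,nqt^2/K_2^2)$ with $t=\Theta(1)$, and the $\sqrt{\tau/(nq)}$ term in Corollary \ref{quad_form_weak_strong_corollary}, vanish only then. A literal reading of $q\gtrsim np^2\log(nq)$ with $np=\Theta(1)$ does not force this. The paper's argument carries the same implicit assumption. In (3a) it is automatic from $q\gg p\log(nq)$.
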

\begin{remark}
    When $np\not\to\infty,$ PCA and small-ball QT are no longer guaranteed. 
\end{remark}

\subsection{Interpretation of Main Results: Comparing Spike Assumption}

As seen above, each algorithm succeeds under different spike assumptions. And, due to these different assumptions, varying levels of recovery can be achieved: stricter assumptions on the spike allow for higher precision in recovery. By the nature of its construction, Identical Search (IS) distinguishes and exactly recovers only in the strictest Rademacher spike prior (Assumption \ref{rademacher_assumption}). When distinguishing and strongly recovering, Quadratic Thresholding (QT) and Diagonal Thresholding (DT) allow for a more general assumption: lower bounded (Assumption \ref{bounded_spike_assumption}) or small-ball (Assumption \ref{spike_small_ball_assumption}); however, exact recovery is only available in the Rademacher prior assumption for QT. Finally, distinguishing and recovering with PCA in  \cite{dumitriu2026bbpphasetransitiondoubly}, the spike can be drawn from any centered, sub-Gaussian prior (which naturally includes the Rademacher prior), but the spectral method is limited to weak recovery.
We summarize these results in the following table\footnote{Similar results (Theorem \ref{main_recov_thm_bounded}) hold for $np=\Theta(1),$ but without the "Distinguish" row and for different sparsity assumptions.}:

\begin{table}[H]
    \centering
    \renewcommand{\arraystretch}{1.1} 
\begin{tabular}{|c|c|c|c|}
\hline
$np\to\infty$ & Small-Ball & Rademacher & Lower Bounded  \\
 & Sub-Gaussian & & \\ \hline

Distinguishability & \multicolumn{2}{|c|}{ \cellcolor{backgroundOrange}PCA ($\theta>1, q\gg\log(n)/n$)}  &   \\ 
 & \multicolumn{3}{|c|}{ \cellcolor{cyan}Diagonal Thresholding}     \\ 
 & \cellcolor{cyan} ($q\gg np^2 \log(nq)$) & \multicolumn{2}{|c|}{\cellcolor{cyan} ($q\gtrsim np^2 \log(nq)$) }   \\
&  \multicolumn{3}{|c|}{\cellcolor{CarnationPink}Quadratic Thresholding ($q\gtrsim p\log(n)$)} \\
 &  & \cellcolor{Apricot}Identical Search ($1-q\gtrsim1/(np)^2$) & \\ \hline
Weak Recovery & \multicolumn{2}{|c|}{ \cellcolor{backgroundOrange}PCA ($\theta>1, q\gg\log(n)/n$)} &   \\ 
  & \multicolumn{3}{|c|}{ \cellcolor{cyan}Diagonal Thresholding}     \\ 
 & \cellcolor{cyan} ($q\gg (np)^5p \log(nq)$) & \multicolumn{2}{|c|}{\cellcolor{cyan} ($q\gtrsim np^2 \log(nq)$) }   \\
&  \multicolumn{3}{|c|}{\cellcolor{CarnationPink}Quadratic Thresholding}  \\
& ($q\gg (np)^2p\log(n)$)\cellcolor{CarnationPink} & ($q\gtrsim p\log(n)$) \cellcolor{CarnationPink} & \cellcolor{CarnationPink} ($q\gtrsim p\log(n)$)  \\
 &  & \cellcolor{Apricot}Identical Search ($1-q\gtrsim \log(np)/(np)$&  \\ \hline
Strong Recovery & \multicolumn{3}{|c|}{ \cellcolor{cyan}Diagonal Thresholding}     \\ 
 & \cellcolor{cyan} ($q\gg (np)^5p \log(nq)$)) & \multicolumn{2}{|c|}{\cellcolor{cyan} ($q\gtrsim np^2 \log(nq)$) }   \\
&  \multicolumn{3}{|c|}{\cellcolor{CarnationPink}Quadratic Thresholding}  \\
& ($q\gg (np)^2p\log(n)$)\cellcolor{CarnationPink} & ($q\gtrsim p\log(n)$) \cellcolor{CarnationPink} & \cellcolor{CarnationPink} ($q\gtrsim p\log(n)$)  \\
 &  & \cellcolor{Apricot}Identical Search ($1-q\gtrsim \log(np)/(np)$) &  \\ \hline
Exact Recovery & & \cellcolor{CarnationPink}Quadratic Thresholding ($q\gtrsim p\log(n)$)  &   \\ 
 &  & \cellcolor{Apricot}Identical Search ($1-q\gtrsim \log(np)/(np)$& \\ \hline
\end{tabular}
    \caption{Distinguishing and Recovering under varying spike assumptions for Model \ref{model} under $np\to\infty$. For spike assumptions, see Assumptions \ref{assumption:spike_subG} and \ref{spike_small_ball_assumption}, \ref{rademacher_assumption}, and \ref{bounded_spike_assumption}, respectively. All four use Assumption \ref{ind_bern_spike_assumption}. The noise follows Assumption \ref{wigner_subg_assumption} for PCA, DT, and QT, and Assumption \ref{continuous_noise_dist_assumption} for IS. All four use Assumption \ref{noise_mask_assumption}.}
    \label{tab:placeholder}
\end{table}

\subsection{Distinguishability and Recovery: The Full Picture}\label{section:table_descriptions} In Tables \ref{distinguish_table} and \ref{recovery_table}, we provide a landscape for distinguishing and recovering. We describe and interpret the contents of the table below.

\paragraph{Impossibility.} First, because $np\to0$ means $\|v\|_0=0$ with high probability, distinguishing is impossible and recovery is uninteresting\footnote{QT, DT, and IS will trivially return $0$.}. When $np=\Theta(1),$ the nonzero probability of $\|v\|_0=0$ rules out distinguishability, but recovery is still possible. We represent these impossibilities/trivialities by blacking/graying out the regions on the tables. For further details, we refer the reader to Section \ref{spike_support_regimes_section}. 

\paragraph{Identical Search.} The sparsity regimes where Identical Search succeeds differ between distinguishing and recovering. When $np\to\infty,$ one can distinguish at the higher level of sparsity $q\lesssim1-1/(np)^2,$ whereas exact recovery requires $q\lesssim 1-\log(np)/np.$ When $np=\Theta(1),$ distinguishing is no longer possible, and exact recovery requires the much stricter assumption of $q\ll1.$ For further details, see Section \ref{section:identical_search}.

Therefore, in Table \ref{distinguish_table}/Table \ref{recovery_table}, \textcolor{safePurple}{$\textbf{IS}$} represents the Identical Search algorithm successfully Distinguishing/Exactly Recovering, respectively, for a Rademacher spike and a continuous noise distribution. 

\paragraph{Quadratic Thresholding.} The sparsity guarantees of Quadratic Thresholding depend on whether the spike prior is lower bounded or small-ball, and both priors display a tradeoff between runtime and sparsity. 

When distinguishing, both priors are asymptotically equivalent (as they differ only by constants). Specifically, both priors run in polynomial time when $q\gtrsim np^2\log(n)$ and in general when $q\gtrsim p \log n$.

When recovering, the lower bounded prior asymptotically matches the above distinguishability results (as they differ only by constants). However, in the small-ball prior, the sparsity regimes shrink. First, $np\to\infty$ is required, and, given $np\to \infty$, QT succeeds in polynomial time when $q\gg (np)^3p \log(n)$ and in general when $q\gg(np)^2p\log(n)$.

The question of whether a polynomial time algorithm exists in these latter regimes has been explored, which we review in Section \ref{thresholding_review}. Note how, in our algorithm, greater noise sparsity $(q)$ makes it more challenging to achieve a polynomial-time algorithm. For further details, see Section \ref{quadratic_thresholding_section}.

For the sake of presentation, we consider only the lower bounded prior in the below tables. For the full, extended versions, we refer the reader to Section \ref{section:tables_extended}.

Therefore, in Table \ref{distinguish_table}/Table \ref{recovery_table}, \textcolor{safePink}{$\textbf{QT}_p$} represents the Quadratic Thresholding polynomial-time algorithm achieving Distinguishability/Exact Recovery for a Rademacher spike and Distinguishability/Strong Recovery, respectively, for a lower bounded spike planted in sub-Gaussian noise. \textcolor{safePink}{$\textbf{QT}$} represents the same, but the algorithm runs in non-polynomial time.

\paragraph{Diagonal Thresholding.} Similar to Quadratic Thresholding, the sparsity guarantees of Diagonal Thresholding depend on whether the spike prior is lower bounded or small-ball. However, the algorithm runs in linear time in all regions where it succeeds and for all priors. 

For the lower bounded prior, DT is asymptotically equivalent across both tables, requiring $q\gtrsim np^2 \log(nq).$

For a small-ball prior, there is an asymptotic difference between distinguishing and recovering. When distinguishing, DT succeeds in linear time when $q\gg np^2 \log(nq)$. For weak/strong recovery, $q\gg (np)^5p\log(nq)$ results in a linear runtime.

Similar to above, we include only the details for the lower bounded prior, and refer the reader to Section \ref{section:tables_extended} for the full, extensive results. Because of the $\log(nq)$ term, comparing other methods with this implicit relationship proves troublesome. While DT ($q\gtrsim np^2\log(nq)$) always improves on polynomial-time QT ($q\gtrsim np^2 \log(n)$), there is a more delicate balance when it comes to beating non-polynomial time QT $(q\gtrsim p \log(n)$). Thus, in both tables, we use ``/'' to notate these regions where the success of QT versus DT depends on the relationship between $p$ and $q$. We refer the reader to Section \ref{section:DT} for a more accurate analysis between QT and DT. In short, as $q$ becomes more sparse, DT can prove superior to QT.

Therefore, in Table \ref{distinguish_table}/Table \ref{recovery_table}, \textcolor{safeBlue}{$\textbf{DT}_l$} represents the Diagonal Thresholding linear-time algorithm achieving Distinguishability/Strong Recovery, respectively, for a small-ball spike planted in sub-Gaussian noise. 

\paragraph{PCA.} Assuming $\theta>1$, the sparsity regimes where PCA succeeds are equivalent across both tables, as both methods require $np\to\infty$. If $\theta<1,$ PCA vanishes from both tables. For further details, see \cite{dumitriu2026bbpphasetransitiondoubly}.

This does reveal an avenue of future exploration: given $1\ge p\gg\frac{1}{\log(n)}$ and $1-q\gg1/n^2,$ are there any methods that can successfully distinguish or recover when $\theta<1$? Other works have explored the dense spike ($np=\Theta(1)$) and dense noise ($q=1$) case, which we give a thorough review of in Section \ref{amp_contiguity}. These methods rely on a continuous noise distribution, and thus, as noise sparsity is discrete, a new procedure would be required when $np=\Theta(1)$ and $0\le1-q\lesssim 1/n^2$. In a forthcoming paper, we hope to shed light on this region by providing impossibility results. 

Therefore, in Table \ref{distinguish_table}/Table \ref{recovery_table}, the shaded light pink region represents PCA achieving Distinguishability/Weak Recovery, respectively, for a centered, unit variance sub-Gaussian spike (including Rademacher) planted in sub-Gaussian noise.

\paragraph{Table \ref{distinguish_table}/Table \ref{recovery_table} Discussion} The $y$-axis represents noise sparsity, and the $x$-axis represents spike sparsity. Because different methods often have overlapping conditions based on different lower and upper bounds, the axes are not perfectly segmented. One can think of the values as threshold cutoffs on a graph of $np$ by $q$. We include the asymptotic inequalities to allow the reader to connect a column and row and see which algorithms succeed under specific sparsity assumptions. 
\newpage

\begin{table}[H]
    \centering
    \renewcommand{\arraystretch}{1.5} 
\begin{tabular}{|c|c|c|c|c|c|c|} \hline
    \multicolumn{6}{|c|}{\cellcolor{lightgray}\textbf{Doubly Sparse Distinguishability}} \\ \hline
    
    \cellcolor{lightgray}$1-q\ll 1/n^2$ & \cellcolor{black} & \textcolor{safePink}{$\textbf{QT}_p$}, \textcolor{safeBlue}{$\textbf{DT}_l$}\cellcolor{backgroundOrange} & \textcolor{safePink}{$\textbf{QT}_p$}, \textcolor{safeBlue}{$\textbf{DT}_l$}\cellcolor{backgroundOrange} &  \textcolor{safePink}{$\textbf{QT}$} \cellcolor{backgroundOrange} &\cellcolor{backgroundOrange} \\ \hline

    \cellcolor{lightgray}$1-q\gtrsim1/n^2$ & \cellcolor{black} & \textcolor{safePink}{$\textbf{QT}_p$}, \textcolor{safeBlue}{$\textbf{DT}_l$} \cellcolor{backgroundOrange} & \textcolor{safePink}{$\textbf{QT}_p$}, \textcolor{safeBlue}{$\textbf{DT}_l$}\cellcolor{backgroundOrange} &  \textcolor{safePink}{$\textbf{QT}$} \cellcolor{backgroundOrange} &\textcolor{safePurple}{IS} \cellcolor{backgroundOrange}\\ \hline

    \cellcolor{lightgray}$1-q\gtrsim\frac{(\log n)^2}{n^2}$ &  \cellcolor{black} & \textcolor{safePink}{$\textbf{QT}_p$}, \textcolor{safeBlue}{$\textbf{DT}_l$} \cellcolor{backgroundOrange}& \textcolor{safePink}{$\textbf{QT}_p$}, \textcolor{safeBlue}{$\textbf{DT}_l$}\cellcolor{backgroundOrange} &  \textcolor{safePurple}{IS}, \textcolor{safePink}{$\textbf{QT}$} \cellcolor{backgroundOrange} &\textcolor{safePurple}{IS} \cellcolor{backgroundOrange} \\ \hline

    \cellcolor{lightgray}$1-q\gtrsim\frac{\log n}{n}$ & \cellcolor{black} &  \textcolor{safePink}{$\textbf{QT}_p$}, \textcolor{safeBlue}{$\textbf{DT}_l$} \cellcolor{backgroundOrange}& \textcolor{safePink}{$\textbf{QT}_p$}, \textcolor{safeBlue}{$\textbf{DT}_l$}\cellcolor{backgroundOrange} & \textcolor{safePurple}{IS}, \textcolor{safePink}{$\textbf{QT}$} \cellcolor{backgroundOrange} &\textcolor{safePurple}{IS} \cellcolor{backgroundOrange} \\ \hline


    \cellcolor{lightgray}$1-q\gtrsim\tau^{-2}$ & \cellcolor{black} & \textcolor{safePurple}{IS}, \textcolor{safePink}{$\textbf{QT}_p$}, \textcolor{safeBlue}{$\textbf{DT}_l$} \cellcolor{backgroundOrange} & \textcolor{safePurple}{IS}, \textcolor{safePink}{$\textbf{QT}_p$}, \textcolor{safeBlue}{$\textbf{DT}_l$}\cellcolor{backgroundOrange}& \textcolor{safePurple}{IS}, \textcolor{safePink}{$\textbf{QT}$} \cellcolor{backgroundOrange} & \textcolor{safePurple}{IS} \cellcolor{backgroundOrange}\\ \hline

    \cellcolor{lightgray}$q\gtrsim\Theta(1)$ & \cellcolor{black}  & \textcolor{safePurple}{IS}, \textcolor{safePink}{$\textbf{QT}_p$}, \textcolor{safeBlue}{$\textbf{DT}_l$}\cellcolor{backgroundOrange} & \textcolor{safePurple}{IS}, \textcolor{safePink}{$\textbf{QT}_p$}, \textcolor{safeBlue}{$\textbf{DT}_l$}\cellcolor{backgroundOrange} & \textcolor{safePurple}{IS}, \textcolor{safePink}{$\textbf{QT}$} \cellcolor{backgroundOrange} &\textcolor{safePurple}{IS} \cellcolor{backgroundOrange} \\ \hline


    \cellcolor{lightgray}$q\gtrsim \sqrt{\frac{\log n}{n}} $& \cellcolor{black} & \textcolor{safePurple}{IS}, \textcolor{safePink}{$\textbf{QT}_p$}, \textcolor{safeBlue}{$\textbf{DT}_l$}\cellcolor{backgroundOrange} & \textcolor{safePurple}{IS}, \textcolor{safePink}{$\textbf{QT}$} \cellcolor{backgroundOrange} &\textcolor{safePurple}{IS} \cellcolor{backgroundOrange} & \textcolor{safePurple}{IS} \cellcolor{backgroundOrange}\\ \hline


    \cellcolor{lightgray}$q\gtrsim \frac{\tau^2\log n}{n} $& \cellcolor{black}  &\textcolor{safePurple}{IS}, \textcolor{safePink}{$\textbf{QT}_p$}, \textcolor{safeBlue}{$\textbf{DT}_l$}  \cellcolor{backgroundOrange}& \textcolor{safePurple}{IS} \cellcolor{backgroundOrange}  & \textcolor{safePurple}{IS} \cellcolor{backgroundOrange} & \textcolor{safePurple}{IS} \cellcolor{backgroundOrange} \\ \hline

    \cellcolor{lightgray}$q\gtrsim \frac{\tau\log n}{n}/ \frac{\tau^2\log(nq)}{n}$& \cellcolor{black}  &\textcolor{safePurple}{IS}, \textcolor{safePink}{$\textbf{QT}$}/\textcolor{safeBlue}{$\textbf{DT}_l$}   \cellcolor{backgroundOrange}& \textcolor{safePurple}{IS} \cellcolor{backgroundOrange} & \textcolor{safePurple}{IS} \cellcolor{backgroundOrange} & \textcolor{safePurple}{IS} \cellcolor{backgroundOrange} \\ \hline
     
     \cellcolor{lightgray}$q\gtrsim \frac{\log n}{n} $& \cellcolor{black} & \textcolor{safePurple}{IS} & \textcolor{safePurple}{IS}  & \textcolor{safePurple}{IS} & \textcolor{safePurple}{IS}\\ \hline
     
     \cellcolor{lightgray}$0\le q\ll\frac{\log n}{n}$ & \cellcolor{black} & \textcolor{safePurple}{IS} & \textcolor{safePurple}{IS} & \textcolor{safePurple}{IS}  & \textcolor{safePurple}{IS}  \\ \hline
     
     \cellcolor{lightgray}& \cellcolor{lightgray}$np=O(1)$ & \cellcolor{lightgray}$np=\tau\to\infty$ & \cellcolor{lightgray}$np\lesssim\sqrt{\frac{n}{\log n}}$ & \cellcolor{lightgray}$np\lesssim \frac{n}{\log n}$ & \cellcolor{lightgray}$np=\Theta(n)$  \\ \hline
\end{tabular}
    \caption{Distinguishability results for Identical Search (\textcolor{safePurple}{IS}), polynomial-time and nonpolynomial-time Quadratic Thresholding (\textcolor{safePink}{$\textbf{QT}_p$}/\textcolor{magenta}{$\textbf{QT}$}), linear-time Diagonal Thresholding (\textcolor{safeBlue}{$\textbf{DT}_l$}), and PCA (Pink Shaded Region). Assumptions and axis details in Section \ref{section:table_descriptions}.}
    \label{distinguish_table}
\end{table}

\begin{table}[H]
    \centering
    \renewcommand{\arraystretch}{1.5} 
    \begin{tabular}{|c|c|c|c|c|c|c|c|} \hline
    \multicolumn{7}{|c|}{\cellcolor{lightgray}\textbf{Doubly Sparse Recovery}} \\ \hline
    
    \cellcolor{lightgray}$1-q\ll \frac{\log n}{n}$ & \cellcolor{lightgray} &  \textcolor{safePink}{$\textbf{QT}_{p}$}, \textcolor{safeBlue}{$\textbf{DT}_l$} & \textcolor{safePink}{$\textbf{QT}_p$}, \textcolor{safeBlue}{$\textbf{DT}_l$}\cellcolor{backgroundOrange} \!& \textcolor{safePink}{$\textbf{QT}_p$}, \textcolor{safeBlue}{$\textbf{DT}_l$} \cellcolor{backgroundOrange} & \textcolor{safePink}{$\textbf{QT}$} \cellcolor{backgroundOrange} &\cellcolor{backgroundOrange} \\ \hline

    \cellcolor{lightgray}$1-q\gtrsim\frac{\log n}{n}$ & \cellcolor{lightgray} &  \textcolor{safePink}{$\textbf{QT}_p$}, \textcolor{safeBlue}{$\textbf{DT}_l$} & \textcolor{safePink}{$\textbf{QT}_p$}, \textcolor{safeBlue}{$\textbf{DT}_l$}\cellcolor{backgroundOrange} \! & \textcolor{safePink}{$\textbf{QT}_p$}, \textcolor{safeBlue}{$\textbf{DT}_l$}  \cellcolor{backgroundOrange} & \textcolor{safePink}{$\textbf{QT}$} \cellcolor{backgroundOrange} &\textcolor{safePurple}{IS} \cellcolor{backgroundOrange}\\ \hline

    \cellcolor{lightgray}$1-q\gtrsim \frac{\log(\frac{n}{\log n})}{n/\log n}$ & \cellcolor{lightgray} &  \textcolor{safePink}{$\textbf{QT}_{p}$}, \textcolor{safeBlue}{$\textbf{DT}_l$} & \textcolor{safePink}{$\textbf{QT}_p$}, \textcolor{safeBlue}{$\textbf{DT}_l$}\cellcolor{backgroundOrange} \! & \textcolor{safePink}{$\textbf{QT}_p$}, \textcolor{safeBlue}{$\textbf{DT}_l$}  \cellcolor{backgroundOrange} & \textcolor{safePurple}{IS}, \textcolor{safePink}{$\textbf{QT}$} \cellcolor{backgroundOrange} &\textcolor{safePurple}{IS} \cellcolor{backgroundOrange} \\ \hline

    \cellcolor{lightgray}\!$1-q\gtrsim \frac{\log(\sqrt{\frac{n}{\log n}})}{\sqrt{n/\log n}}$ \! & \cellcolor{lightgray} &  \textcolor{safePink}{$\textbf{QT}_{p}$}, \textcolor{safeBlue}{$\textbf{DT}_l$} &  \textcolor{safePink}{$\textbf{QT}_p$}, \textcolor{safeBlue}{$\textbf{DT}_l$}\cellcolor{backgroundOrange} \! & \textcolor{safePurple}{IS}, \textcolor{safePink}{$\textbf{QT}_p$}, \textcolor{safeBlue}{$\textbf{DT}_l$}  \cellcolor{backgroundOrange}& \textcolor{safePurple}{IS}, \textcolor{safePink}{$\textbf{QT}$} \cellcolor{backgroundOrange} &\textcolor{safePurple}{IS}\cellcolor{backgroundOrange} \\ \hline

    \cellcolor{lightgray}$1-q\gtrsim\frac{\log \tau}{\tau}$ & \cellcolor{lightgray} &  \textcolor{safePink}{$\textbf{QT}_{p}$}, \textcolor{safeBlue}{$\textbf{DT}_l$} & \textcolor{safePurple}{IS}, \textcolor{safePink}{$\textbf{QT}_p$}, \textcolor{safeBlue}{$\textbf{DT}_l$}, \cellcolor{backgroundOrange} & \textcolor{safePurple}{IS}, \textcolor{safePink}{$\textbf{QT}_p$}, \textcolor{safeBlue}{$\textbf{DT}_l$}  \cellcolor{backgroundOrange} \! & \textcolor{safePurple}{IS}, \textcolor{safePink}{$\textbf{QT}$}\cellcolor{backgroundOrange} & \textcolor{safePurple}{IS}\cellcolor{backgroundOrange}\\ \hline

    \cellcolor{lightgray}$q\gtrsim\Theta(1)$ & \cellcolor{lightgray} &  \textcolor{safePink}{$\textbf{QT}_{p}$}, \textcolor{safeBlue}{$\textbf{DT}_l$} & \textcolor{safePurple}{IS}, \textcolor{safePink}{$\textbf{QT}_p$}, \textcolor{safeBlue}{$\textbf{DT}_l$}\cellcolor{backgroundOrange} \! & \textcolor{safePurple}{IS}, \textcolor{safePink}{$\textbf{QT}_p$}, \textcolor{safeBlue}{$\textbf{DT}_l$} \cellcolor{backgroundOrange} & \textcolor{safePurple}{IS}, \textcolor{safePink}{$\textbf{QT}$} \cellcolor{backgroundOrange} &\textcolor{safePurple}{IS} \cellcolor{backgroundOrange} \\ \hline

    \cellcolor{lightgray}$q\gtrsim \sqrt{\frac{\log n}{n}} $& \cellcolor{lightgray} & \textcolor{safePurple}{IS}, \textcolor{safePink}{$\textbf{QT}_{p}$}, \textcolor{safeBlue}{$\textbf{DT}_l$} & \textcolor{safePurple}{IS}, \textcolor{safePink}{$\textbf{QT}_p$}, \textcolor{safeBlue}{$\textbf{DT}_l$}\cellcolor{backgroundOrange} \! & \textcolor{safePurple}{IS} \cellcolor{backgroundOrange} &\textcolor{safePurple}{IS} \cellcolor{backgroundOrange} & \textcolor{safePurple}{IS} \cellcolor{backgroundOrange}\\ \hline

    \cellcolor{lightgray}$q\gtrsim \frac{\tau^2\log n}{n} $& \cellcolor{lightgray} & \textcolor{safePurple}{IS}, \textcolor{safePink}{$\textbf{QT}_{p}$}, \textcolor{safeBlue}{$\textbf{DT}_l$} & \textcolor{safePurple}{IS}, \textcolor{safePink}{$\textbf{QT}_{p}$}, \textcolor{safeBlue}{$\textbf{DT}_l$} \cellcolor{backgroundOrange} & \textcolor{safePurple}{IS} \cellcolor{backgroundOrange} \! & \textcolor{safePurple}{IS} \cellcolor{backgroundOrange} & \textcolor{safePurple}{IS} \cellcolor{backgroundOrange} \\ \hline

    \cellcolor{lightgray}$q\gtrsim \frac{\tau\log n}{n}/\frac{\tau^2\log(nq)}{n} $& \cellcolor{lightgray} & \textcolor{safePurple}{IS}, \textcolor{safePink}{$\textbf{QT}_p$}, \textcolor{safeBlue}{$\textbf{DT}_l$} & \textcolor{safePurple}{IS}, \textcolor{safePink}{$\textbf{QT}$}/\textcolor{safeBlue}{$\textbf{DT}_l$} \cellcolor{backgroundOrange} & \textcolor{safePurple}{IS} \cellcolor{backgroundOrange} & \textcolor{safePurple}{IS} \cellcolor{backgroundOrange} & \textcolor{safePurple}{IS} \cellcolor{backgroundOrange} \\ \hline
     
     \cellcolor{lightgray}$q\gtrsim \frac{\log n}{n} $& \cellcolor{lightgray} & \textcolor{safePurple}{IS}, \textcolor{safePink}{$\textbf{QT}_p$}, \textcolor{safeBlue}{$\textbf{DT}_l$} & \textcolor{safePurple}{IS} & \textcolor{safePurple}{IS} & \textcolor{safePurple}{IS} & \textcolor{safePurple}{IS}\\ \hline
     
     \cellcolor{lightgray}$0\le q\ll\frac{\log n}{n}$ & \cellcolor{lightgray} & \textcolor{safePurple}{IS} & \textcolor{safePurple}{IS} & \textcolor{safePurple}{IS} & \textcolor{safePurple}{IS}  & \textcolor{safePurple}{IS}  \\ \hline
     
     \cellcolor{lightgray}& \cellcolor{lightgray}$np\to0$ & \cellcolor{lightgray}$np=\Theta(1)$ & \cellcolor{lightgray}$np=\tau\to\infty$ & \cellcolor{lightgray}$np\lesssim\sqrt{\frac{n}{\log n}}$ & \cellcolor{lightgray}$np\lesssim\frac{n}{\log n}$ & \cellcolor{lightgray}$np=1$ \\
     
\end{tabular}
    \caption{Recovery results for Identical Search (\textcolor{safePurple}{IS}), polynomial-time and nonpolynomial-time Quadratic Thresholding (\textcolor{safePink}{$\textbf{QT}_p$}/\textcolor{magenta}{$\textbf{QT}$}), linear-time Diagonal Thresholding (\textcolor{safeBlue}{$\textbf{DT}_l$}), and PCA (Pink Shaded Region). Assumptions and axis details in Section \ref{section:table_descriptions}.}
    \label{recovery_table}
\end{table}

\newpage

\section{Recovery Experiments}\label{sec:experiments}

\paragraph{Setup.}
We simulate model \eqref{model} with $W$ being GOE, a symmetric Bernoulli$(q)$
mask $A$ and $v=b\odot\tilde v$, $b_i\sim\mathrm{Ber}(p)$. We use three spike priors, all with $\mathbb E\tilde v_i^2=1$:
Rademacher (Assumption \ref{rademacher_assumption}); a lower-bounded prior $|\tilde v_i|\sim \text{Unif}[0.8,1.187]$ with a random sign
(Assumption \ref{bounded_spike_assumption} with $B=1.25$); and, respectively, standard Gaussian (Assumption \ref{assumption:spike_subG}). 

We compare the four methods for signal recovery highlighted at the beginning of Section \ref{section:main_results_recovery} (with $l=2$ for Algorithm \ref{quad_recovery_alg_recov}). DT (Algorithm~\ref{alg:DT_recov}) selects $\{i:|X_{ii}|\ge t\}$; the threshold $t$ follows the proof of Theorem~\ref{recovery_gaussian_DT} with explicit constants, as follows. The noise term (\ref{noise_cond}) requires $t\ge L:=2\sqrt{\log(nq)/(nq)}$, while for lower bounded $|\tilde v_i|\ge1/B$, the small-ball term of (\ref{small_ball_cond}) vanishes for $t<U:=\theta/(2B^2np)$. Note that for Rademacher prior, $B=1$. For Gaussian prior, we require $t\ll U$ where $U=\theta/(2C_s^2(np)^3)$. In the case of Rademacher and lower bounded priors, we take $t=U$ when $L<U$ and $t=L$ otherwise. In the Gaussian setting, $(L,U)$ does not contain an integer at every $n$ we simulate ($n$ would need to be very large for the interval to contain an integer).
For $l=2$, Algorithm \ref{quad_recovery_alg_recov} runs in $O(n^2)$ time: for $w=s_ae_a+s_be_b$ one has
$w^\top Xw=X_{aa}+X_{bb}+2s_as_bX_{ab}$, so the maximizer is the pair yielding $X_{aa}+X_{bb}+2|X_{ab}|$.
For each experiment, each method evaluates the recovery metric its theorem provides: correlation
$|\langle \hat v,v\rangle|/(\|\hat v\|\|v\|)$ for PCA, exact recovery $R(\hat v,v)=1$ for QT and IS with the Rademacher
prior, and support recovery $\mathrm{supp}(\hat v)=\mathrm{supp}(v)$ or correlation otherwise. 

See each figure and table for details on the number of trials; in each trial, all methods
and all values of $\theta$ are applied to the same draw of $(v,W,A)$. Error bands are 95\% Wilson intervals for
success rates and mean $\pm1.96$ standard errors for correlations.

\subsection{Extremely Sparse Signal}\label{sec:exp-extreme}

\paragraph{Quadratic Thresholding below the spectral threshold.}
We take $p=8/n$ with dense noise $q=1$, so that Theorem~\ref{main_recov_thm_bounded}(a) guarantees exact recovery by QT for the Rademacher
prior, while PCA has no guarantee. Figure~\ref{fig:exp-case1} and Table~\ref{tab:exp-case1} show that the
 probability of exact recovery of QT increases with $n$ at all $\theta$, including $\theta<1$: at $n=10^4$, it is
$0.96$ for $\theta=0.8$ and $1$ for $\theta\ge1$, and the signal strength needed for exact recovery decreases from
$\theta\approx2$ at $n=10^3$ to $\theta\approx0.8$ at $n=10^4$. The PCA correlation, in contrast and as expected, does not improve with
$n$ (between $0.15$ and $0.76$ at $n=10^4$, similar to $0.20$--$0.79$ at $n=500$). IS fails, as no entry is noise-free when $q=1$.

Results for DT depend on its interval. For the Rademacher
prior, $U=\theta/(2B^2np)>L$ at $n=10^4$ for $\theta\ge1$, and DT recovers the support in $95$--$100\%$ of the
trials ($90\%$ at $\theta=0.8$, just outside the interval); at every $\theta$, the rate increases with $n$. For the lower-bounded prior, $U$ is smaller as
$B^2=1.56$ and the interval contains an integer at $n=10^4$ only for $\theta>1.52$; DT recovers the support in every trial at
$\theta=2$ and, with $t=L$, in $47$--$83\%$ of the trials for $\theta\le1.5$. For the Gaussian prior, the interval needs $n\gtrsim10^7$ and the rate is at most $5\%$: with
$\|v\|_0\approx8$ some nonzero coordinate almost always has $\theta v_i^2/(np)$ below the noise level, so no threshold
recovers the full support. Nevertheless, the refined eigenvector reaches exceptional correlation $0.97$--$1$ (lower bounded) and
$0.89$--$0.98$ (Gaussian) at $n=10^4$, since the missed entries carry little of $\|v\|^2$.


For the lower-bounded prior, QT recovers the support at $n=10^4$ in $93$--$99\%$ of the trials for $\theta\ge1.2$
($34\%$ at $\theta=1$, none at $\theta=0.8$), with correlation $0.98$--$0.99$ after the eigenvector refinement. After the eigenvector refinement, this
matches Theorem \ref{quad_form_recovery_thm}. 

\begin{figure}[H]
  \centering
  \begin{subfigure}[t]{0.48\linewidth}\centering
    \includegraphics[width=\linewidth]{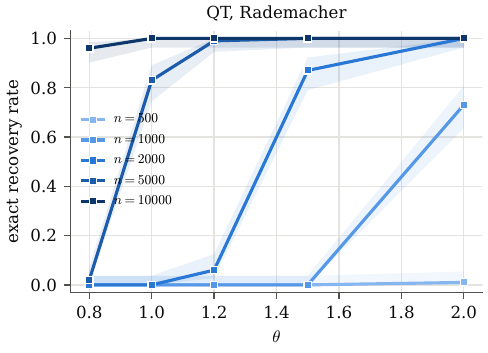}\caption{QT exact recovery}
  \end{subfigure}\hfill
  \begin{subfigure}[t]{0.48\linewidth}\centering
    \includegraphics[width=\linewidth]{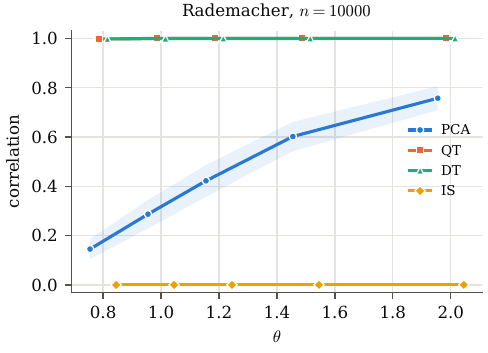}\caption{all methods, $n=10^4$}
  \end{subfigure}
  \caption{Sparse spike and dense noise: $p=8/n$, $q=1$, Rademacher prior, 100 trials per point.
  (a) Exact-recovery probability of QT ($l=2$) for different $\theta$, one curve per $n$.
  (b) Correlation results for each of the four methods at $n=10^4$ for different $\theta$.}
  \label{fig:exp-case1}
\end{figure}

\begin{table}[H]
  \centering\small
  \resizebox{\linewidth}{!}{
\begin{tabular}{ll lccccc}
\toprule
prior & method & theory & $\theta=0.8$ & $\theta=1$ & $\theta=1.2$ & $\theta=1.5$ & $\theta=2$ \\
\midrule
Rademacher & PCA (corr) & $\circ$ needs $np\to\infty$ & 0.15 & 0.29 & 0.42 & 0.60 & 0.76 \\
 & QT (exact) & \textcolor{LimeGreen}{$\checkmark$} Thm \ref{main_recov_thm_bounded}.1(a) exact & 0.96 & 1.00 & 1.00 & 1.00 & 1.00 \\
 & DT (supp) & \textcolor{LimeGreen}{$\checkmark$} Thm \ref{main_recov_thm_bounded}.1(c) support, $\theta\ge1$ & 0.90 & 0.95 & 1.00 & 1.00 & 1.00 \\
 & IS (exact) & \textcolor{red}{$\times$} $q=1$ & 0.00 & 0.00 & 0.00 & 0.00 & 0.00 \\
\midrule
lower bounded & PCA (corr) & $\circ$ needs $np\to\infty$ & 0.14 & 0.30 & 0.46 & 0.62 & 0.79 \\
 & QT (supp) & \textcolor{LimeGreen}{$\checkmark$} Thm \ref{main_recov_thm_bounded}.2(a) support, $\theta\ge1.2$ & 0.00 & 0.34 & 0.93 & 0.99 & 0.99 \\
 & DT (supp) & \textcolor{LimeGreen}{$\checkmark$} Thm \ref{main_recov_thm_bounded}.2(b) support, $\theta\ge1.6$ & 0.47 & 0.77 & 0.81 & 0.83 & 1.00 \\
 & IS (supp) & \textcolor{red}{$\times$} $q=1$ & 0.00 & 0.00 & 0.00 & 0.00 & 0.00 \\
\midrule
Gaussian & PCA (corr) & $\circ$ needs $np\to\infty$ & 0.21 & 0.33 & 0.44 & 0.55 & 0.67 \\
 & QT ($\tau_3=1$) (supp) & \textcolor{red}{$\times$} Thm \ref{thm:QT_subG_support_recovery} needs $np\to\infty$ & 0.08 & 0.31 & 0.33 & 0.32 & 0.31 \\
 & QT ($\tau_3=2$) (supp) & \textcolor{red}{$\times$} Thm \ref{thm:QT_subG_support_recovery} needs $np\to\infty$ & 0.00 & 0.00 & 0.00 & 0.01 & 0.56 \\
 & QT ($\tau_3=4$) (supp) & \textcolor{red}{$\times$} Thm \ref{thm:QT_subG_support_recovery} needs $np\to\infty$ & 0.00 & 0.00 & 0.00 & 0.00 & 0.00 \\
 & DT (supp) & \textcolor{red}{$\times$} Thm \ref{recovery_gaussian_DT} window empty & 0.00 & 0.00 & 0.01 & 0.03 & 0.05 \\
 & IS (supp) & \textcolor{red}{$\times$} $q=1$ & 0.00 & 0.00 & 0.00 & 0.00 & 0.00 \\
\bottomrule
\end{tabular}
}
  \caption{Sparse spike and dense noise ($p=8/n$, $q=1$) at $n=10^4$, 100 trials. ``theory'': \textcolor{LimeGreen}{$\checkmark$} a theorem
  guarantees the metric, $\times$ its condition fails, $\circ$ no guarantee.}
  \label{tab:exp-case1}
\end{table}

\paragraph{Bounded expected support and PCA.}
With $p=5/n$ and $q=1$ the support size $K=\|v\|_0$ is approximately Poisson$(5)$, so the strength of the planted
rank-one perturbation, $\theta_{\rm eff}=\theta K/5$, is random. Figure~\ref{fig:exp-bounded}(a) shows that, trial by
trial and for every $n$, the squared PCA correlation follows the BBP curve $1-1/\theta_{\rm eff}^2$; PCA fails exactly
in the trials with $\theta_{\rm eff}\le1$. With the support size fixed to $K\in\{4,5,6\}$
(Figure~\ref{fig:exp-bounded}(b)), the squared correlation converges to the BBP value 
and decays to zero when $\theta_{\text{eff}}\le1$. Consequently, the average PCA performance with
Bernoulli support does not converge as $n$ grows as expected, in line with remark after Theorem~\ref{main_recov_thm_bounded} \citep[PCA requires $np\to\infty$ by][]{dumitriu2026bbpphasetransitiondoubly}, while QT recovers the spike exactly in $95$--$98\%$ of the trials for
$n\ge2000$ in $\theta=1.2$; for $n\ge2000$ its only failures are trials with $\|v\|_0=1$, where a search over pairs cannot succeed. 

\begin{figure}[H]
  \centering
  \begin{subfigure}[t]{0.48\linewidth}\centering
    \includegraphics[width=\linewidth]{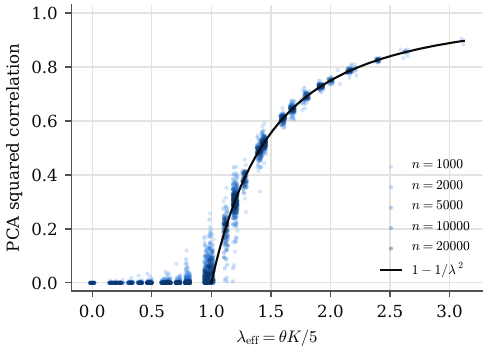}\caption{per trial}
  \end{subfigure}\hfill
  \begin{subfigure}[t]{0.48\linewidth}\centering
    \includegraphics[width=\linewidth]{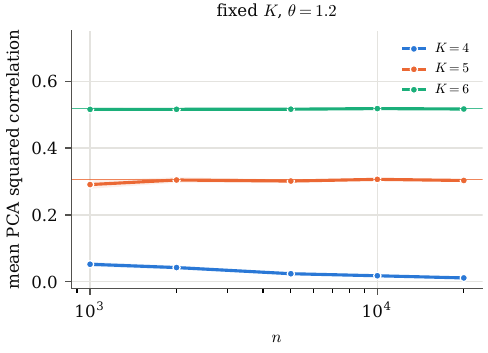}\caption{fixed $K$, $\theta=1.2$}
  \end{subfigure}\hfill
  \caption{Bounded expected support, $p=5/n$, $q=1$, Rademacher prior.
  (a) Squared PCA correlation against $\theta_{\rm eff}=\theta K/5$ for $\theta\in\{0.8,1,1.2\}$ and
  $n\in\{10^3,\dots,2\cdot10^4\}$ (200 trials each), with the BBP curve $1-1/\lambda^2$.
  (b) Fixed support size $K$: mean squared correlation against $n$; thin lines: BBP limits.
  }
  \label{fig:exp-bounded}
\end{figure}

\subsection{Moderately Sparse Signal}\label{sec:exp-moderate}

Theorem \ref{quad_form_recovery_thm} predicts that the polynomial-time version of QT ($l$ fixed) succeeds when
$R_{\rm QT}=q\theta^2l/(np^2\log n)$ exceeds a constant. To capture the transition, we set $p=c/\sqrt{n\log n}$, so that $np^2\log n=c^2$ and
$R_{\rm QT}=2\theta^2/c^2$ for $l=2$, $q=1$, and vary $c\in[0.2,1.5]$, $\theta\in\{1,1.5,2\}$ and
$n\in\{10^3,\dots,2\cdot10^4\}$. The exact-recovery curves for all $n$, plotted against $R_{\rm QT}$, converge
(Figure~\ref{fig:exp-qtscaling}). 

For DT, the condition $L<U$ gives $\theta/(2np)>2\sqrt{\log n/n}$, i.e. $c<\theta/4$ independently of $n$ (the condition $q\gg np^2\log(nq)$ of Theorems~\ref{distinguish_gaussian_DT} and~\ref{recovery_gaussian_DT}): DT recovers the support in 99–100\% of the trials for $c=0.2$ and $(c,\theta)=(0.4,2)$, and in at most 5\% for $c\ge0.8$.

One could consider moderately sparse noise in this setting; however, this would only capture the region of transition between polynomial and non-polynomial time QT, so we exclude these experiments. 

\begin{figure}[H]
  \centering
  \begin{subfigure}[t]{0.48\linewidth}\centering
    \includegraphics[width=\linewidth]{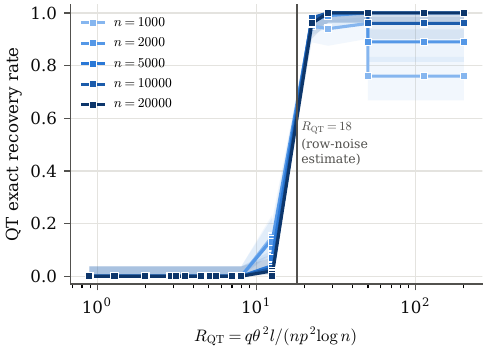}\caption{against $R_{\rm QT}$}
  \end{subfigure}\hfill
  \begin{subfigure}[t]{0.48\linewidth}\centering
    \includegraphics[width=\linewidth]{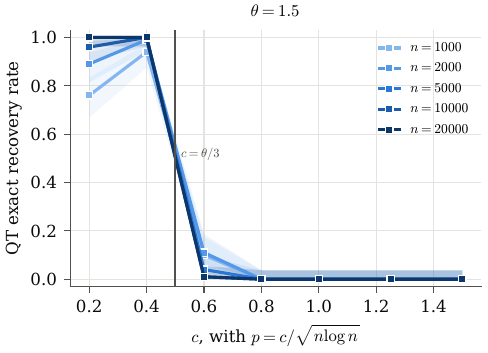}\caption{against $c$, $\theta=1.5$}
  \end{subfigure}
  \caption{QT sparsity scaling: $p=c/\sqrt{n\log n}$, $q=1$, $l=2$, Rademacher prior, 100 trials per point.
  (a) Exact-recovery probability against $R_{\rm QT}=q\theta^2l/(np^2\log n)=2\theta^2/c^2$, all $(c,\theta)$ pooled, one
  curve per $n$. (b) The same at $\theta=1.5$ against $c$; the vertical line is $c=\theta/3$.}
  \label{fig:exp-qtscaling}
\end{figure}

\subsection{Dense Signal}\label{sec:exp-dense}

\paragraph{Sparse noise: Identical Search.}
With $p=1/2$ and $q=5/n$ the noise is too sparse for any spectral method, but most entries of $X_{\text{supp}}$
are noise-free. IS recovers the spike exactly in every trial, for all $n\in\{10^3,\dots,2\cdot10^4\}$ and
all $\theta\in\{0.3,\dots,1.2\}$, including $\theta=0.3$, far below the spectral threshold, while the PCA correlation is
at most $0.07$ and QT and DT fail (Figure~\ref{fig:exp-is}(a), Table~\ref{tab:exp-case2}).

On the support, IS sees the Erdős–Rényi graph $G(K,1-q)$ of noise-free entries and succeeds exactly when it is connected. Theorem~\ref{iden_search_recov_thm_np_to_inf} guarantees this when $(1-q)K\ge\log K+\tau_6$, i.e. for $1-q=\alpha\log K/K$ when $\alpha>1$. 
The experiment shows this threshold is sharp.
In Figure~\ref{fig:exp-is}(c), we performed 2000 planted and 100 null trials, as in this case the null error is insignificant as per Theorem \ref{iden_search_distinguish}. 

\begin{figure}[H]
  \centering
  \begin{subfigure}[t]{0.32\linewidth}\centering
    \includegraphics[width=\linewidth]{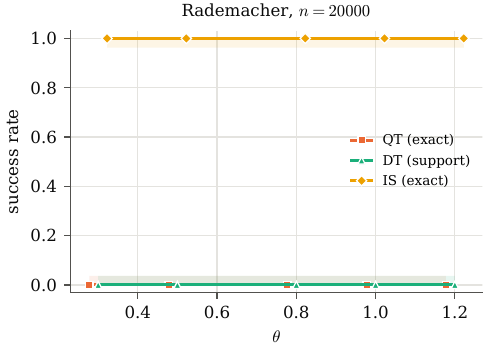}\caption{$p=1/2$, $q=5/n$}
  \end{subfigure}\hfill
  \begin{subfigure}[t]{0.32\linewidth}\centering
    \includegraphics[width=\linewidth]{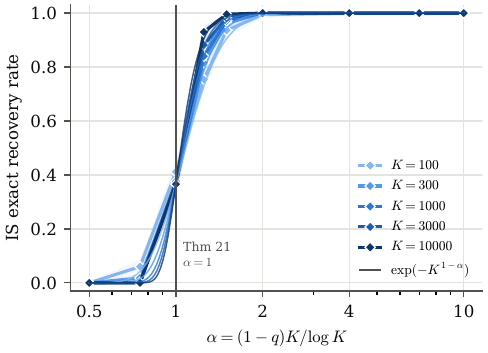}\caption{exact recovery}
  \end{subfigure}\hfill
  \begin{subfigure}[t]{0.32\linewidth}\centering
    \includegraphics[width=\linewidth]{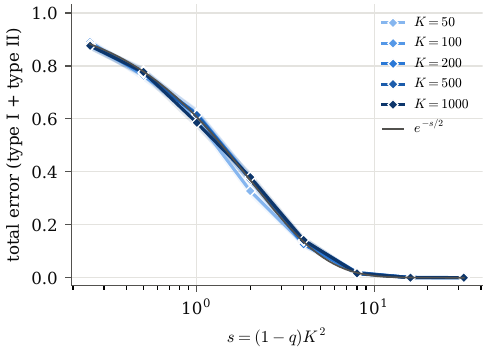}\caption{distinguishing}
  \end{subfigure}
  \caption{Identical Search. (a) Success rates at $n=2\cdot10^4$ for $p=1/2$, $q=5/n$ (100 trials).
  (b) Exact recovery on the noise-free graph $G(K,1-q)$, $1-q=\alpha\log K/K$, 500 trials; thin lines $\exp(-K^{1-\alpha})$, the connectivity limit used in Theorem~\ref{iden_search_recov_thm_np_to_inf}; vertical line: its threshold $\alpha=1$. (c) Total error of the full-knowledge test, $1-q=s/K^2$, 2000 planted and 100 null trials.}
  \label{fig:exp-is}
\end{figure}

\begin{table}[H]
  \centering\small
\begin{tabular}{ll lccccc}
\toprule
prior & method & theory & $\theta=0.3$ & $\theta=0.5$ & $\theta=0.8$ & $\theta=1$ & $\theta=1.2$ \\
\midrule
Rademacher & PCA (corr) & \textcolor{red}{$\times$} $nq\not\gg\log n$ & 0.01 & 0.01 & 0.01 & 0.01 & 0.01 \\
 & QT (exact) & \textcolor{red}{$\times$} $q\not\gtrsim p\log n$ & 0.00 & 0.00 & 0.00 & 0.00 & 0.00 \\
 & DT (supp) & \textcolor{red}{$\times$} Thm \ref{recovery_gaussian_DT} (remark) empty interval & 0.00 & 0.00 & 0.00 & 0.00 & 0.00 \\
 & IS (exact) & \textcolor{LimeGreen}{$\checkmark$} Thm \ref{main_recov_thm_unbounded}.1(d) exact & 1.00 & 1.00 & 1.00 & 1.00 & 1.00 \\
\bottomrule
\end{tabular}

  \caption{Dense spike and very sparse noise ($p=1/2$, $q=5/n$) at $n=2\cdot10^4$, 100 trials.}
  \label{tab:exp-case2}
\end{table}

\paragraph{Dense noise.}
For $p=q=1$ PCA follows the BBP prediction (Table~\ref{tab:exp-case3}): its correlation is $0.55$ at $\theta=1.2$ and
$0.75$ at $\theta=1.5$ against $\sqrt{1-1/\theta^2}=0.553$ and $0.745$, and it decays to zero with $n$ for $\theta\le1$.
QT, DT, and IS fail in this regime, as predicted.

\begin{table}[H]
  \centering\small
\begin{tabular}{ll lccccc}
\toprule
prior & method & theory & $\theta=0.5$ & $\theta=0.8$ & $\theta=1$ & $\theta=1.2$ & $\theta=1.5$ \\
\midrule
Rademacher & PCA (corr) & \textcolor{LimeGreen}{$\checkmark$} weak iff $\theta>1$ & 0.02 & 0.04 & 0.17 & 0.55 & 0.75 \\
 & QT (exact) & \textcolor{red}{$\times$} $q\not\gtrsim p\log n$ & 0.00 & 0.00 & 0.00 & 0.00 & 0.00 \\
 & DT (supp) & \textcolor{red}{$\times$} Thm \ref{recovery_gaussian_DT} (remark) window empty & 0.00 & 0.00 & 0.00 & 0.00 & 0.00 \\
 & IS (exact) & \textcolor{red}{$\times$} $q=1$ & 0.00 & 0.00 & 0.00 & 0.00 & 0.00 \\
\bottomrule
\end{tabular}

  \caption{Dense spike and dense noise ($p=q=1$) at $n=10^4$, 100 trials.}
  \label{tab:exp-case3}
\end{table}


\newpage

\section{Spike Support Regimes}\label{spike_support_regimes_section}

We often condition on $\|v\|_0$ being known. Model \ref{model} means $\mathbb E[\|v\|_0]=np$, but fluctuation can occur. The scope of this fluctuation, as well as its connection to whether the problem can be solved or not, relies on the sparsity parameter $p$. 

\subsection{Unbounded Spike Support Size}

We begin with the regime $np\to \infty.$ Here, we face no obstacles when distinguishing or recovering and instead focus on pinning down the fluctuation of $\|v\|_0$. We defer the proof of the following to Section \ref{appendix:fixed_mask_proof} as it is a simple application of Bernstein's Inequality.

\begin{lemma}\label{fixedmasklemma}
Let $\tilde v$ satisfy any dense spike prior, and $b$ satisfy Assumption \ref{ind_bern_spike_assumption}. Then, for $v=b\odot\tilde v$ and $np\to\infty$, we have $
\mathbb P \left( \|v\|_0\not\in \left[np-\log(np)\sqrt{np}, np+\log(np)\sqrt{np}\right] \right)\le (np)^{-2}\to 0~.
$

\end{lemma}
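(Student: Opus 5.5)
We will show that $\|v\|_0$ is exactly a sum of i.i.d.\ Bernoulli$(p)$ variables and then apply Bernstein's inequality with deviation $t=\log(np)\sqrt{np}$.

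First we identify the distribution of $\|v\|_0$. Write $\|v\|_0=\sum_{j=1}^n \mathbbm 1_{\{b_j\tilde v_j\neq 0\}}$. For the priors of interest (Assumptions \ref{spike_small_ball_assumption}, \ref{bounded_spike_assumption}, \ref{rademacher_assumption}), $\tilde v_j\neq 0$ almost surely. Under Assumption \ref{spike_small_ball_assumption} this follows by letting $\epsilon\to0$ in $\mathbb P(|\tilde v_j|\le\epsilon)\le C_s\epsilon$. Hence $\|v\|_0=\sum_j b_j$ almost surely, which is a Binomial$(n,p)$ variable.

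If a dense prior has an atom at $0$, then $\|v\|_0=\sum_j b_j\mathbbm 1_{\{\tilde v_j\neq0\}}$ is Binomial$(n,p')$ with $p'\le p$. The statement should then be read with $np'$ in place of $np$, so we restrict to priors with no atom at zero. This is the only modelling point to settle, and we state it explicitly in the proof.

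Next we apply Bernstein's inequality. The centered summands $\xi_j=b_j-p$ satisfy $|\xi_j|\le 1$ and $\sum_j\mathrm{Var}(\xi_j)=np(1-p)\le np$. Bernstein gives
\[
\mathbb P\Bigl(\bigl|\textstyle\sum_j b_j-np\bigr|\ge t\Bigr)\le 2\exp\Bigl(-\frac{t^2/2}{np+t/3}\Bigr).
\]
Set $t=\log(np)\sqrt{np}$ and write $m=np\to\infty$. Then $t/3=o(m)$, because $\log m/\sqrt m\to0$. So the denominator is $m(1+o(1))$, and the exponent is $-\tfrac12(\log m)^2(1+o(1))$.

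Finally we reach the stated rate. The bound is $2\exp\bigl(-\tfrac{1+o(1)}{2}(\log m)^2\bigr)$, and we need this to be at most $m^{-2}=\exp(-2\log m)$. That holds once $\tfrac{1+o(1)}{4}\log m\ge 2+\tfrac{\log 2}{\log m}$, which is true for all large $m$, since $(\log m)^2$ eventually dominates $2\log m+\log 2$.

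The only mildly delicate step is this last one: the inequality $\le (np)^{-2}$ holds only for $np$ larger than an absolute constant. Since $np\to\infty$, it holds for all sufficiently large $n$, and that is the intended asymptotic reading. The tail then tends to $0$.
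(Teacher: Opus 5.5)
Your proposal is correct and follows the paper's proof. Both identify $\|v\|_0$ as $\operatorname{Bin}(n,p)$ because the dense prior has no atom at zero, apply Bernstein with $t=\log(np)\sqrt{np}$ to get $2\exp\bigl(-\log^2(np)/(2(1+\log(np)/(3\sqrt{np})))\bigr)$, and note this is at most $(np)^{-2}$ once $np$ is large. The paper does the last step by bounding the coefficient below by $1/3$, whereas you use $1+o(1)$; your displayed condition has $\tfrac14$ where $\tfrac12$ suffices, but it is stronger, so it still works. Your explicit caveat that ``dense'' must mean no atom at zero is a useful clarification the paper leaves implicit, since Assumption \ref{assumption:spike_subG} alone does not guarantee it.
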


\subsection{Bounded Spike Support Size}\label{constant_np_section}

Next, we explore $np=O(1)$. First, we establish a negative result for distinguishability. 

\begin{theorem}\label{distinguish_them_np_equals_constant}
When $np=O(1)$, distinguishing between $\mathbb P$ and $\mathbb Q$ is impossible.
\end{theorem}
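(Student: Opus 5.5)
The plan is to exploit the event $\{\|v\|_0=0\}$. On this event the planted matrix coincides with the null one, since then $vv^T=0$ and $X=\frac{1}{\sqrt{nq}}W\odot A$ exactly. Because $np=O(1)$, this event has probability bounded away from zero. Hence no test can drive both error probabilities to $0$ simultaneously.

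\textbf{Step 1: lower bound on the probability of an empty support.} Under Assumption \ref{ind_bern_spike_assumption},
\[
\mathbb P(\|v\|_0=0)\ge \mathbb P(b=0)=(1-p)^n.
\]
If $np\le C$ for all large $n$, then $p\to0$, and $\log(1-p)\ge -p-p^2$ for $p\le 1/2$. This gives
\[
(1-p)^n\ge e^{-np(1+p)}\ge e^{-2C}=:c_0>0
\]
for all large $n$. I will not need the spike prior here: $\|v\|_0=0$ already holds on $\{b=0\}$, whatever $\tilde v$ is.

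\textbf{Step 2: decompose the planted law.} Let $\mathbb P_0$ denote the conditional law of $X$ under $\mathbb P$ given $b=0$. Since $b$ is independent of $(W,A)$, we have $\mathbb P_0=\mathbb Q$. Therefore $\mathbb P = \pi\,\mathbb Q+(1-\pi)\mathbb P_1$, with $\pi=\mathbb P(b=0)\ge c_0$ and $\mathbb P_1$ the conditional law given $b\neq0$.

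\textbf{Step 3: conclude.} Take any statistic $\mathcal T$ and threshold $\epsilon$, and write $E=\{\mathcal T(X)\ge\epsilon\}$. Then
\[
\mathbb P(E^c)+\mathbb Q(E)\ge \pi\,\mathbb Q(E^c)+\mathbb Q(E)\ge \pi\bigl(\mathbb Q(E^c)+\mathbb Q(E)\bigr)=\pi\ge c_0 .
\]
So the sum of type I and type II errors stays at least $c_0$ for all large $n$ and cannot converge to $0$. Equivalently, $d_{TV}(\mathbb P,\mathbb Q)\le 1-\pi\le 1-c_0$. This bound holds uniformly over all tests, including randomized ones (replace the indicator of $E$ by a $[0,1]$-valued function).

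There is no real obstacle. The only care needed is with the meaning of $np=O(1)$: if $np$ is only bounded along a subsequence, the argument shows failure along that subsequence, which is enough to rule out convergence to $0$. The case $np\to0$ is included and is even easier, since then $\pi\to1$. I will also remark, consistent with the paper's Remark, that this argument relies entirely on the Bernoulli mask and does not apply to a fixed positive support size.
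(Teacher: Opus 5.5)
Your proposal is correct and is essentially the paper's argument: the same mixture decomposition $\mathbb P=\alpha_n\mathbb Q+(1-\alpha_n)P_+$ with $\alpha_n=(1-p)^n$, and the same one-line bound showing that the sum of the two errors is at least $\alpha_n$. Your deviations are cosmetic: you bound $(1-p)^n\ge e^{-2C}$ directly instead of passing to subsequential limits $e^{-\lambda}$, and you condition on $\{b=0\}$ rather than $\{\|v\|_0=0\}$, which sidesteps any need for $\tilde v$ to have no atom at zero.
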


\begin{proof}
Let
$
\alpha_n:=(1-p)^n
$
denote the probability that the spike has empty support. Let $P_+$ denote the planted distribution conditional on $\|v\|_0>0$. Since, on the event $\|v\|_0=0$, we have $v=0$ and hence the planted model coincides with the null model, the planted distribution admits the mixture decomposition $
\mathbb P
=
\alpha_n\mathbb Q+(1-\alpha_n)P_+.
$

Consider any test $\phi\in{0,1}$, where $\phi=1$ denotes selecting the planted model. Its sum of type-I and type-II errors is
$
\mathbb P(\phi=0)+\mathbb Q(\phi=1).
$
Using the mixture decomposition,
\begin{align*}
\mathbb P(\phi=0)+\mathbb Q(\phi=1)
&=
\alpha_n\mathbb Q(\phi=0)
+(1-\alpha_n)P_+(\phi=0)
+\mathbb Q(\phi=1)\\
&\geq
\alpha_n\mathbb Q(\phi=0)+\mathbb Q(\phi=1)\\
&=\alpha_n+(1-\alpha_n)\mathbb Q(\phi=1)\\
&\geq \alpha_n.
\end{align*}
Thus, whenever $\alpha_n$ is bounded away from zero, the sum of the type-I and type-II errors cannot converge to zero. If $np\to\lambda\in[0,\infty)$ along a subsequence, then

$$
\alpha_n=(1-p)^n
=\left(1-\frac{np}{n}\right)^n
\longrightarrow e^{-\lambda}>0.
$$

If $np=O(1)$ but does not converge, then it has a bounded subsequence along which $np\to\lambda\in[0,\infty)$. Along this subsequence, $
\alpha_n\to e^{-\lambda}>0.
$

Therefore, along this subsequence, the sum of the type-I and type-II errors is bounded away from zero. Hence the errors cannot both converge to zero, and distinguishing is impossible.
\end{proof}

While distinguishing, by the nature of its definition, may not be possible, one can still recover as we know $X\sim \mathbb P$ and do not have to worry about determining otherwise. When $np=\Theta(1)$, the size of the support can still fluctuate and equal $0.$ However, if the support is nonexistent, recovery defaults to a null case and is in fact successful if the algorithm returns the $0$ vector. We often use the fact that, for $np=\Theta(1),$  $\|v\|_0=O_P(1)$ as $\|v\|_0$ is a binomial random variable.

\section{Quadratic Thresholding}\label{quadratic_thresholding_section}

In this section, we extend the results of \cite{ding2022subexponentialtimealgorithmssparsepca}. In short, this procedure allows one to make an initial guess by searching over possible vectors with support size $l$. We can then balance this runtime parameter $l$, which controls the size of the search set, with sparsity parameters $p$ and $q.$

In \cite{ding2022subexponentialtimealgorithmssparsepca}, the authors consider the noise distribution to be GOE. Because of the inherent rotational invariance, this allows them to implement a handy method of adding and subtracting a newly sampled GUE matrix to introduce independence into their algorithm. Here, we present a much broader approach that does not necessitate the rotational invariance and allows us to generalize the noise matrix to the Hadamard product of a sub-Gaussian and Bernoulli matrix. Moving forward, we let $Y:=\frac{1}{\sqrt{nq}}W\odot A.$

Furthermore, \cite{ding2022subexponentialtimealgorithmssparsepca} assumed the spike satisfied Assumption \ref{bounded_spike_assumption}. Here, we also include results satisfying this assumption but first extend the procedure for the more general Assumption \ref{spike_small_ball_assumption}. This small ball assumption comes from our desire to apply the procedure to more general sub-Gaussian distributions. Indeed, Assumption \ref{spike_small_ball_assumption} is very natural and covers common sub-Gaussian distributions such as Gaussian, uniform, and Rademacher distributions. Essentially, it allows any continuous distributions with bounded density near zero and discrete distributions whose support is bounded away from zero, while excluding distributions with atoms at zero. 

\subsection{Main Results}


We operate in a partial knowledge setting where parameter values are known. However, the success of the following algorithms relies on choosing the correct tuning parameter $l$ in a window controlled by the unknown value $\|v\|_0.$ In the following theorems, we condition on the value $\|v\|_0$ being fixed but unknown. Then, we remove this conditioning to establish high probability windows based on $np$ (see Section \ref{quad_form_interpretation}). When $np\to\infty,$ the window for $l$ is fully visible (w.h.p.). When $np=\Theta(1)$ (in the case of recovery), the left side of the interval is visible, but the right side is some unknown $\Theta(1)$ integer. Thus, one would need to start at the bottom and incrementally increase $l$ to reveal the value of $\|v\|_0.$

Because the distinguishability proof does not require independence, our distinguishability result follows a similar approach to \cite{ding2022subexponentialtimealgorithmssparsepca}, but generalizes the result for a small-ball spike and sparse noise.

\begin{algorithm}
    \caption{Distinguishability by Quadratic Thresholding }\label{quad_form_distinguish_alg}
    \begin{algorithmic}
        \STATE Inputs: $t$, $\theta>0, p\in[0,1], q \in [0,1],n\in \mathbb N$ and $X$ where $X\sim \mathbb P$ or $X \sim \mathbb Q$.
        \STATE 1. Consider search set $\mathcal I_{n,l}:=\{w\in\{0,\pm1 \}^n : \|w\|_0=l \}$
        \STATE 2. Determine test statistic $\mathcal T:= \max_{w\in\mathcal I_{n,l}} w^TXw$
        \IF{$\mathcal T\ge t$}
            \STATE return $\mathbb P$
        \ELSE
            \STATE return $\mathbb Q$
        \ENDIF
    \end{algorithmic}
\end{algorithm}

\begin{remark}
    The exhaustive search over the set $\mathcal I_{n,l}$ dominates the runtime of this algorithm, which in turn determines whether or not the method produces a polynomial time algorithm. Because $|\mathcal I_{n,l}|=\binom{n}{l}2^l\le (2n)^{l},$ the runtime is $n^{O(l)}.$ Thus, if $l=\Theta(1),$ then we have a polynomial time algorithm.
\end{remark}

\begin{theorem}[Quadratic Thresholding Distinguishability Small-Ball Spike]
\label{thm:quad_subG_distinguish}
Assume that $np\to\infty$, $v$ satisfies Assumption \ref{spike_small_ball_assumption}, and $W$ satisfies Assumption~\ref{wigner_subg_assumption}. For $l\leq\|v\|_0$, define $\mathcal T(X)
:=
\max \{w^TXw \mid w\in\mathcal I_{n,l} \}$ and set $t
:=
{\theta l^2}/({128C_s^2np})$.
Then, with $C_5$ sufficiently large, given that we can choose some integer $l$ in the interval
\begin{align*}
\mathcal L_{D_{SG}}
&:=
\left[
\frac{C_5K_2^2C_s^4}{\theta^2}
\frac{np^2\log(n)}{q},
\|v\|_0
\right]~,
\end{align*}
we can distinguish between the $\mathbb P$ and $\mathbb Q$ models with Algorithm \ref{quad_form_distinguish_alg}.
More specifically,
we have, with constants defined in the proof,
\begin{align*}
\mathbb P\left(\mathcal T(X)<t\right)
+
\mathbb Q\left(\mathcal T(X)\geq t\right)
&\leq
\exp\left(
-\frac{
q\theta^2l^2
-C_5K_2^2C_s^4\cdot l\cdot np^2\log(n)
}{
K_2^2C_s^4np^2
}
\right)
+
\exp(-c_6\|v\|_0)
=o(1)~.
\end{align*}
\end{theorem}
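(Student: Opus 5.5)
We bound the two error probabilities separately, conditioning throughout on $\|v\|_0=k\ge l$ (fixed but unknown) and on the support and values of $v$.

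\textbf{Type I error (null model).} Under $\mathbb Q$ we have $X=Y$, so we need $\max_{w\in\mathcal I_{n,l}} w^TYw< t$ with high probability. Fix $w\in\mathcal I_{n,l}$ and write
\[
w^TYw=\frac{1}{\sqrt{nq}}\Bigl(\sum_i W_{ii}A_{ii}w_i^2+2\sum_{i<j}W_{ij}A_{ij}w_iw_j\Bigr).
\]
This is a sum of independent centered terms $W_{ij}A_{ij}$. Each has $\psi_2$-norm at most $K_2$ and variance $q$, and the coefficients have squared $\ell_2$ norm of order $l^2$. Conditioning on $A$ restricts the sum to the $\approx ql^2$ present entries. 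Hoeffding then gives
\[
\mathbb P(w^TYw\ge t)\le \exp\bigl(-c\,nq\,t^2/(K_2^2 l^2)\bigr)
\]
on the event that the mask has at most $O(ql^2+l)$ entries in the $l\times l$ block. That event fails only with exponentially small probability by Chernoff, and this is what Lemma~\ref{noise_quadratic_form_control} presumably packages. Taking a union bound over the at most $(2n)^l$ vectors costs $l\log(2n)$ in the exponent. Substituting $t=\theta l^2/(128C_s^2np)$, so that $nq\,t^2/l^2 = q\theta^2 l^2/(128^2 C_s^4 np^2)$, the type I error is
\[
\exp\!\Bigl(-\frac{q\theta^2l^2 - C_5K_2^2C_s^4\, l\, np^2\log n}{K_2^2C_s^4np^2}\Bigr),
\]
with $C_5$ absorbing $128^2$, the Hoeffding constant and the union bound. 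This is $o(1)$ exactly when $l$ exceeds the left endpoint of $\mathcal L_{D_{SG}}$ by a constant factor. The step I expect to need most care is the sparse regime, where $ql^2$ may be small and the sub-Gaussian tail must be uniform over mask realisations; using the $\ell_2$ norm of the full coefficient vector, rather than the realised support, should sidestep this at the cost of constants.

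\textbf{Type II error (planted model).} Under $\mathbb P$ we use $\mathcal T(X)\ge w^TXw$ for a single well-chosen $w$. By the small-ball assumption, $\mathbb P(|\tilde v_j|\le 1/(4C_s))\le 1/4$. A Chernoff bound then shows that, except with probability $\exp(-c_6 k)$, at least $k/2\ge l/2$ support coordinates satisfy $|v_j|\ge 1/(4C_s)$. Choose $S$ to be $l$ support coordinates. I will take the $l$ with the largest $|v_j|$; if $l>k/2$, any $l$ of them still give a sum $\ge \frac{k}{2}\cdot\frac{1}{4C_s}\ge l/(8C_s)$. Set $w_j=\mathrm{sign}(v_j)$ on $S$. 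Then
\[
\frac{\theta}{np}(w^Tv)^2\ge \frac{\theta}{np}\Bigl(\frac{l}{8C_s}\Bigr)^2=\frac{\theta l^2}{64C_s^2np}=2t.
\]
Since $w$ is determined by $v$ and is independent of $Y$, the same single-vector Hoeffding bound gives $\mathbb P(w^TYw\le -t)$ at most the displayed exponential (even without the union bound). Hence $\mathcal T(X)\ge 2t-t=t$ with the required probability.

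Finally, we sum the two errors. The condition $np\to\infty$, together with Lemma~\ref{fixedmasklemma}, makes $\exp(-c_6\|v\|_0)=o(1)$, and $l$ lying in the interval makes the first term $o(1)$ (after enlarging $C_5$).
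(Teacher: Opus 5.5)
Your proposal is correct and follows the paper's proof essentially step for step. You use a single sign-aligned vector on the $l$ largest coordinates of $v$, and a small-ball plus Chernoff bound giving $\langle w,v\rangle\ge l/(8C_s)$ off an $\exp(-c_6\|v\|_0)$ event; you use level $1/(4C_s)$, a $k/2$ count and a case split, where the paper uses $1/(2C_s)$, a $k/4$ count and an averaging argument. You then apply Lemma~\ref{noise_quadratic_form_control} to that fixed vector under $\mathbb P$, and a union bound over $|\mathcal I_{n,l}|\le(2n)^l$ under $\mathbb Q$.

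Your worry about sparse masks is unfounded. Each $W_{ij}A_{ij}$ is already centered with $\psi_2$-norm at most $K_2$, so Hoeffding over the full coefficient vector gives the tail unconditionally; this is your fallback, and it is exactly what the Lemma does. Even after conditioning on $A$, the $l\times l$ block trivially has at most $l(l+1)/2$ entries, so no Chernoff event on the mask count is needed. Such an event would not survive the union bound over supports anyway.
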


See Section \ref{section:subg_dist} for the proof. For a small-ball prior, we can increase the above region of sparsity by using Diagonal Thresholding instead (see Section \ref{section:DT}).

When considering the lower-bounded spike assumption, one could establish an immediate corollary of the above theorem by replacing $C_s$ with $B$ and removing the second term in the tail bound. However, to establish tighter results with respect to constants, we provide the following theorem and defer the proof details to Section \ref{quad_dist_proof} as it follows very closely with the sub-Gaussian generalization above.

\begin{theorem}[Distinguishability by Quadratic Thresholding: Lower Bounded Spike]\label{quad_form_distinguish_thm}
    Assume $np\to\infty$ and that $v$ satisfies Assumption \ref{bounded_spike_assumption}. Let $\mathcal T(X):=\max\limits_{w\in\mathcal I_{n,l}}w^TXw$ where $\mathcal I_{n,l}:=\{w\in \{0,\pm1\}^n:\|w\|_0=l \}.$ Consider the set
    \begin{align*}
        \mathcal L_{D}:=\left[\frac{C_{1}K_2^2B^4}{\theta^2}\cdot \frac{np^2\log(n)}{q}, \|v\|_0\right]~.
    \end{align*}
    Given that $p$ and $q$ are chosen so that $|\mathcal L_D~\cap~ \mathbb Z^+|\ge1$ and $C_{1}$ is sufficiently large, we can distinguish between the planted $\mathbb P$ and null $\mathbb Q$ distributions with Algorithm \ref{quad_form_distinguish_alg} with threshold parameter $t:=\frac{\theta l^2}{2B^2np}$. More specifically, for some absolute constant $c$, we have 
    \begin{align*}
        \mathbb P(\mathcal T(X)<t)+\mathbb Q(\mathcal T(X)>t)\le \exp\left(-\left(\frac{cq\theta^2 l^2-16K_2^2B^4\cdot l\cdot np^2\log(n)}{4K_2^2B^4np^2}\right)\right)=o(1)~,
    \end{align*}
    for any $l\in\mathcal L_D$ when $|\mathcal L_D\cap \mathbb Z^+|\ge1$.
\end{theorem}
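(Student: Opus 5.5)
We bound the type II error (planted, $\mathcal T<t$) and the type I error (null, $\mathcal T>t$) separately, throughout conditioning on $\|v\|_0=k$ with $l\le k$. The threshold $t=\frac{\theta l^2}{2B^2np}$ sits exactly halfway between the planted signal contribution and zero. Both error bounds then reduce to sub-Gaussian tail bounds for the noise quadratic form $w^TYw$ with $w\in\mathcal I_{n,l}$ fixed, combined with a union bound over $|\mathcal I_{n,l}|\le (2n)^l$.

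\textbf{Null model (type I).} Under $\mathbb Q$ we have $X=Y$. For fixed $w$ with $\|w\|_0=l$,
\[
w^TYw=\frac{1}{\sqrt{nq}}\Big(\sum_{i}w_i^2 W_{ii}A_{ii}+2\sum_{i<j}w_iw_jW_{ij}A_{ij}\Big)
\]
is a sum of about $l^2/2$ independent centered terms, each of the form $W_{ij}A_{ij}$. I will use the noise quadratic form control (Lemma \ref{noise_quadratic_form_control}, or directly a Bernstein/Hoeffding bound for sparse sub-Gaussian variables). The key observation is that $W_{ij}A_{ij}$ is sub-Gaussian with variance proxy of order $K_2^2$, but its moment generating function satisfies
\[
\mathbb E e^{\lambda W_{ij}A_{ij}}\le 1+q(e^{CK_2^2\lambda^2}-1).
\]
For $\lambda$ in the relevant range this gives an effective variance proxy of order $qK_2^2$. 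The resulting bound is
\[
\mathbb P\big(w^TYw>s\big)\le \exp\!\Big(-\frac{c\,nq\,s^2}{K_2^2 q\,l^2}\Big)=\exp\!\Big(-\frac{c\,n s^2}{K_2^2l^2}\Big),
\]
up to how the $q$ factors cancel; I must track the sparsity carefully here. Plugging in $s=t=\theta l^2/(2B^2np)$ gives an exponent of order $\frac{q\theta^2 l^2}{K_2^2B^4np^2}$, after the appropriate normalization. This is the main obstacle: obtaining the $q$ in the numerator requires the sparse MGF to be used in the regime where $\lambda$ is small relative to $1/K_2$. I would split into the cases $q\theta$ small versus large to verify that the optimizing $\lambda\asymp \frac{\sqrt{nq}\,t}{K_2^2 l^2}$ is admissible, possibly at the cost of constants absorbed into $c$. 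The union bound over $(2n)^l=e^{l\log(2n)}$ then produces the subtracted term $16K_2^2B^4\,l\,np^2\log n$ in the stated exponent.

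\textbf{Planted model (type II).} It suffices to exhibit one good $w$. Since $l\le k$, choose $S\subset\operatorname{supp}(v)$ with $|S|=l$ and set $w_i=\operatorname{sign}(v_i)$ for $i\in S$. By Assumption \ref{bounded_spike_assumption},
\[
w^T\Big(\tfrac{\theta}{np}vv^T\Big)w=\frac{\theta}{np}\Big(\sum_{i\in S}|v_i|\Big)^2\ge \frac{\theta l^2}{B^2np}=2t.
\]
Since $w$ is independent of $Y$ (it depends only on $v$), this gives
\[
\mathbb P(\mathcal T<t)\le \mathbb P\big(w^TYw<-t\big),
\]
which is a single-vector tail with the same exponent as above and no union bound. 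This term is dominated by the type I bound.

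\textbf{Conclusion.} Adding the two bounds yields the displayed estimate. The condition $l\ge \frac{C_1K_2^2B^4}{\theta^2}\frac{np^2\log n}{q}$ with $C_1$ large (relative to $16/c$) makes the exponent at least a constant times $\frac{q\theta^2l^2}{K_2^2B^4np^2}\gtrsim l\log n\to\infty$, so the total error is $o(1)$. Removing the conditioning on $\|v\|_0$ uses Lemma \ref{fixedmasklemma}.
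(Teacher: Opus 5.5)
Your skeleton is the same as the paper's. Under $\mathbb Q$ you take a single-vector tail for $w^TYw$ and union-bound it over $|\mathcal I_{n,l}|\le(2n)^l$; the estimate $(l+1)\log(2n)\le 4l\log n$ then gives the $16K_2^2B^4\,l\,np^2\log n$ term. Under $\mathbb P$ you take the sign-matched $\bar v$ on $l\le\|v\|_0$ support coordinates. Its signal term is at least $\theta l^2/(B^2np)=2t$, so $\mathbb P(\mathcal T<t)\le\mathbb P(\bar v^TY\bar v<-t)$.

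The step that does not go through as written is the noise tail, which you call the main obstacle. It is not an obstacle. The factor $q$ in the numerator of the exponent does not come from exploiting the sparsity of $A$. It comes from the normalization $1/\sqrt{nq}$ together with the crude bound $\|W_{ij}A_{ij}\|_{\psi_2}\le\|W_{ij}\|_{\psi_2}\le K_2$, which is valid for all $\lambda$. This is exactly Lemma \ref{noise_quadratic_form_control}, which you cite. With $a=b=w$, $\|a\|_2^2\|b\|_2^2=l^2$ and deviation $t$, it gives directly
\[
\mathbb P\left(|w^TYw|\ge t\right)\le 2\exp\left(-c\,\frac{nq}{K_2^2l^2}\cdot\frac{\theta^2l^4}{4B^4n^2p^2}\right)=2\exp\left(-\frac{c\,q\theta^2l^2}{4K_2^2B^4np^2}\right).
\]
There is no case split and no admissibility check on $\lambda$. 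The paper uses this bound for both error types.

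The sparse MGF $1+q(e^{CK_2^2\lambda^2}-1)$ can only serve to remove the $q$, i.e. to prove a stronger bound. Your intermediate display $\exp(-cns^2/(K_2^2l^2))$ is that stronger bound, and it is false at the deviation $s=t$ that matters. The effective proxy $qK_2^2$ holds only for small $\lambda$. At large deviations a single entry dominates, since $\mathbb P(Y_{ij}>s)=q\,\mathbb P(W_{ij}>s\sqrt{nq})$ has exponent of order $nqs^2$, not $ns^2$.

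For a concrete case, take Gaussian $W$, $p=n^{-3/4}$, $q=n^{-1/2}$ and $l=\lceil\log^2n\rceil$, which lies in $\mathcal L_D$ with high probability. One off-diagonal entry exceeding $t$, with the symmetric remainder nonnegative, already gives $\mathbb P(w^TYw>t)\ge\exp(-O(\log^8n))$. Your display would instead assert a bound of order $\exp(-c\,n^{1/2}\log^4n)$.

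Your next sentence states the exponent $q\theta^2l^2/(K_2^2B^4np^2)$. That is the right answer, but it does not follow from your display; it follows from the crude bound above. Replace the sparse-MGF detour with Lemma \ref{noise_quadratic_form_control} as stated, and the proof is complete and coincides with the paper's. Removing the conditioning via Lemma \ref{fixedmasklemma} is also unnecessary, since the statement is already conditional on $\|v\|_0$ through the right endpoint of $\mathcal L_D$.
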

Next, we establish our adapted leave-one-out recovery procedure:

\begin{figure}[ht]
  \begin{minipage}[t]{0.6\textwidth}
    \begin{algorithm}[H]
    \caption{Support Recovery by Quadratic Thresholding}
    \label{quad_recovery_alg_recov}
    \begin{algorithmic}
        \STATE Inputs: $t$, $l,$ $\theta>0, p\in[0,1], q \in [0,1],n\in \mathbb N$ and $X$ where $X\sim \mathbb P$. 
        \IF{$l=1$}
            \item Perform \cite[Algorithm 4]{ding2022subexponentialtimealgorithmssparsepca}
        \ELSE
            \FOR{$i\in[n]$}
            \STATE Construct $\mathcal I_{n,l}^{(i)}:=\{w\in\{0,\pm1 \}^n : \|w\|_0=l, w_i=0 \}$
            \STATE Determine initial estimate $v^{*(i)}:= \arg \max\limits_{w\in\mathcal I_{n,l}^{(i)}} w^TX^{(i)}w$
            \IF{$|(Xv^{*(i)})_i|\ge t$}
                \STATE $(v^*)_i:=\text{sign}((Xv^{*(i)})_i)$
            \ELSE
                \STATE $(v^*)_i:=0$
        \ENDIF
        \ENDFOR
        \ENDIF
\STATE \textbf{Output:} $ v^*$
    \end{algorithmic}
\end{algorithm}
  \end{minipage}
  \hfill
  \begin{minipage}[t]{0.38\textwidth}
    \begin{algorithm}[H]
\caption{Sign Recovery by Pairwise Comparison}
\label{sign_recovery_alg}
\begin{algorithmic}
\STATE Inputs: $X\sim\mathbb P$ and $v^*$ and ${v^{*(i)}}_{i\in\operatorname{supp}(v^*)}$ from Algorithm \ref{quad_recovery_alg_recov}.
\STATE 1. Choose $r\in\operatorname{supp}(v^*)$
\STATE 2. Initialize $\widehat v:=0$.
\FOR{$i\in\operatorname{supp}(v^*)$}
\STATE Compute
\(                \widehat s_i:=\operatorname{sign}(v^{*(i)T}Xv^{*(r)}).
           \)
\STATE Compute
\(                \widehat v_i
                :=
                \widehat s_i
                \operatorname{sign}\left((Xv^{*(i)})_i\right)
           \)
\ENDFOR
\STATE \textbf{Output:} $\widehat v$
\end{algorithmic}
\end{algorithm}
\vspace{-.2in}
\begin{algorithm}[H]
\caption{Weak/Strong Recovery}
\label{alg:QT_weak_strong}
\begin{algorithmic}
\STATE Inputs: $X\sim\mathbb P$ \& $v^*$ from Algorithm \ref{quad_recovery_alg_recov}.
\STATE 1. Compute $v_1(X_{\text{supp}(v^*)})$
\STATE \textbf{Output:} $v_1(X_{\text{supp}(v^*)})$
\end{algorithmic}
\end{algorithm}
  \end{minipage}
\end{figure}

\begin{theorem}[Support Recovery by Quadratic Thresholding: Small-Ball Spike]\label{thm:QT_subG_support_recovery}
Assume $np\to\infty$ and that $v$ satisfies Assumption \ref{spike_small_ball_assumption}. Consider the set

$$
\mathcal L_{R_{SG}}
:=
\left[
C_7\frac{C_s^2}{\theta^2}\cdot\frac{\tau_4^2\|v\|_0^2np^2\log(n)}{q},
\|v\|_0-1
\right]~,
$$
where $C_7$ is a sufficiently large absolute constant specified within the proof. Given that $p$ and $q$, as well as $\tau_4\gg 1$, are chosen so that
$|\mathcal L_{R_{SG}}\cap\mathbb Z^+|\geq 1$,
Algorithm \ref{quad_recovery_alg_recov} achieves support recovery of the spike $v$ with threshold $
t:={\theta l^2}/({48C_s^2np}).$
More specifically, when $l\in\mathcal L_{R_{SG}}$,
\begin{align*}
\mathbb P\left(\operatorname{supp}(v^*)=\operatorname{supp}(v)\right)
\geq{}&
1-\exp\left(
-\left(
\frac{\theta^2ql-C_7C_s^2K_2^2\tau_4^2\|v\|_0^2np^2\log(n)}
{9126C_s^2K_2^2\tau_4^2\|v\|_0^2np^2}
\right)
\right)-\frac{C_s}{\tau_4}
-\exp(-c_6\|v\|_0)
=1-o(1).
\end{align*}
\end{theorem}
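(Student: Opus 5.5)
We work conditionally on $\|v\|_0=k$ with $k\to\infty$, and use the leave-one-out structure of Algorithm~\ref{quad_recovery_alg_recov}. For each $i$, the vector $v^{*(i)}$ is a function of $X^{(i)}$ alone. Hence $v^{*(i)}$ is independent of the noise row $(Y_{ij})_{j\ne i}$, conditionally on $v$. We decompose
\[
(Xv^{*(i)})_i=\frac{\theta}{np}v_i\langle v,v^{*(i)}\rangle+(Yv^{*(i)})_i .
\]
Support recovery then reduces to two events, each to hold for all $i$ simultaneously.
\begin{itemize}
\item[(a)] The overlap is large: $|\langle v,v^{*(i)}\rangle|\gtrsim l^2/(C_s^2 k)$ uniformly in $i$.
\item[(b)] The noise is small: $|(Yv^{*(i)})_i|\le t/2$ uniformly in $i$.
\end{itemize}
We also need a lower bound on $|v_i|$ for $i$ in the support.

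\emph{Signal entries.} By Assumption~\ref{spike_small_ball_assumption} and a union bound over the $k$ support coordinates, $\min_{i\in\operatorname{supp}(v)}|v_i|\ge 1/(C_s\tau_4 k)$ except with probability at most $C_s\cdot k\cdot 1/(C_s\tau_4 k)\cdot C_s=O(C_s/\tau_4)$. This produces the $C_s/\tau_4$ term in the bound.

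\emph{Overlap.} We reuse the argument behind Theorem~\ref{thm:quad_subG_distinguish}, applied to $X^{(i)}$. Removing row and column $i$ leaves a spike with support of size $k$ or $k-1$, which is why $l\le \|v\|_0-1$ is required.
\begin{itemize}
\item The small-ball condition plus a Bernstein/Chernoff argument shows that, except with probability $\exp(-c_6k)$, at least half the support entries satisfy $|v_j|\ge 1/(4C_s)$.
\item Plugging in the $l$-sparse sign vector aligned with the top $l$ such entries gives $\max_w w^TX^{(i)}w\ge \theta l^2/(16C_s^2np)-\max_w |w^TY^{(i)}w|$.
\item The noise quadratic form is controlled uniformly over $\mathcal I_{n,l}$ by Lemma~\ref{noise_quadratic_form_control} plus a union bound over $(2n)^l$ vectors. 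This gives $\max_w|w^TYw|\lesssim K_2\, l\sqrt{l\log n/(nq)}$, which is small compared with $\theta l^2/(np)$ once $l\gtrsim C np^2\log n/q$.
\item At the maximizer, $\frac{\theta}{np}\langle v,w^*\rangle^2\ge w^{*T}X^{(i)}w^*-|w^{*T}Yw^*|$. Hence $\langle v,v^{*(i)}\rangle^2\gtrsim l^2/C_s^2$, i.e. $|\langle v,v^{*(i)}\rangle|\gtrsim l/C_s$, which exceeds the required bound.
\end{itemize}

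\emph{Noise term.} Conditional on $X^{(i)}$, the quantity $(Yv^{*(i)})_i=\frac{1}{\sqrt{nq}}\sum_j W_{ij}A_{ij}v^{*(i)}_j$ is a sum of $l$ independent sparse sub-Gaussian terms. Its $\psi_2$-type tail gives
\[
\mathbb P\bigl(|(Yv^{*(i)})_i|>s\bigr)\le 2\exp\bigl(-c\,s^2 nq/(K_2^2 l)\bigr),
\]
and we union bound over $i\in[n]$.

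\emph{Threshold comparison.} For $i\in\operatorname{supp}(v)$ the signal term is at least
\[
\frac{\theta}{np}\cdot\frac{1}{C_s\tau_4 k}\cdot\frac{l}{C_s}=\frac{\theta l}{C_s^2\tau_4 k\,np}.
\]
The threshold $t$ must sit between the noise level and this signal level. Off the support the signal term vanishes, so we need noise $\le t/2$. Setting $s=t/2$, the union-bounded tail becomes
\[
\exp\Bigl(-c\frac{\theta^2 l\,q}{C_s^2K_2^2\tau_4^2k^2np^2}+\log n\Bigr),
\]
which is precisely the stated exponent. It is $o(1)$ exactly when $l\ge C_7 C_s^2\tau_4^2k^2np^2\log n/(\theta^2 q)$, matching $\mathcal L_{R_{SG}}$. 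The threshold written in the statement should be set at this signal-level scale; the constants are tracked in the proof.

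\emph{Main obstacle.} The hardest step is the overlap bound. Its lower bound must hold uniformly in $i$ and with the correct dependence on the small-ball behaviour. The noise form over $\mathcal I_{n,l}$ must be controlled for sparse sub-Gaussian entries, whose heavier Bernstein tails appear when $q$ is small. I would first try the row-wise Bernstein bound for $w^TYw$ with a union bound. Failing that, I would fall back on the operator-norm restricted to $l$-sparse vectors.
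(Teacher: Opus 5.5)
Your proposal is correct and follows essentially the paper's route: leave-one-out maximizers with $w_i=0$, the overlap bound $\langle v^{*(i)},v\rangle^2\gtrsim l^2/C_s^2$ from the oracle sign vector plus a union bound of Lemma~\ref{noise_quadratic_form_control} over $\mathcal I_{n,l}$, a small-ball union bound on $\min_{i\in\operatorname{supp}(v)}|v_i|$, and conditional independence of the $i$-th noise row from $v^{*(i)}$; your point that the threshold must sit at the scale $\theta l/(C_s\tau_4\|v\|_0\,np)$ rather than $\theta l^2/(48C_s^2np)$ matches what the paper's proof actually uses (it separates $\theta l/(48C_s\tau_3np)$ from $\theta l/(96C_s\tau_3np)$ with $\tau_3=\tau_4\|v\|_0$), and the sparse-noise obstacle you worry about does not arise, since the lemma already gives sub-Gaussian tails with $\|Y_{ij}\|_{\psi_2}\le K_2/\sqrt{nq}$. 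Two cosmetic fixes remain: take $\epsilon=1/(\tau_4\|v\|_0)$ in the small-ball step, which gives exactly the $C_s/\tau_4$ term and a single factor of $C_s$ in the signal (your choice $1/(C_s\tau_4 k)$ yields $C_s^4$ rather than $C_s^2$ in the exponent), and for $l>\|v\|_0/2$ use the top $l$ entries of $v^{(i)}$ by magnitude, which still contain $\gtrsim l$ entries with $|v_j|\ge 1/(4C_s)$, instead of ``the top $l$ such entries.''
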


See Section \ref{section:QT_subG_support_recov} for the proof. Notice the above theorem requires $np\to\infty.$ To achieve support recovery with a small-ball spike prior when $np=\Theta(1)$, as well as access greater sparsity regions, one must use Diagonal Thresholding, which we delay to Section \ref{section:DT}. 

Once again, support recovery for a lower-bounded spike could be achieved as an immediate corollary to the theorem above; however, we separate it to achieve optimal sparsity bounds and account for the fact that support recovery can be achieved when $np=\Theta(1)$. We defer the proof details to Section \ref{quad_recov_proof_exact} as it follows closely with its small-ball generalization.

\begin{theorem}[Support Recovery by Quadratic Thresholding: Lower Bounded Spike]\label{quad_form_recovery_thm}
    Assume $v$ satisfies Assumption \ref{bounded_spike_assumption}. Consider the set $$\mathcal L_R:= \left[\frac{C_2K_2^2B^4}{\theta^2}\cdot\frac{np^2\log(n)}{q}, \max\{\|v\|_0-1,1 \}\right]~,$$ where $C_2$ is a large enough constant specified within the proof. Given that $p$ and $q$ are chosen so $|\mathcal L_R~\cap~\mathbb Z^+|\ge1$ and $t=\frac{\theta l}{3B^2np}$, Algorithm \ref{quad_recovery_alg_recov} achieves support recovery of the spike $v.$ More specifically, we have
    \begin{align*}
        \mathbb P\left(\text{supp}(v^*)=\text{supp}(v)\right)\ge 1-\exp\left(-\left(\frac{q\theta^2 l^2-C_2K_2^2B^4np^2\log(n)}{4K_2^2B^4np^2}\right)\right)=1-o(1)~,
    \end{align*}
    for any $l\in\mathcal L_R$ when $|\mathcal L_R\cap\mathbb Z^+|\ge1$.
\end{theorem}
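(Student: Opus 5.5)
We condition on $S:=\operatorname{supp}(v)$ with $k:=\|v\|_0$ fixed, and on the noise $Y=\frac{1}{\sqrt{nq}}W\odot A$. The argument is a leave-one-out analysis. For each $i$, the estimate $v^{*(i)}$ is a function of $X^{(i)}$ only. Since $X^{(i)}$ involves neither row $i$ of $Y$ nor $v_i$, the vector $v^{*(i)}$ is independent of $(Y_{ij})_{j\neq i}$ and of $v_i$. We then write
\[
(Xv^{*(i)})_i=\frac{\theta}{np}v_i\langle v,v^{*(i)}\rangle+\langle Y_{i\cdot},v^{*(i)}\rangle .
\]
The proof reduces to two claims.

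\textbf{Claim (a): signal alignment.} With high probability, $|\langle v,v^{*(i)}\rangle|\ge c\,l/B$ for all $i$. We prove this as in the distinguishability argument behind Theorem \ref{quad_form_distinguish_thm}.

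For the lower bound, take the competitor $w\in\mathcal I^{(i)}_{n,l}$ supported on $l$ coordinates of $S\setminus\{i\}$ with $w_j=\operatorname{sign}(v_j)$. This is possible because $l\le k-1$; when $k=1$ we fall back to $l=1$ and Algorithm 4 of \cite{ding2022subexponentialtimealgorithmssparsepca}. This competitor gives $w^TX^{(i)}w\ge \frac{\theta l^2}{B^2np}-|w^TYw|$.

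For the upper bound, every $w$ satisfies $w^TX^{(i)}w\le\frac{\theta}{np}\langle v,w\rangle^2+\max_{w}|w^TY^{(i)}w|$. Each $w^TYw$ has sub-Gaussian/Bernstein tails with variance proxy $\asymp K_2^2l^2/(nq)$ (Lemma \ref{noise_quadratic_form_control}). A union bound over $|\mathcal I_{n,l}|\le(2n)^l$ vectors and over $i\in[n]$ gives
\[
\max|w^TYw|\lesssim K_2 l\sqrt{l\log n/(nq)}.
\]
This is at most $\frac{\theta l^2}{4B^2np}$ exactly when $l\ge C_2K_2^2B^4np^2\log n/(q\theta^2)$, which is the left endpoint of $\mathcal L_R$. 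Combining the two bounds forces $\frac{\theta}{np}\langle v,v^{*(i)}\rangle^2\ge \frac{\theta l^2}{2B^2np}$. Because $v^{*(i)}$ has at most $l$ nonzeros, this gives $|\langle v,v^{*(i)}\rangle|\ge l/(\sqrt2 B)$.

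\textbf{Claim (b): noise in row $i$.} Conditionally on $X^{(i)}$, the term $\langle Y_{i\cdot},v^{*(i)}\rangle$ is a sum of $l$ independent centered sub-Gaussian variables scaled by $1/\sqrt{nq}$. It therefore has $\psi_2$-norm $\lesssim K_2\sqrt{l/(nq)}$. A union bound over $i$ gives
\[
\max_i|\langle Y_{i\cdot},v^{*(i)}\rangle|\lesssim K_2\sqrt{l\log n/(nq)}.
\]
This is at most $\frac{\theta l}{6B^2np}$ under the same condition on $l$, up to constants.

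\textbf{Thresholding.} For $i\in S$ we have $|v_i|\ge 1/B$, so
\[
|(Xv^{*(i)})_i|\ge \frac{\theta l}{\sqrt2B^2np}-\frac{\theta l}{6B^2np}> t=\frac{\theta l}{3B^2np}.
\]
For $i\notin S$ we have $v_i=0$, so only the noise term remains, and it is below $t$. Hence $\operatorname{supp}(v^*)=S$. The stated tail bound follows by collecting the two exponentials: the $\log n$ from the union bound is absorbed into $C_2$, and the exponent takes the form $(q\theta^2l^2-C_2K_2^2B^4np^2\log n)/(4K_2^2B^4np^2)$.

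\textbf{Main obstacle.} The difficult step is the uniform control of $\max_{w\in\mathcal I_{n,l}}|w^TYw|$ for sparse sub-Gaussian noise. The entries of $W\odot A$ have $\psi_2$-norm that does not scale with $\sqrt q$. A naive sub-Gaussian bound therefore loses a factor in $q$, and we need a Bernstein-type bound that uses the variance $q$ of each masked entry. We would invoke Lemma \ref{noise_quadratic_form_control} (Hanson–Wright style, applied to the $l\times l$ principal submatrix). We would check that its tail, $\exp(-c\min(s^2nq/(K_2^2l^2),\ s\sqrt{nq}/(K_2^2)))$, beats the $(2n)^l$ entropy in the regime $l\gtrsim np^2\log n/q$. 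If the sub-exponential branch dominates, we would truncate on the event that each row of $A$ restricted to the $l$-set has few nonzeros.
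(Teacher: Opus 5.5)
Your proof is correct and is essentially the paper's leave-one-out argument. The only structural difference is that you fold the paper's event $\mathcal B_1=\{T^{(i)}\ge t\ \forall i\}$ into the uniform noise event $\mathcal B_2$, which already controls the competitor's noise term; this even improves the alignment bound from $l/(2B)$ to $l/(\sqrt{2}\,B)$.

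The ``main obstacle'' you describe is not real. For each fixed $w$, the form $w^TYw=\sum_j w_j^2Y_{jj}+2\sum_{j<k}w_jw_kY_{jk}$ (and likewise $w^TY^{(i)}w$) is \emph{linear} in the independent upper-triangular entries. So Lemma~\ref{noise_quadratic_form_control} is a Hoeffding-type bound with the purely sub-Gaussian tail $2\exp(-c\,nq\,s^2/(K_2^2l^2))$. There is no Hanson--Wright sub-exponential branch; that arises for $x^TMx$ with $x$ random. The factor $1/q$ you lose by using $\|W_{ij}A_{ij}\|_{\psi_2}\le K_2$ is exactly what produces the left endpoint $np^2\log(n)/q$ of $\mathcal L_R$. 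So no Bernstein refinement or truncation is needed.

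One small inaccuracy, shared with the paper's proof, concerns the tail bound. The row-noise union bound has exponent of order $q\theta^2l/(K_2^2B^4np^2)$, linear in $l$. Collecting exponentials therefore gives $1-\exp\bigl(-c(q\theta^2l-CK_2^2B^4np^2\log n)/(K_2^2B^4np^2)\bigr)=1-o(1)$, not literally the displayed exponent with $l^2$.
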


Refer to Section \ref{quad_form_interpretation} to see how $q\gtrsim p\log(n)$ ensures distinguishability and recovery. In the Rademacher case, we use the following corollary for exact recovery. 
\begin{corollary}[Exact Recovery by Quadratic Thresholding ]\label{corollary:QT_exact}
    Assume $v$ satisfies Assumption \ref{rademacher_assumption} (Rademacher Spike). Assuming Algorithm \ref{quad_recovery_alg_recov} produces $v^*$, which, by Theorem \ref{quad_form_recovery_thm}, achieves support recovery, Algorithm \ref{sign_recovery_alg} achieves exact recovery. More specifically, we have 
    \begin{align*}
        \mathbb P(\mathcal R(\widehat v,v)=1)\ge 1-\exp\left(-\left(\frac{q\theta^2 l^2-C_8K_2^2np^2\log(n)}{4K_2^2np^2}\right)\right)\ge1-n^{-C_8/4}=1-o(1)~,
    \end{align*}
    where $\widehat v$ is the output of Algorithm \ref{sign_recovery_alg} under the conditions of Theorem \ref{quad_form_recovery_thm} with $B=1.$
\end{corollary}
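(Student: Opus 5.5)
We would condition on the event of Theorem \ref{quad_form_recovery_thm} (with $B=1$), namely $\operatorname{supp}(v^*)=\operatorname{supp}(v)=:S$, and additionally on the good event used inside that proof. On that event, for every $i\in S$ the leave-one-out maximizer $v^{*(i)}$ is \emph{aligned} with the spike: $\operatorname{supp}(v^{*(i)})\subseteq S\setminus\{i\}$, and $v^{*(i)}_j=\omega_i v_j$ for all $j\in\operatorname{supp}(v^{*(i)})$, for some global sign $\omega_i\in\{\pm1\}$. If the proof of Theorem \ref{quad_form_recovery_thm} only yields the weaker statement $\langle v^{*(i)},v\rangle\ge (1-\epsilon)l$ in absolute value, we would work with that instead. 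In either case, the threshold step certifies that $\operatorname{sign}((Xv^{*(i)})_i)=\omega_i v_i$. Indeed, $(Xv^{*(i)})_i=\frac{\theta}{np}v_i\langle v,v^{*(i)}\rangle+(Yv^{*(i)})_i$, and the noise term is controlled by the leave-one-out independence: $v^{*(i)}$ is a function of $X^{(i)}$ only, so $(Yv^{*(i)})_i$ is a sum of $l$ independent sparse sub-Gaussians. It is therefore $O(K_2\sqrt{l\log n/(nq)})$ uniformly in $i$, which is smaller than $\frac{\theta l}{3np}$ precisely because $l\in\mathcal L_R$.

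Since $\widehat v_i=\widehat s_i\,\omega_i v_i$, it therefore suffices to show $\widehat s_i=\omega_i\omega_r$ for every $i\in S$, where $r$ is the fixed reference index. Then $\widehat v=\omega_r v$, and hence $\mathcal R(\widehat v,v)=1$. We decompose
\[
v^{*(i)T}Xv^{*(r)}=\frac{\theta}{np}\langle v^{*(i)},v\rangle\langle v,v^{*(r)}\rangle+v^{*(i)T}Yv^{*(r)}.
\]
The signal term equals $\frac{\theta}{np}\omega_i\omega_r\,l^2$ in the aligned case, or at least $(1-\epsilon)^2\frac{\theta l^2}{np}$ in magnitude with sign $\omega_i\omega_r$ in the approximate case.

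For the noise term we will not use independence, since $v^{*(i)}$ and $v^{*(r)}$ both depend on $Y$. Instead we bound it uniformly, the same way as in the distinguishability argument. We would use the polarization identity $w^TYu=\frac14\left[(w+u)^TY(w+u)-(w-u)^TY(w-u)\right]$, where $w\pm u$ are vectors with entries in $\{0,\pm1,\pm2\}$ and support at most $2l$. Lemma \ref{noise_quadratic_form_control}, combined with a union bound over at most $(5n)^{2l}$ such vectors, gives
\[
\max_{w,u\in\mathcal I_{n,l}}|w^TYu|\lesssim K_2\,l\sqrt{\frac{l\log n}{nq}}
\]
with probability $1-\exp(-c\,l\log n)$. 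Comparing this with $\frac{\theta l^2}{np}$, the noise is dominated exactly when $q\theta^2 l\gtrsim K_2^2np^2\log n$. This is the condition defining $\mathcal L_R$, possibly with the constant enlarged to $C_8$, and it produces the stated tail $\exp\left(-\frac{q\theta^2l^2-C_8K_2^2np^2\log n}{4K_2^2np^2}\right)$ after absorbing constants. The bound $1-n^{-C_8/4}$ then follows by plugging in the left endpoint of $\mathcal L_R$.

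The main obstacle is the first step: extracting from the proof of Theorem \ref{quad_form_recovery_thm} that every $v^{*(i)}$ is essentially aligned with $v$ up to a global sign. Support recovery alone does not state this. We would argue it directly by comparison. The maximizer beats the aligned competitor $w_0$ (any $l$ coordinates of $S\setminus\{i\}$ carrying the signs of $v$), so
\[
\frac{\theta}{np}\langle w,v\rangle^2\ge\frac{\theta l^2}{np}-2\max_{w}|w^TY^{(i)}w|.
\]
Using the same uniform noise bound, this gives $|\langle v^{*(i)},v\rangle|\ge l(1-\epsilon)$ with $\epsilon\to0$ in the regime above. That approximate alignment is all the sign argument needs. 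A single union bound over $i\in S$ costs only a factor $n$, which is absorbed into the exponent.
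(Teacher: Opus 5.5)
Your proposal is correct and follows essentially the same route as the paper. The paper likewise reads off $\operatorname{sgn}((Xv^{*(i)})_i)=\operatorname{sgn}(v_i)\,s_i$, with $s_i=\operatorname{sgn}\langle v^{*(i)},v\rangle$, from \eqref{sign_start}. It lower-bounds the signal in $v^{*(i)T}Xv^{*(r)}$ using the constant-fraction alignment $|\langle v^{*(i)},v\rangle|\ge l/2$, which comes from exactly your comparison argument (the events $\mathcal B_1\cap\mathcal B_2$ behind \eqref{QT_recov_C_2_tail}). It then controls the bilinear noise by a union bound over all pairs $w,u\in\mathcal I_{n,l}$ (its event $\mathcal B_3$), rather than by independence.

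The only difference is your polarization step. It is valid but unnecessary, because Lemma \ref{noise_quadratic_form_control} already applies directly to bilinear forms $a^TYb$ with deterministic $a,b$. Also note that at the left endpoint of $\mathcal L_R$ your comparison gives a small constant $\epsilon$, not $\epsilon\to0$; as you say, that is enough.
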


See Section \ref{section:Rademacher_sign_recovery} for the proof. If the spike entries are lower bounded, we can prove the following corollary for weak and strong recovery: 

\begin{corollary}[Weak/Strong Recovery by Quadratic Thresholding: Lower Bounded Spike]\label{quad_form_weak_strong_corollary}
Suppose we have recovered $\text{supp($v$)}$ through Algorithm \ref{quad_recovery_alg_recov}. Given that $v$ satisfies Assumption \ref{bounded_spike_assumption}, Algorithm \ref{alg:QT_weak_strong} achieves weak or strong recovery under the conditions of Theorem \ref{quad_form_recovery_thm}. More specifically, given $v_1(X_{\text{supp}(v)})$ is the leading eigenvector of $X_{\text{supp}(v)},$ we observe two cases. First, if $np\to\infty$, we have, for any $\tau_7>0$ and any $\epsilon \in\left(\frac{2C_3K_2B^2}{C_4\theta}\left(\sqrt{\frac{p}{q}}+\sqrt{\frac{\tau_7}{nq}}\right)(1+o(1)),~1\right)$,
\begin{align*}
\mathbb P\left(\left\langle v_1(X_{\text{supp}(v)}), \frac{1}{\|v\|_2}v\right\rangle^2\le1-\epsilon\right)
\le~
2e^{-\tau_7}+(np)^{-2}~.
\end{align*}
Next, for $np=O(1)$ and any $\tau_5\to\infty,$ we have, for any $\tau_7>0$ and any $\epsilon\in\left(
\frac{2C_3K_2B^2}{C_4\theta}
\left(
\sqrt{\frac{\tau_5}{nq}}+
\sqrt{\frac{\tau_7}{nq}}
\right),~1\right)$,
\begin{align*}
\mathbb P\left(\left\langle v_1(X_{\text{supp}(v)}), \frac{1}{\|v\|_2}v\right\rangle^2\le1-\epsilon\right)
\le
2e^{-\tau_7}
+
\left(\frac{e np}{\lceil\tau_5\rceil}\right)^{\lceil\tau_5\rceil}~.
\end{align*}
See the proof for the definitions of positive constants $C_3$ and $C_4$.
\end{corollary}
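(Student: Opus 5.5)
On the event that Algorithm \ref{quad_recovery_alg_recov} returns $\operatorname{supp}(v^*)=\operatorname{supp}(v)=:S$ (Theorem \ref{quad_form_recovery_thm}), I would treat $X_S$ as a deformed matrix and apply Davis--Kahan. Write
\[X_S=\frac{\theta}{np}v_Sv_S^T+Y_S,\]
with $v_S=v|_S$ and $Y_S=\frac{1}{\sqrt{nq}}(W\odot A)_{S\times S}$. The support $S$ depends only on $b$, not on $(W,A)$, so conditional on $b$ the block $Y_S$ is a $k\times k$ sparse sub-Gaussian Wigner matrix with $k=\|v\|_0$. There is one subtlety: the support selection is done from $X$ itself. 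I would handle it by bounding
\[\mathbb P(\langle\cdot\rangle^2\le 1-\epsilon)\le \mathbb P(\text{bad event for }X_{S}\text{ with } S=\operatorname{supp}(v)),\]
which is valid because the corollary is stated as ``given we have recovered the support.'' The failure probability of support recovery is already accounted for by Theorem \ref{quad_form_recovery_thm}.

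\textbf{Step 1 (signal eigenvalue).} By Assumption \ref{bounded_spike_assumption}, $\|v_S\|_2^2\ge k/B^2$. So the spike has eigenvalue
\[\lambda:=\frac{\theta\|v\|_2^2}{np}\ge \frac{\theta k}{B^2np}.\]
When $np\to\infty$, Lemma \ref{fixedmasklemma} gives $k\ge np(1-o(1))$ except on an event of probability $(np)^{-2}$, hence $\lambda\ge C_4\theta/B^2\cdot(1+o(1))$. When $np=O(1)$, I only need $k\ge1$, so $\lambda\ge\theta/(B^2np)\gtrsim\theta$. In that case the role of the deviation event is played instead by the upper tail $\mathbb P(k\ge\tau_5)\le\binom{n}{\lceil\tau_5\rceil}p^{\lceil\tau_5\rceil}\le (enp/\lceil\tau_5\rceil)^{\lceil\tau_5\rceil}$, which is needed to control the noise norm below.

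\textbf{Step 2 (noise norm).} Condition on $b$ and apply a standard operator-norm bound for a $k\times k$ symmetric matrix with independent entries $W_{ij}A_{ij}$, using an $\varepsilon$-net plus sub-Gaussian Bernstein. This gives, with probability at least $1-2e^{-\tau_7}$,
\[\|(W\odot A)_S\|\le C_3K_2\bigl(\sqrt{kq}+\sqrt{k}+\sqrt{\tau_7}\bigr).\]
In the crude version, the net argument yields $C_3K_2(\sqrt{k}+\sqrt{\tau_7})$. Dividing by $\sqrt{nq}$ then gives
\[\|Y_S\|\le C_3K_2\Bigl(\sqrt{\tfrac{k}{nq}}+\sqrt{\tfrac{\tau_7}{nq}}\Bigr).\]
With $k\le np(1+o(1))$ the first term is $\sqrt{p/q}\,(1+o(1))$. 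With $k\le\tau_5$ it is $\sqrt{\tau_5/(nq)}$. This is exactly the form appearing in the $\epsilon$-window.

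\textbf{Step 3 (Davis--Kahan).} The spike has eigengap $\lambda$, so
\[\sin\angle\bigl(v_1(X_S),v_S\bigr)\le \frac{2\|Y_S\|}{\lambda}.\]
Therefore
\[1-\langle v_1,v/\|v\|\rangle^2\le \frac{2\|Y_S\|}{\lambda}\le \frac{2C_3K_2B^2}{C_4\theta}\bigl(\cdots\bigr),\]
using the sine (not squared) form with the bound $\sin^2\le\sin$. For any $\epsilon$ above this quantity, the bad event is contained in the union of the noise-norm failure and the support-size deviation. That gives the two stated tail bounds.

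The main obstacle is the constant bookkeeping in Step 2: obtaining a sub-Gaussian operator-norm bound in terms of $\sqrt{k}$ rather than $\sqrt{n}$ that holds uniformly in $q$. I would use the net bound, which does not exploit sparsity, since the stated window has $\sqrt{p/q}$ and thus only requires this. Strong recovery then follows whenever $p\ll q$ and $\tau_7\ll nq$, which is implied by the hypothesis $q\gtrsim np^2\log n/l$ of Theorem \ref{quad_form_recovery_thm}.
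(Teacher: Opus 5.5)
Your proposal is correct, and apart from one step it is the paper's proof. The paper also conditions on $S=\operatorname{supp}(v)$, so that $Y_S$ is a $k\times k$ sub-Gaussian block. It lower bounds $\|v\|_2^2/(np)\ge C_4/B^2$ using Assumption~\ref{bounded_spike_assumption} together with Lemma~\ref{fixedmasklemma} (or $k\ge 1$). It uses the same $\varepsilon$-net bound $\|Y_S\|\le C_3K_2(\sqrt{k}+\sqrt{u})/\sqrt{nq}$, and it splits the two regimes via Lemma~\ref{fixedmasklemma} and the binomial upper tail exactly as you do.

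The difference is the perturbation step. Instead of Davis--Kahan, the paper compares Rayleigh quotients. From $v_1(X_S)^TX_Sv_1(X_S)\ge \|v\|_2^{-2}v^TX_Sv$ one gets $\lambda(1-\langle v_1(X_S),v/\|v\|_2\rangle^2)\le v_1(X_S)^TY_Sv_1(X_S)-\|v\|_2^{-2}v^TY_Sv\le 2\|Y_S\|$. This is exactly your inequality $1-\langle\cdot\rangle^2\le 2\|Y_S\|/\lambda$, obtained in one line without an external theorem.

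Your route needs a citation, namely the Yu--Wang--Samworth form $\sin\angle\le 2\|Y_S\|/\lambda$, which applies because the rank-one spike has eigengap exactly $\lambda$. In exchange it delivers more than you use. If you keep the square instead of applying $\sin^2\le\sin$, you get $1-\langle\cdot\rangle^2\le 4\|Y_S\|^2/\lambda^2$. The admissible $\epsilon$ could then be taken of order $\frac{B^4K_2^2}{\theta^2}(\frac{p}{q}+\frac{\tau_7}{nq})$, with $\tau_5/(nq)$ replacing $p/q$ when $np=O(1)$, i.e.\ roughly the square of the stated threshold. The paper's crude Rayleigh-quotient bound yields only the linear rate unless one further expands $v_1(X_S)$ around $v/\|v\|_2$.
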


See Section \ref{quad_recov_strong_proof} for the proof. 

\begin{remark}
    Notice if we are interested in the optimal regime of strong recovery ($\epsilon \to 0$), then, under our assumption $q\gtrsim p\log(n),$ we have $\frac{p}{q}\lesssim\frac{1}{\log n}\to0$ and $\frac{\max \{\tau_5, \tau_7 \}}{nq}\lesssim\frac{\max \{\tau_5, \tau_7 \}}{\log(n)}\to0$ if we choose $1\ll\max \{\tau_5, \tau_7 \}\ll\log(n)$. Thus, $\epsilon$ can be chosen to be arbitrarily small at the bottom range of its interval while still ensuring the tail bounds converge to $0.$
\end{remark}

We can also achieve weak/strong recovery for a small-ball spike prior, but we defer this result to Corollary \ref{cor:DT_recovery} as it requires a guarantee on Diagonal Thresholding established in Section \ref{section:DT}.

\subsection{Interpretation of Sparsity Conditions}\label{quad_form_interpretation}

For brevity, let us take a closer look at the interval $\mathcal L_R$\footnote{Analysis on $\mathcal L_D$ as well as Assumption \ref{spike_small_ball_assumption} is almost identical up to constants and tuning parameter $\tau_4$.} under Assumption \ref{bounded_spike_assumption} to see how $l$ can be chosen to balance sparsity and runtime. 

\subsubsection{Infinite Spike Support Size}
When $np\to\infty$, $\|v\|_0=np+k$ where $k\in[-\log(np)\sqrt{np}, \log(np)\sqrt{np}]$ by Lemma \ref{fixedmasklemma}. Thus, for large enough $n$, we find
$$\mathcal L_R= \left[\frac{C_2K_2^2B^4}{\theta^2}\cdot\frac{np^2\log(n)}{q}, np+k-1\right]\supset\left[\frac{C_2K_2^2B^4}{\theta^2}\cdot\frac{np^2\log(n)}{q}, \frac{1}{2}np\right]~.$$
Thus, we need $q\gtrsim\frac{C_2K_2^2B^4}{\theta^2}{p\log(n)}$\footnote{We include the $\gtrsim$ to ensure an integer exists in the original interval}. When $p>\frac{\theta^2}{C_1K_2^2B^4}\frac{1}{\log(n)},$ the algorithm fails as this would require $q$ to be greater than $1.$ In the regime $\frac{1}{\sqrt{n\log(n)}}\lesssim p \lesssim \frac{1}{\log(n)},$ the algorithm can succeed, but only subexponentially as the left side of the interval $\mathcal L_R$ grows with $n.$ Thus, the interval may be nonempty, but $l$ must also grow with $n.$ For $p\lesssim\frac{1}{\sqrt{n\log(n)}},$ the left side of $\mathcal L_R$ is $O(1/q),$ and therefore we can perform a polynomial time algorithm for a suitable choice of $q$. 

\subsubsection{Constant Spike Support Size}

In this regime, the spike is as sparse as possible with $np=\gamma=\Theta(1)$. Since $\|v\|_0\sim\operatorname{Bin}(n,p)$, we have $\|v\|_0=O_P(1)$; that is, the realized support size does not grow with $n$. Therefore,

$$
\mathcal L_R= \left[\frac{C_2K_2^2B^4}{\theta^2}\cdot\frac{\gamma^2\log(n)}{nq}, \max\{\|v\|_0-1,1\}\right]~.
$$

Thus, whenever the interval is nonempty, its upper bound is $\Theta(1)$, regardless of the particular realization of the support. We can therefore always choose $l=\Theta(1)$, meaning that the resulting algorithm runs in polynomial time. However, notice this regime restricts $q$ as it cannot fall below the spectral threshold of $\frac{\log(n)}{n}$ (but can meet it with $
q\gtrsim\frac{C_2K_2^2B^4\gamma^2}{\theta^2\max\{(\|v\|_0-1),1\}}\cdot\frac{\log(n)}{n}
$ whenever $\|v\|_0\ge1$).

$$
$$

\subsection{Quadratic Form Control for Sparse, Sub-Gaussian Noise}

Prior to proving Theorems~\ref{quad_form_distinguish_thm} and~\ref{quad_form_recovery_thm}, we establish the following lemma to control the quadratic form involving our sparse noise matrix. 

\begin{lemma}\label{noise_quadratic_form_control}
Given deterministic vectors $a,b\in\mathbb R^n$, we have
\[    \mathbb P (|a^TYb|\ge t)
    \le
    2\exp\left(-c\frac{nq}{K_2^2\|a\|_2^2\|b\|_2^2}t^2 \right),
   \]
for $t>0$ where $K_2=\max_{{k,j}\in[n^2]}\|W_{kj}\|_{\psi_2}\ge \max_{{k,j}\in[n^2]}\|(W\odot A)_{kj}\|_{\psi_2}$ and $c>0$ is an absolute constant. Additionally, the same bound holds for $Y^{(i)}$ where the $i$-th row and column is removed from $Y$.
\end{lemma}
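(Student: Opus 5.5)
The plan is to write $a^TYb$ as a sum of independent, centered, sub-Gaussian random variables and apply the general Hoeffding inequality for sub-Gaussian sums. By symmetry of $W\odot A$, only the upper-triangular entries are independent, so I would group terms as
\begin{align*}
a^TYb=\frac{1}{\sqrt{nq}}\Big(\sum_{k<j}(a_kb_j+a_jb_k)\,W_{kj}A_{kj}+\sum_{k}a_kb_k\,W_{kk}A_{kk}\Big)~.
\end{align*}
The summands $Z_{kj}:=W_{kj}A_{kj}$ for $k\le j$ are independent by Assumptions \ref{wigner_subg_assumption} and \ref{noise_mask_assumption}. Each is centered, since $W$ and $A$ are independent and $\mathbb E[W_{kj}]=0$. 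Each is sub-Gaussian with $\|Z_{kj}\|_{\psi_2}\le\|W_{kj}\|_{\psi_2}\le K_2$. This holds because $|Z_{kj}|\le|W_{kj}|$ pointwise, so $\mathbb E\exp(Z_{kj}^2/K^2)\le\mathbb E\exp(W_{kj}^2/K^2)$ for every $K$. This gives the claimed comparison $K_2\ge\max\|(W\odot A)_{kj}\|_{\psi_2}$.

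Next I would apply Hoeffding's inequality for sums of independent centered sub-Gaussian variables (e.g.\ Vershynin, Thm.~2.6.3). For deterministic coefficients $c_{kj}$ it gives
\[
\mathbb P\Big(\Big|\sum_{k\le j} c_{kj}Z_{kj}\Big|\ge s\Big)\le 2\exp\Big(-\frac{c\,s^2}{K_2^2\sum_{k\le j}c_{kj}^2}\Big)~.
\]
The coefficient sum is bounded using $(x+y)^2\le 2x^2+2y^2$:
\[
\sum_{k<j}(a_kb_j+a_jb_k)^2+\sum_k a_k^2b_k^2\le 2\sum_{k,j}a_k^2b_j^2=2\|a\|_2^2\|b\|_2^2~.
\]
Taking $s=\sqrt{nq}\,t$ then yields $2\exp\big(-c'\,nq\,t^2/(K_2^2\|a\|_2^2\|b\|_2^2)\big)$, where the factor $2$ is absorbed into the absolute constant.

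For $Y^{(i)}$, the $i$-th row and column are zero. This amounts to setting $c_{kj}=0$ whenever $i\in\{k,j\}$, which only decreases $\sum c_{kj}^2$, so the same bound holds verbatim. Equivalently, $a^TY^{(i)}b=\tilde a^TY\tilde b$ with $\tilde a,\tilde b$ equal to $a,b$ with the $i$-th coordinate zeroed, and $\|\tilde a\|_2\le\|a\|_2$, $\|\tilde b\|_2\le\|b\|_2$.

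No step here is a serious obstacle. The only points requiring care are handling the symmetric dependence by pairing the $(k,j)$ and $(j,k)$ terms, and justifying that masking by $A_{kj}$ does not increase the $\psi_2$ norm. I would note that this argument deliberately does not exploit any variance gain from $q$: the $nq$ in the exponent comes purely from the normalization of $Y$.
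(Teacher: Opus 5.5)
Your proposal is correct and follows essentially the same route as the paper: you regroup $a^TYb$ over the independent upper-triangular entries, bound the coefficient sum by a constant times $\|a\|_2^2\|b\|_2^2$ using $(x+y)^2\le 2x^2+2y^2$, apply the sub-Gaussian Hoeffding bound, and handle $Y^{(i)}$ identically. The only difference is that the paper first bounds the $\psi_2$-norm of the sum and then converts it to a tail bound, which is equivalent to your direct use of Hoeffding. Your explicit justifications that masking by $A_{kj}\in\{0,1\}$ cannot increase the $\psi_2$ norm, and that $a^TY^{(i)}b=\tilde a^TY\tilde b$, fill in steps the paper leaves implicit.
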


\begin{proof}
We follow a similar procedure to Theorem 4.4.3 in \cite{vershynin2018high}, where $c$ and $C$ denote positive absolute constants. Since $Y$ is symmetric, we regroup the quadratic form over the independent upper-triangular entries:
\begin{align*}
    a^TYb
    =
    \sum_{i<j}(a_i b_j+a_j b_i)Y_{ij}
    +
    \sum_i a_i b_iY_{ii}.
\end{align*}
By Proposition 2.7.1 in \cite{vershynin2018high}, the entries $Y_{ij}=\frac{1}{\sqrt{nq}}(W\odot A)_{ij}$ are sub-Gaussian with
\(    \|Y_{ij}\|_{\psi_2}
    \le
    \frac{K_2}{\sqrt{nq}}.
   \)
The terms in the above sum are independent, since they depend on distinct entries of the upper triangle of $Y$. Thus, another application of Proposition 2.7.1 in \cite{vershynin2018high} gives
\begin{align*}
\|a^TYb\|_{\psi_2}^2
&\le
\frac{CK_2^2}{nq}
\left(
\sum_{i<j}(a_i b_j+a_j b_i)^2
+
\sum_i(a_i b_i)^2
\right) 
\le \frac{3CK_2^2}{nq}\|a\|_2^2\|b\|_2^2~,
\end{align*}
where we used the elementary inequality $(x+y)^2\le 2x^2+2y^2$
and
\(    \sum_i(a_i b_i)^2
    \le
    \|a\|_2^2\|b\|_2^2.
   \)
Applying Proposition 2.2.6(i) of \cite{vershynin2018high} yields
the final result.

The same argument applies to $Y^{(i)}$, since removing the $i$-th row and column leaves a symmetric matrix whose upper-triangular entries remain independent and satisfy $\max_{\{k,j\}\in[n^2]}
\|((W\odot A)^{(i)})_{kj}\|_{\psi_2}
\le K_2.$
\end{proof}

\subsection{Quadratic Thresholding Distinguishability Proof: Small-Ball Spike}\label{section:subg_dist}

This proof follows closely with Proof \ref{quad_dist_proof}, but adapts the procedure for sub-Gaussian spike entries.

\begin{proof-of-theorem}[\ref{thm:quad_subG_distinguish}] We show one can distinguish by using the quadratic thresholding procedure, but for a limited region of sparsity.  \\

{\textit{Type I Error.}} Assume $X \sim \mathbb{P}$. Consider the vector $\bar{v} \in \mathcal{I}_{n,l}$ that correctly guesses the signs of the top $l$ entries of the true spike $v.$ Then, we have
\begin{align*}
    \bar{v}^{T}X\bar{v} =~& \frac{\theta}{np}\langle \bar{v}, v \rangle^2 + \bar{v}^{T}Y\bar{v} ~.
\end{align*}
Therefore, we require a high-probability lower bound on $\langle \bar{v}, v \rangle = \sum_{i \in \text{supp}(\bar{v})} |v_i|$.

By the small-ball assumption, setting $a=1/(2C_s)$ gives $\mathbb{P}\left\{|\widetilde v_i|>a\right\}\ge 1/2,$
where the inequality follows by substituting $a$ into $\mathbb{P}(|\widetilde v_i|\le a)\le C_sa=1/2.$ Conditional on $\|v\|_0$, define $ N_a:=\left|\left\{i\in\operatorname{supp}(v):|\widetilde v_i|>a\right\}\right|.$ Assuming the spike entries are independent, a binomial Chernoff bound gives, for a sufficiently large universal constant $c_6$, $\mathbb{P}\left(N_a<{\|v\|_0}/{4}\right)\le \exp(-c_6\|v\|_0).$
Therefore, with probability at least $1-\exp(-c_6\|v\|_0)$,
\begin{align*}
\sum_{i\in\operatorname{supp}(v)}|v_i|
\geq
aN_a
\geq
\frac{\|v\|_0}{8C_s}.
\end{align*}
Since $\bar{v}$ contains the $l$ largest entries of $v$ in magnitude and $l\leq \|v\|_0$, the sum of the $l$ largest absolute entries is at least the corresponding fraction of the sum over the entire support. Hence,
\begin{align*}
\sum_{i\in\operatorname{supp}(\bar{v})}|v_i|
\geq
\frac{l}{K}\sum_{i\in\operatorname{supp}(v)}|v_i|
\geq
\frac{l}{8C_s}.
\end{align*}
Thus, with probability at least $1-\exp(-c_6\|v\|_0)$, we have:
\begin{align}
\langle \bar{v}, v \rangle \geq \frac{l}{8C_s}~.\label{eqn:QT_subG_innerprod_lower_bound}
\end{align}
Squaring this relation guarantees that with high probability, the signal term satisfies:
\begin{align*}
\frac{\theta}{np}\langle \bar{v}, v \rangle^2 \geq \frac{\theta}{np} \left(\frac{l}{8C_s}\right)^2 = \frac{\theta l^2}{64 C_s^2 np} =: 2t~.
\end{align*}
Thus, we set $\mathcal{T} := \max\limits_{w \in \mathcal{I}_{n,l}} w^T X w$ and $t := \frac{\theta l^2}{128 C_s^2 np}$ to see:
\begin{align*}
\mathbb{P}\left(\mathcal{T} < t\right) \leq~& \mathbb{P}\left(\bar{v}^{T}X\bar{v} < t\right) \\
\leq~& \mathbb{P}\left(2t + \bar{v}^{T}Y\bar{v} < t\right) + \exp(-c_6\|v\|_0) \\
=~& \mathbb{P}\left(\bar{v}^{T}Y\bar{v} < -t\right) + \exp(-c_6\|v\|_0) \\
\leq~& 2\exp\left(-c\frac{nq}{K_2^2 |\bar{v}|_0^2}t^2\right) + \exp(-c_6\|v\|_0) \\
=~& 2\exp\left(-c\frac{nq}{K_2^2 l^2}\left(\frac{\theta l^2}{128 C_s^2 np}\right)^2\right) + \exp(-c_6\|v\|_0) \\
=& 2\exp\left(-c'\frac{q \theta^2 l^2}{K_2^2 C_s^4 np^2}\right) + \exp(-c_6\|v\|_0) ~,
\end{align*}
where we applied Lemma~\ref{noise_quadratic_form_control} with $\|\bar{v}\|_0 = l$.

{\textit{Type II Error.}} Next, we consider pulling from the null distribution $\mathbb{Q}$ and mistakenly labeling it as the planted distribution. We consider any vector $\tilde{v} \in\mathcal{I}_{n,l}$ where $\|\tilde{v}\|_0 = l$. Using Lemma~\ref{noise_quadratic_form_control}, we see:
\begin{align*}
\mathbb{Q}\left(\tilde{v}^T X \tilde{v} \ge t\right) = \mathbb{Q}\left(\tilde{v}^T Y \tilde{v} \ge t\right) \le 2\exp\left(-c'\frac{q \theta^2 l^2}{K_2^2 C_s^4 np^2}\right) ~.
\end{align*}
Taking a union bound over the set $\mathcal{I}_{n,l},$ we find:
\begin{align*}
\mathbb{Q}\left(\mathcal{T} \ge t\right) \leq& 2|\mathcal{I}_{n,l}|\exp\left(-c'\frac{q \theta^2 l^2}{K_2^2 C_s^4 np^2}\right) \\
=& 2\binom{n}{l}2^{l}\exp\left(-c'\frac{q \theta^2 l^2}{K_2^2 C_s^4 np^2}\right) \\
\leq& (2n)^{l+1}\exp\left(-c'\frac{q \theta^2 l^2}{K_2^2 C_s^4 np^2}\right) \\
=& \exp\left(-\left(\frac{c' q \theta^2 l^2 - K_2^2 C_s^4 np^2(l+1)\log(2n)}{K_2^2 C_s^4 np^2}\right)\right) \\
\leq& \exp\left(-\left(\frac{c' q \theta^2 l^2 - 4K_2^2 C_s^4 \cdot l \cdot np^2\log(n)}{K_2^2 C_s^4 np^2}\right)\right) ~,
\end{align*}
where we used the fact that $l > 1$ in the final inequality.

{\textit{Conclusion.}} Noticing that the Type I and Type II error tails are controlled by the same exponential term, with the Type I error having the additional term $\exp(-c_6\|v\|_0)$ from the small-ball argument, we factor out $l \ge 1$ from the numerator exponent to establish the lower bound on $l$ that allows both the Type I and Type II errors to converge asymptotically to $0$. Thus, for sufficiently large $C_5$, we require:
\begin{align*}
l \in \left[\frac{C_5K_2^2C_s^4}{\theta^2}\cdot \frac{np^2\log(n)}{q}, \|v\|_0\right] ~,
\end{align*}
provided that $\|v\|_0\to\infty$ so that $\exp(-c_6\|v\|_0)=o(1)$.
\end{proof-of-theorem}

\subsection{Quadratic Thresholding Support Recovery: Small-Ball Spike}\label{section:QT_subG_support_recov}

We follow a very similar proof approach to Section \ref{quad_recov_proof_exact} but update for Assumption \ref{spike_small_ball_assumption}.

\begin{proof-of-theorem}[\ref{thm:QT_subG_support_recovery}]
Before proceeding with the proof, we first note the two high-probability properties of the sub-Gaussian spike vector that will replace the corresponding lower bounded spike assumptions used in the proof of Theorem~\ref{quad_form_recovery_thm}.

First, we repeat the analysis leading to \eqref{eqn:QT_subG_innerprod_lower_bound}, but we need to adapt it based on this current procedure. Define the event

$$
\mathcal E_{\mathrm{SB}}
:=
\left\{
\left|
\left\{
j\in\operatorname{supp}(v):
|v_j|>\frac{1}{2C_s}
\right\}
\right|
\geq \frac{\|v\|_0}{4}
\right\}.
$$

On the event $\mathcal E_{\mathrm{SB}}$, deleting any single coordinate can remove at most one of the coordinates satisfying $|v_j|>1/(2C_s)$. Therefore, for sufficiently large $\|v\|_0$,

$$
\left|
\left\{
j\in\operatorname{supp}(v^{(i)}):
|v_j|>\frac{1}{2C_s}
\right\}
\right|
\geq \frac{\|v\|_0}{4}-1
\geq \frac{\|v\|_0}{8}
$$

for every $i\in[n]$. Consequently,

$$
\sum_{j\in\operatorname{supp}(v^{(i)})}|v_j|
\geq
\frac{\|v\|_0}{16C_s}.
$$

Since $\bar v^{(i)}$ contains the $l$ largest entries of $v^{(i)}$ in magnitude and $l\leq \|v\|_0-1$, the sum of these $l$ entries is at least the corresponding fraction of the total absolute mass. Hence, uniformly over $i\in[n]$,
\begin{align}
\langle \bar v^{(i)},v^{(i)}\rangle
\sum_{j\in\operatorname{supp}(\bar v^{(i)})}|v_j|
\geq
\frac{l}{\|v\|_0-1}
\sum_{j\in\operatorname{supp}(v^{(i)})}|v_j|
\geq
\frac{l}{16C_s}.
\label{eqn:QT_subG_LOO_innerprod_lower_bound}
\end{align}
Thus, on $\mathcal E_{\mathrm{SB}}$, the lower bound in \eqref{eqn:QT_subG_LOO_innerprod_lower_bound} holds simultaneously for every $i\in[n]$.

Second, the small-ball condition with a union bound over $\|v\|_0$ implies
\begin{align}
\mathbb P\left(\min_{i\in\operatorname{supp}(v)}|v_i|\leq \frac{1}{\tau_3}\right)
\leq \frac{C_s\|v\|_0}{\tau_3}~.\label{small_ball_callback}
\end{align}
Thus, whenever $\tau_3/\|v\|_0\to\infty$, all nonzero entries of the spike are simultaneously bounded below by $1/\tau_3$ with probability tending to one.

We begin first by establishing control over the quadratic form for each $i\in[n]$. Once we have established this control, we will condition on this high probability event. We use the indexing $X^{(i)}$ to represent a matrix identical to $X$ but with zeros in the $i$-th row and column.
Let $\bar v^{(i)}$ be the vector with support size $l$ and entries drawn from ${\pm 1, 0}$ correctly guessing the signs of $l$ largest entries of $v^{(i)}$. Then, accounting for the possibility of $i\in\text{supp}(v)$, we see $l\leq \|v\|_0-1$. Thus, we have
\begin{align*}
\bar v^{(i)T}X^{(i)}\bar v^{(i)}=~&\frac{\theta}{np}\langle \bar v^{(i)}, v^{(i)}\rangle^2 + \bar v^{(i)T}Y^{(i)}\bar v^{(i)} \
\geq \frac{\theta l^2}{256C_s^2np} + \bar v^{(i)T}Y^{(i)}\bar v^{(i)}~.
\end{align*}
Let $T^{(i)}=v^{*(i)T}X^{(i)}v^{*(i)}$ where $v^{*(i)}:=\arg\max\limits*{w\in\mathcal I_{n,l}}w^TX^{(i)}w$. Because the value of the maximum does not change, we importantly select $v^{*(i)}$ such that $(v^{*(i)})*i=0$, which will ensure $\langle v^{*(i)}, v^{(i)} \rangle =\langle v^{*(i)}, v \rangle$. Establishing our thresholding value as $t=\frac{\theta l^2}{512C_s^2np},$
we see
\begin{align*}
\mathbb P(T^{(i)}<t)
\leq~ \mathbb P\left(\frac{\theta l^2}{256C_s^2np}+ \bar v^{(i)T}Y^{(i)}\bar v^{(i)}<t\right)
\leq~ 2\exp\left(-c\frac{q\theta^2 l^2}{262144 C_s^4K_2^2np^2}\right)~,
\end{align*}
where we used Lemma \ref{noise_quadratic_form_control}. Seeing as we must have control over this term for each iteration of $i\in[n],$ we take a union bound to see
\begin{align*}
\mathbb P\left(\exists i\in[n] : T^{(i)}<t\right)
\leq n\mathbb P(T^{(i)}<t)
\leq \exp\left(-c\left(\frac{q\theta^2 l^2- C C_s^4K_2^2np^2\log(n)}{C_s^4K_2^2np^2}\right)\right)~.
\end{align*}
Therefore, as long as $q\theta^2l^2\gtrsim CC_s^4 K_2^2np^2\log(n)$, we see this probability term converges to $0.$ Going forward, we condition on the high probability event

$$
    \mathcal B_1 := \{T^{(i)}\geq t \quad \forall i\in[n]\}.
$$

Now, we control the noise uniformly over all vectors in $\mathcal I_{n,l}$. Define

$$
    \mathcal B_2:=\left\{
    \max_{i\in[n]}\max_{w\in\mathcal I_{n,l}}
    |w^TY^{(i)}w|
    \leq \frac{\theta l^2}{1024C_s^2np}
    \right\}.
$$

For any fixed $i\in[n]$ and $w\in\mathcal I_{n,l}$, Lemma \ref{noise_quadratic_form_control} gives
\begin{align*}
\mathbb P\left(
|w^TY^{(i)}w|
\geq \frac{\theta l^2}{1024C_s^2np}
\right)
&\leq
2\exp\left(
-c\frac{\theta^2ql^2}{1048576C_s^4K_2^2np^2}
\right).
\end{align*}
Since $|\mathcal I_{n,l}|=\binom nl2^l\leq(2n)^l$, a union bound gives
\begin{align*}
\mathbb P(\mathcal B_2^c)
\leq
2n(2n)^l
\exp\left(
-c\frac{\theta^2ql^2}{1048576C_s^4K_2^2np^2}
\right) \
\leq
\exp\left(
-c\left(
\frac{\theta^2ql^2-C_2C_s^4K_2^2np^2l\log(n)}
{C_s^4K_2^2np^2}
\right)
\right),
\end{align*}
for a sufficiently large $C_2$. Thus, as long as $\theta^2ql^2\gtrsim C_2C_s^4K_2^2np^2l\log(n),$ we have $\mathbb P(\mathcal B_2)\to1$.

Therefore, on the event $\mathcal B_1\cap\mathcal B_2$, we have
\begin{align*}
v^{*(i)T}X^{(i)} v^{*(i)}
=\frac{\theta}{np}\left\langle v^{*(i)}, v^{(i)}\right\rangle^2
+v^{*(i)T}Y^{(i)}v^{*(i)}
\geq \frac{\theta l^2}{512C_s^2np}
\end{align*}
and

$$
    \left|v^{*(i)T}Y^{(i)}v^{*(i)}\right|
    \leq \frac{\theta l^2}{1024C_s^2np}.
$$

This gives us an immediate lower bound on the term $\langle v^{*(i)}, v \rangle$. Notice by our choice of $v^{*(i)},$ we have $\langle v^{*(i)}, v^{(i)} \rangle =\langle v^{*(i)}, v \rangle$ and thus, on $\mathcal B_1\cap\mathcal B_2$,
\begin{align*}
\frac{\theta}{np}\left\langle v^{*(i)}, v \right\rangle^2
=T^{(i)}-v^{*(i)T}Y^{(i)}v^{*(i)}
&\geq \frac{\theta l^2}{512C_s^2np}
-\frac{\theta l^2}{1024C_s^2np}
=\frac{\theta l^2}{1024C_s^2np}.
\end{align*}
Consequently, $\left\langle v^{*(i)}, v \right\rangle^2
    \geq {l^2}/({1024C_s^2}).$ Thus,
\begin{align}
\mathbb P \left(\exists i\in[n] : \langle v^{*(i)}, v \rangle^2\leq \frac{l^2}{1024C_s^2} \right)
\leq
2\exp\left(-c\left(\frac{q\theta^2 l^2-C_2C_s^4K_2^2np^2l\log(n)}{C_s^4K_2^2np^2}\right)\right)~,
\label{recovery_inner_prod_lower_bound}
\end{align}
where we have redefined $C_2$ to be sufficiently large enough to absorb the constants from the preceding bounds on $\mathbb P(\mathcal B_1^c)+\mathbb P(\mathcal B_2^c)$.

Finally, now that we have a computed lower bound for the number of spiked entries, we take our computed $v^{*(i)}$ and multiply it back into the $i$-th row of $X$ to determine whether the entry is spiked or not.
By the small-ball condition \eqref{small_ball_callback}, if $i\in$ supp($v$), then we can use the above bound to see
\begin{align*}
\left|(Xv^{*(i)})_i\right|
=\left|\frac{\theta}{np}v_i\langle v^{*(i)}, v\rangle+e_i^TYv^{*(i)}\right|
\geq \left|\frac{\theta}{np}\frac{1}{\tau_3}\frac{l}{32C_s}\right|-\left|e_i^TYv^{*(i)}\right|
=\frac{\theta l}{32C_s\tau_3np}-\left|e_i^TYv^{*(i)}\right|~.
\end{align*}
If $i$ is not in the support, then an inequality is not necessary and we are left only with the latter term involving the noise. Therefore, in either case, we must control the latter term. Notice, this step highlights the importance of our leave-one-out method, as it allows us to introduce independence: $v^{*(i)}$ is created without any dependence on the $i$-th row and column of $X$ (and therefore $Y$). Thus, $e_i^TY$ and $v^{*(i)}$ are completely independent of each other.
Thus, we proceed with an application of Lemma
\ref{noise_quadratic_form_control} one final time, apply a union bound over all $i\in[n]$, and arrive at the following:
\begin{align*}
\mathbb P\left(|e_i^TYv^{*(i)}|\leq\frac{\theta l}{96C_s\tau_3np} \quad \forall i\in[n]\right)
\geq 1-\exp\left(-\left(\frac{c\theta^2ql-C_3C_s^2K_2^2\tau_3^2np^2\log(n)}{9126C_s^2K_2^2\tau_3^2np^2}\right)\right)~,
\end{align*}
for sufficiently large $C_3$, which converges to $1$ when $l\gtrsim  \frac{C_3C_s^2K_2^2\tau_3^2}{c\theta^2}\frac{np^2\log(n)}{q}~.$ Combining this tail bound with \eqref{recovery_inner_prod_lower_bound}, we find the following tail:
\begin{align*}
1-&\exp\left(-\left(\frac{c\theta^2ql-C_3C_s^2K_2^2\tau_3^2np^2\log(n)}{9126C_s^2K_2^2\tau_3^2np^2}\right)\right)
-\exp\left(-\left(\frac{cq\theta^2 l^2-C_2C_s^4K_2^2np^2l\log(n)}{C_s^4K_2^2np^2}\right)\right)\\
&\geq
1-\exp\left(-\left(\frac{\theta^2ql-C_7C_s^2K_2^2\tau_3^2np^2\log(n)}{9126C_s^2K_2^2\tau_3^2np^2}\right)\right)~,
\end{align*}
by upper bounding the smaller tail by the larger one and choosing sufficiently large absolute constant $C_7$, which in turn gives us our overarching condition for $l$:
\begin{align*}
l \in \left[
C_7\frac{C_s^2\tau_3^2}{\theta^2}\cdot\frac{np^2\log(n)}{q},
\|v\|_0-1
\right]~.
\end{align*}
Therefore, in conclusion, we find
\begin{align*}
|(Xv^{*(i)})_i|\geq \frac{\theta l}{48C_s\tau_3np}\quad \text{ for } i \in \text{supp}(v) \text{ with sgn}((Xv^{*(i)})_i)=\text{sgn}(v_i\langle v^{*(i)},v \rangle)~,
\end{align*}
and
\begin{align*}
|(Xv^{*(i)})_i|\leq \frac{\theta l}{96C_s\tau_3np}\quad \text{ for }i\notin\text{supp}(v)~,
\end{align*}
with probability
\begin{align*}
1-\exp\left(-\left(\frac{\theta^2ql-C_7C_s^2K_2^2\tau_3^2np^2\log(n)}{9126C_s^2K_2^2\tau_3^2np^2}\right)\right)-\frac{C_s\|v\|_0}{\tau_3}-\exp(-c_6\|v\|_0)~.
\end{align*}
Seeing as $\tau_3\gg\|v\|_0$ is necessary for the second term to be $o(1)$, we choose $\tau_4=\tau_3/\|v\|_0~.$

\paragraph{Case 2: $\|v\|_0=1$.} In this case, leave-one-out will no longer work, and one would threshold similar to \cite{ding2022subexponentialtimealgorithmssparsepca}. One would achieve a similar window (up to constants) for $l$, with the upper bound being changed to $\|v\|_0=1$ and the lower bound accounting for constants coming from the noise sparsity and small-ball assumption. For brevity, we exclude the details. 
\end{proof-of-theorem}

\subsection{Quadratic Thresholding Exact Recovery Proof: Rademacher}\label{section:Rademacher_sign_recovery}
Unlike \cite{ding2022subexponentialtimealgorithmssparsepca}, the sign recovery is not straightforward due to our leave one out procedure. Thus, we implement a procedure to ensure the estimates of each leave-one-out step are consistent in sign. 

\begin{proof-of-corollary}[\ref{corollary:QT_exact}]
We now proceed to recover the signs of the entries of $v$.

Define $s_i
:=
\operatorname{sgn}\left(
\langle v^{*(i)},v\rangle
\right).$
From \eqref{sign_start}, for every $i\in\operatorname{supp}(v)$, $\operatorname{sgn}((Xv^{*(i)})_i)
    =
    \operatorname{sgn}(v_i)s_i.$
Thus, it remains only to recover the relative signs $s_i$.

To accomplish this, we establish one additional high probability event. Define
\begin{align*}
    \mathcal B_3
    :=
    \left\{
    \max_{w,u\in\mathcal I_{n,l}}
    \left|w^TYu\right|
    \le
    \frac{\theta l^2}{8B^2np}
    \right\}.
\end{align*}
Therefore, Lemma
\ref{noise_quadratic_form_control} gives
\begin{align*}
    \mathbb P\left(
    \left|w^TYu\right|
    \ge
    \frac{\theta l^2}{8B^2np}
    \right)
    \le
    2\exp\left(
    -c\frac{q\theta^2l^2}
    {64K_2^2B^4np^2}
    \right).
\end{align*}
Union bounding over $w,u\in\mathcal I_{n,l}$ gives
\begin{align*}
    \mathbb P(\mathcal B_3^c)
    \leq
    2(2n)^{2l}
    \exp\left(
    -c\frac{q\theta^2l^2}
    {64K_2^2B^4np^2}
    \right)
    \leq
    \exp\left(
    -\left(
    \frac{q\theta^2l^2
    -c_4K_2^2B^4np^2l\log(n)}
    {64K_2^2B^4np^2}
    \right)
    \right),
\end{align*}
for a sufficiently large $c_4$. Therefore, as long as $q\theta^2l^2
\gtrsim
c_4K_2^2B^4np^2l\log(n),$ we have $\mathbb P(\mathcal B_3)\to1$.

Importantly, we do not require $v^{*(i)}$ and $v^{*(j)}$ to be independent of $Y$ in this step. The event $\mathcal B_3$ controls $w^TYu$ uniformly over every possible pair $w,u\in\mathcal I_{n,l}$, and therefore it automatically controls the particular random vectors $v^{*(i)}$ and $v^{*(j)}$ regardless of their dependence on $Y$.

Going forward, we continue to condition on the high probability event
$\mathcal B_1\cap\mathcal B_2\cap\mathcal B_3$ from the proof in Section \ref{quad_recov_proof_exact}. For any $i,j\in[n]$, define the pairwise comparison statistic $D_{ij}
:=
v^{*(i)T}Xv^{*(j)}.$ We then have
\begin{align*}
    D_{ij}
    =
    \frac{\theta}{np}
    \langle v^{*(i)},v\rangle
    \langle v^{*(j)},v\rangle
    +
    v^{*(i)T}Yv^{*(j)}
    =
    s_is_j
    \frac{\theta}{np}
    \left|
    \langle v^{*(i)},v\rangle
    \right|
    \left|
    \langle v^{*(j)},v\rangle
    \right|
    +
    v^{*(i)T}Yv^{*(j)}.
\end{align*}
From \eqref{QT_recov_C_2_tail}, for all $i\in[n],$
\[
\left|
\langle v^{*(i)},v\rangle
\right|
\geq
\frac{l}{2B}
\implies \left|
    \frac{\theta}{np}
    \langle v^{*(i)},v\rangle
    \langle v^{*(j)},v\rangle
    \right|
    \geq
    \frac{\theta l^2}{4B^2np}~.
\]
On the other hand, by the definition of $\mathcal B_3$,
\begin{align*}
    \left|
    v^{*(i)T}Yv^{*(j)}
    \right|
    \leq
    \frac{\theta l^2}{8B^2np}.
\end{align*}
Therefore, the signal contribution is strictly larger than the noise contribution, and hence $\operatorname{sgn}(D_{ij}) = s_is_j.$ Since the support of $v$ has already been recovered exactly, we may choose any reference index $r\in\operatorname{supp}(v).$
We then define $\widehat s_i := \operatorname{sgn}(D_{ir}) = \operatorname{sgn} \left( v^{*(i)T}Xv^{*(r)} \right),
$ for $i\in\operatorname{supp}(v).$ By the preceding argument, $\widehat s_i = s_is_r.$
Thus, $\widehat s$ recovers vector of orientations $s$ up to the global sign $s_r$.

Finally, define the recovered vector $\widehat v$ by
\begin{align*}
    \widehat v_i
    :=
    \begin{cases}
    \widehat s_i
    \operatorname{sgn}((Xv^{*(i)})_i),
    & i\in\operatorname{supp}(v),\\
    0,
    & i\notin\operatorname{supp}(v).
    \end{cases}
\end{align*}
For $i\in\operatorname{supp}(v)$, we have $\widehat v_i
    =
    (s_is_r)
    \operatorname{sgn}(v_i)s_i
    =
    s_rv_i.$
Therefore, $\widehat v=s_rv,$ and hence $R(\widehat v,v)=1.$

Combining the tail bounds for $\mathcal B_1^c$, $\mathcal B_2^c$, the row-noise event, and $\mathcal B_3^c$, we find that the overall probability of exact recovery converges to $1$ under the condition
\begin{align*}
    l \in
    \left[
    \frac{C_8K_2^2B^4}{\theta^2}
    \cdot
    \frac{np^2\log(n)}{q},
    \|v\|_0-1
    \right]~,
\end{align*}
for sufficiently large $C_8$ chosen to subsume the different absolute constants appearing in the preceding tail bounds.
\end{proof-of-corollary}

\subsection{Weak/Strong Recovery Proof}\label{quad_recov_strong_proof}

\begin{proof-of-corollary}[\ref{quad_form_weak_strong_corollary}]
We follow a similar approach to \cite{ding2022subexponentialtimealgorithmssparsepca}, but adapt their probability bounds to reflect the distribution of our noise matrix. The following holds for vectors $v$ with lower bounded entries (Assumption \ref{bounded_spike_assumption}).

Let $S=\operatorname{supp}(v)$ and write $k:=|S|=\|v\|_0$. Observe that
\begin{align*}
v_1(X_S)^TY_Sv_1(X_S)+\frac{\theta}{np}\langle v_1(X_S),v\rangle^2
=~&v_1(X_S)^TX_Sv_1(X_S)\\
\ge~&\frac{1}{\|v\|_2^2}v^TX_Sv\\
=~&\frac{1}{\|v\|_2^2}v^TY_Sv
+\frac{1}{\|v\|_2^2}\frac{\theta\|v\|_2^4}{np}.
\end{align*}
Therefore,
\begin{align*}
\left\langle v_1(X_S),\frac{1}{\|v\|_2}v\right\rangle^2
\ge~&
1-\frac{np}{\theta\|v\|_2^2}
\left(
v_1(X_S)^TY_Sv_1(X_S)
-\frac{1}{\|v\|_2^2}v^TY_Sv
\right)\\
\ge~&
1-\frac{2np}{\theta\|v\|_2^2}\|Y_S\|,
\end{align*}
where we used
$
    \left|v_1(X_S)^TY_Sv_1(X_S)\right|\le \|Y_S\|
    \text {and }
    \left|\frac{1}{\|v\|_2^2}v^TY_Sv\right|\le \|Y_S\|.
$

Consequently,
\begin{align*}
\mathbb P\left(
\left\langle v_1(X_S),\frac{1}{\|v\|_2}v\right\rangle^2
\le 1-\epsilon
\right)
\le
\mathbb P\left(
\|Y_S\|
\ge \frac{\theta\|v\|_2^2}{2np}\epsilon
\right).
\end{align*}

Now, by Assumption \ref{bounded_spike_assumption}, we have
\begin{align}
\|v\|_2^2\ge \frac{1}{B^2}\|v\|_0.
\label{QT_strongrecov_subg_caveat}
\end{align}
Notice we do not assume $\|v\|_0=np$. Rather, we consider two different cases, which are discussed in Section \ref{spike_support_regimes_section}. If $np\to\infty,$ then $\|v\|_0=np+k$ where $k\in[-\log(np)\sqrt{np},\log(np)\sqrt{np}]$ by Lemma \ref{fixedmasklemma}. Thus, with high probability, $ \|v\|_0\ge\frac{1}{2}np.
$ If $np=O(1)$, then on the event\footnote{$\|v\|_0=0$ would trivially attain exact recovery with $v^*=0$, so it is excluded from this weak/strong recovery analysis.} $\|v\|_0\ge1$ we have $\|v\|_2^2/(np)\ge 1/(B^2np)$, which is bounded below by a positive constant since $np=O(1)$. Hence, in either asymptotic setting, ${\|v\|_2^2}/{np}\ge {C_4}/{B^2} $
for some positive constant $C_4$. Thus,
\begin{align*}
\mathbb P\left(
\left\langle v_1(X_S),\frac{1}{\|v\|_2}v\right\rangle^2
\le1-\epsilon
\right)
\le
\mathbb P\left(
\|Y_S\|
\ge \frac{\theta C_4\epsilon}{2B^2}
\right).
\end{align*}

Conditional on the support $S$, the entries of $Y_S$ retain the sub-Gaussian noise assumptions. Using a quadratic form bound from Lemma \ref{noise_quadratic_form_control}, one can apply an $\varepsilon$-net argument similar to the standard operator-norm argument in \cite[Section 4.4]{vershynin2018high} to achieve
$$
    \mathbb P\left(
    \|Y_S\|>
    \frac{C_3K_2}{\sqrt{nq}}
    \left(\sqrt{\|v\|_0}+\sqrt{u}\right)
    \,\middle|\,S
    \right)
    \le 2e^{-u},
$$
for every $u>0$, where $C_3$ is an absolute constant. Therefore,
\begin{align}
\mathbb P\left(
\left\langle v_1(X_S),\frac{1}{\|v\|_2}v\right\rangle^2
\le1-\epsilon
~\middle|~S
\right)
\le
2e^{-u}\label{strong_general_u_tail}
\end{align}
whenever
$$
    \frac{C_3K_2}{\sqrt{nq}}
    \left(\sqrt{\|v\|_0}+\sqrt{u}\right)
    \le
    \frac{\theta C_4\epsilon}{2B^2}.
$$
We now consider the two asymptotic regimes separately.

\medskip
\noindent
\textbf{Case 1: $np\to\infty$.}

By Lemma \ref{fixedmasklemma}, $|k|=O(\log(np)\sqrt{np})$ with probability $1-(np)^{-2}$. Hence, with high probability,
$$
    \sqrt{\frac{\|v\|_0}{n}}
    =
    \sqrt{p+\frac{k}{n}}
    =
    \sqrt{p}(1+o(1)).
$$
Choose some $\tau_7\gg 1$ and $\epsilon$ sufficiently large such that
$$
    \epsilon>
    \frac{2C_3K_2B^2}{C_4\theta}
    \left(
    \sqrt{\frac{p}{q}}+
    \sqrt{\frac{\tau_7}{nq}}
    \right)(1+o(1)).
$$
Then, conditional on the support and on the event in Lemma \ref{fixedmasklemma}, the operator-norm bound gives us control over \eqref{strong_general_u_tail} with tail $2e^{-\tau_7}+(np)^{-2}.$


\medskip
\noindent
\textbf{Case 2: $np=O(1)$.}

Because $\|v\|_0$ is a binomial random variable, we see 
$$
    \mathbb{P}\!\left(\|v\|_0\ge \lceil\tau_5\rceil\right)
    \le
    \left(\frac{e np}{\lceil\tau_5\rceil}\right)^{\lceil\tau_5\rceil}
    \to0 \implies  \frac{\|v\|_0}{n}\le\frac{\tau_5}{n} ~~ \text{w.h.p.}
$$
Choosing $\tau_7\gg1$ and $\epsilon$ sufficiently large such that
$$
    \epsilon>
    \frac{2C_3K_2B^2}{C_4\theta}
    \left(
    \sqrt{\frac{\tau_5}{nq}}
    +
    \sqrt{\frac{\tau_7}{nq}}
    \right),
$$
we have control over \eqref{strong_general_u_tail}
 with tail $2e^{-\tau_7}
+
\left(({e np})/{\lceil\tau_5\rceil}\right)^{\lceil\tau_5\rceil}$




\end{proof-of-corollary}

\section{Diagonal Thresholding}\label{section:DT}

In order to achieve support recovery of a small-ball spike for $np=\Theta(1),$ we must turn toward Diagonal Thresholding as Quadratic Thresholding (Theorem \ref{thm:QT_subG_support_recovery}) requires $np\to\infty.$ Furthermore, Diagonal Thresholding also provides us access to sparsity regimes not covered in Quadratic Thresholding in both the lower bounded and small ball prior. Diagonal Thresholding is very similar to Quadratic Thresholding with $l=1$, but it instead relies solely on the behavior of the diagonal without an additional off-diagonal thresholding step.  

\subsection{Main Results}

\begin{figure}[ht]
    \begin{minipage}[t]{.48\textwidth}
        \begin{algorithm}[H]
\caption{Diagonal Thresholding Distinguishability}\label{alg:DT_dist}
\begin{algorithmic}
\STATE Inputs: $t>0$ and $X$ where $X\sim\mathbb P$ or $X\sim \mathbb Q$.
\STATE 1. Compute $\mathcal T =\max_{i\in[n]}|X_{ii}|$
\IF{$\mathcal T \ge t$}
\STATE return $\mathbb P$
\ELSE
\STATE return $\mathbb Q$
\ENDIF
\end{algorithmic}
\end{algorithm}
    \end{minipage}
    \hfill
    \begin{minipage}[t]{.48\textwidth}
        \begin{algorithm}[H]
\caption{Diagonal Thresholding Recovery}\label{alg:DT_recov}
\begin{algorithmic}
    \STATE Inputs: $t$ and $X$ where $X\sim \mathbb P$. 
    \STATE 1. Initialize $v^*\gets0$
    \FOR{$i\in[n]$}
            \IF{$|X_{ii}|\ge t$}
                \STATE $(v^*)_i:=\text{sign}(X_{ii})$
            \ELSE
                \STATE $(v^*)_i:=0$
        \ENDIF
        \ENDFOR
\end{algorithmic}
\end{algorithm}
    \end{minipage}
\end{figure}

\begin{remark}
The runtimes of Algorithms \ref{alg:DT_dist} and \ref{alg:DT_recov} are linear.
\end{remark}

\begin{theorem}[Distinguishability by Diagonal Thresholding: Small-Ball]
\label{distinguish_gaussian_DT}
Assume $np\to\infty,$ that $v$ satisfies Assumption \ref{spike_small_ball_assumption}, and
that $W$ is sub-Gaussian (Assumption~\ref{wigner_subg_assumption}). 
Let $\mathcal T:= \max_{i\in[n]}X_{ii}.$ Under the additional assumption $q\gg np^2 \log(nq),$ we can use Algorithm \ref{alg:DT_dist} to distinguish between the planted and null distributions. Specifically, given thresholding parameter $t$ satisfying
\begin{align}
K_2\sqrt{\frac{\log(nq)}{nq}}
\lesssim t
\ll \frac{1}{np}~,\label{dt_dist_interval}
\end{align}
it follows that
\begin{align*}
\mathbb P\left(\mathcal T\leq t\right)+ \mathbb Q\left(\mathcal T>t\right)
&\leq
C_s\sqrt{\frac{2npt}{\theta}}
+
2(n+1)q\exp\left(-c\frac{nqt^2}{K_2^2}\right)=o(1)~,
\end{align*}
where all the constants are defined within the proof.
\end{theorem}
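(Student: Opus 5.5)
We bound the two error terms separately, using that the diagonal of $X$ splits as $X_{ii}=\frac{\theta}{np}v_i^2+Y_{ii}$ with $Y_{ii}=\frac{1}{\sqrt{nq}}W_{ii}A_{ii}$. The $Y_{ii}$ are independent, and $Y_{ii}=0$ unless $A_{ii}=1$.

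\textbf{Type II error.} Under $\mathbb Q$ we have $\mathcal T=\max_i Y_{ii}$. For each $i$, $Y_{ii}>t$ requires $A_{ii}=1$, which has probability $q$, together with $W_{ii}>t\sqrt{nq}$. Since $\|W_{ii}\|_{\psi_2}\le K_2$, a sub-Gaussian tail bound (the $1\times1$ case of Lemma \ref{noise_quadratic_form_control} with $a=b=e_i$) gives
\[
\mathbb Q(Y_{ii}>t)\le 2q\exp\left(-c\frac{nqt^2}{K_2^2}\right).
\]
A union bound over $i\in[n]$ yields at most $2nq\exp(-cnqt^2/K_2^2)$. This tends to zero exactly when $t\ge CK_2\sqrt{\log(nq)/(nq)}$ with $C$ large, which is the left end of \eqref{dt_dist_interval}.

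\textbf{Type I error.} Under $\mathbb P$, condition on $\|v\|_0$ and apply Lemma \ref{fixedmasklemma} so that $\|v\|_0\ge np/2\to\infty$. We cannot take the largest $v_i^2$ naively, because its noise $Y_{ii}$ might be negative. Instead, fix any single index $j\in\operatorname{supp}(v)$, chosen independently of the noise (say the smallest index in the support). Then
\[
\mathbb P(\mathcal T\le t)\le \mathbb P(X_{jj}\le t)\le \mathbb P\left(\frac{\theta}{np}v_j^2\le 2t\right)+\mathbb P(Y_{jj}\le -t).
\]
By Assumption \ref{spike_small_ball_assumption} with $\epsilon=\sqrt{2npt/\theta}$, the first term is at most $C_s\sqrt{2npt/\theta}$, which is $o(1)$ precisely because $t\ll 1/(np)$. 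The second term is at most $2q\exp(-cnqt^2/K_2^2)$, and this accounts for the extra $+1$ in the factor $2(n+1)q$.

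\textbf{Combining.} Adding the two errors gives the stated bound, with a possible $(np)^{-2}$ from Lemma \ref{fixedmasklemma} absorbed or handled by conditioning. The bound is $o(1)$ provided a $t$ exists in \eqref{dt_dist_interval}, i.e.
\[
K_2\sqrt{\frac{\log(nq)}{nq}}\ll\frac{1}{np}\iff q\gg np^2\log(nq),
\]
up to $K_2$ constants, which is the stated assumption.

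The main obstacle is the Type I step: keeping the small-ball bound for a single coordinate valid after conditioning on the support. This works because the prior $\tilde v$ is independent of the mask $b$ and of the noise. A sharper result could use many support coordinates and take the best one, but a single coordinate already matches the stated $C_s\sqrt{2npt/\theta}$ term.
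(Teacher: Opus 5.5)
Your proposal is correct and follows the paper's proof essentially step for step. The null side is the same union bound, over all $n$ diagonal entries, of the $q$-thinned sub-Gaussian tail. The planted side lower-bounds $\mathcal T$ by a single support coordinate $j$ and splits $\{X_{jj}\le t\}$ into the small-ball event $\{\frac{\theta}{np}v_j^2\le 2t\}$ and the noise event $\{Y_{jj}\le -t\}$; this second event is exactly where the extra $+1$ in $2(n+1)q$ comes from.

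Two minor asides. Your explicit handling of the empty-support event and of the large constant needed at the lower end of the $t$-window is slightly more careful than the paper, which leaves both implicit. Your objection to using the index of the largest $|v_i|$ is unfounded: that index depends only on $v$, so its diagonal noise is still independent. Using it would improve $C_s\sqrt{2npt/\theta}$ to $(C_s\sqrt{2npt/\theta})^{\|v\|_0}$.
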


See section \ref{section:DT_dist} for the proof. 

\begin{remark}
    For the lower bounded prior, Algorithm \ref{alg:DT_dist} can achieve the same result with $t\le \frac{\theta}{2B^2np}$ replacing the upper bound of \eqref{dt_dist_interval}. Thus, the condition becomes $q\gtrsim np^2\log(nq)$. The proof is identical with the exception of \eqref{diagonal_thresholding_typeI_computation} no longer having a tail bound for the spike term as the probability is $0.$
\end{remark}

Let us compare conditions.  For small-ball, DT requires $q\gg np^2\log(nq)$ and polynomial-time QT requires $q\gtrsim np^2\log(n)$ (Theorem \ref{thm:quad_subG_distinguish} with $l=1$). Notice neither condition implies the other for all regimes of $p$ and $q$. For instance, $q=\log(n)/\sqrt{n}$ and $np^2=n^{-1/2}$ satisfies the QT condition but not the DT condition, while $q=\log(n)/n$ and $np^2=\log(n)/(n\sqrt{\log n\log\log n})$ satisfies the DT condition but not the QT condition. 

For lower bounded, DT ($q\gtrsim np^2 \log(nq)$) beats or matches polynomial-time QT ($q\gtrsim np^2 \log(nq)$); however, non-polynomial QT ($q\gtrsim p \log(n)$) could potentially beat DT similar to above.

Furthermore, Diagonal Thresholding will always be linear in runtime whereas Quadratic Thresholding extends from polynomial to sub-exponential.

In the case of recovery, Diagonal Thresholding also covers new regions of sparsity, which is especially important when $np=\Theta(1)$.


\begin{theorem}[Support Recovery by Diagonal Thresholding: Small-Ball]\label{recovery_gaussian_DT} 
Assume $v$ satisfies Assumption \ref{spike_small_ball_assumption}. Under the assumption $q\gg \|v\|_0^4np^2 \log(nq),$ Algorithm \ref{alg:DT_recov} can achieve support recovery. Specifically, given $v^*$, the output of Algorithm \ref{alg:DT_recov}, and a thresholding parameter $t$ satisfying
\begin{align}
    K_2 \sqrt{\frac{\log(nq)}{nq}} \lesssim t \ll \frac{\theta}{C_s^2\|v\|_0^2 \cdot np} ~,\label{dt_recov_window}
\end{align}
it follows that
\begin{align*}
\mathbb{P}(\text{supp}(v)\not=\text{supp}(v^*)) \leq 2\max\left\{C_s \|v\|_0 \sqrt{\frac{2npt}{\theta}}, 2nq\exp\left(-c\frac{nqt^2}{K_2^2}\right)\right\}=o(1) ~.
\end{align*}
where all constants are defined within the proof.
\end{theorem}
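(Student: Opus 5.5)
Throughout we condition on $\|v\|_0$ and on the support $S=\operatorname{supp}(v)$. The diagonal decomposes as
\[
X_{ii}=\frac{\theta}{np}v_i^2+Y_{ii},\qquad Y_{ii}=\frac{1}{\sqrt{nq}}W_{ii}A_{ii}.
\]
Support recovery fails only through one of two events. Either some $i\in S$ has $|X_{ii}|<t$ (a missed spike coordinate), or some $i\notin S$ has $|X_{ii}|\ge t$ (a false positive). We bound each failure event separately and then take the sum, or twice the maximum, of the two bounds.

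\emph{False positives.} For $i\notin S$ we have $X_{ii}=Y_{ii}$, which is nonzero only when $A_{ii}=1$; this happens with probability $q$. Given $A_{ii}=1$, Lemma \ref{noise_quadratic_form_control} applied with $a=b=e_i$ (equivalently, the sub-Gaussian tail of $W_{ii}$) yields $\mathbb P(|Y_{ii}|\ge t\mid A_{ii}=1)\le 2\exp(-c\,nqt^2/K_2^2)$. Union bounding over at most $n$ indices gives the term $2nq\exp(-c\,nqt^2/K_2^2)$. This term vanishes once $t\ge C K_2\sqrt{\log(nq)/(nq)}$ with $C$ large, since then the exponent beats the prefactor $nq$. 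This is exactly the lower end of \eqref{dt_recov_window}.

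\emph{Missed spike coordinates.} Fix $i\in S$ and split on $v_i^2$. If $\frac{\theta}{np}v_i^2\ge 2t$, then $|X_{ii}|\ge 2t-|Y_{ii}|\ge t$ on the event $|Y_{ii}|<t$, and that event is already controlled uniformly over all $i$ by the same noise bound as above. So the only new failure is $v_i^2<2npt/\theta$, i.e.\ $|\tilde v_i|<\sqrt{2npt/\theta}$. By Assumption \ref{spike_small_ball_assumption} this has probability at most $C_s\sqrt{2npt/\theta}$. A union bound over the $\|v\|_0$ support coordinates gives $C_s\|v\|_0\sqrt{2npt/\theta}$. This term is $o(1)$ exactly when $t\ll \theta/(C_s^2\|v\|_0^2np)$, the upper end of \eqref{dt_recov_window}.

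For the spike term I would use the independence of $\tilde v$ from $b$, so that conditioning on $S$ leaves the small-ball bound intact. For the noise term I would use the independence of $W,A$ from $v$. Since the two failure events involve independent sources, the total failure probability is at most the sum of the two bounds, which is at most twice their maximum.

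\emph{Nonempty window.} It remains to check that the window \eqref{dt_recov_window} is nonempty, which is where the hypothesis enters. We need
\[
K_2\sqrt{\tfrac{\log(nq)}{nq}}\ll \frac{\theta}{C_s^2\|v\|_0^2np}.
\]
Squaring, this becomes $q\gg \|v\|_0^4\,np^2\log(nq)$ up to constants in $K_2,C_s,\theta$, which is the assumed condition. Sign recovery is automatic, since $\frac{\theta}{np}v_i^2>0$ dominates on the good event, though only the support is claimed.

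I expect the main (though mild) obstacle to be the regime where $nq$ is not large, in which $\log(nq)$ may be small or negative. There the noise term must be handled so that $nq\exp(-c\,nqt^2)$ still vanishes: if $nq\to0$ it vanishes trivially because diagonal noise is absent with high probability. A secondary subtlety is that $\|v\|_0$ is random when $np=\Theta(1)$; the statement is conditional on it, so this is fine.
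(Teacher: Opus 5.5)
Your proposal is correct and follows essentially the paper's proof. Both arguments use the same three steps: a union bound over the diagonal noise giving $2nq\exp(-c\,nqt^2/K_2^2)$; a union bound of the small-ball event over the $\|v\|_0$ support coordinates giving $C_s\|v\|_0\sqrt{2npt/\theta}$; and the window computation yielding $q\gg\|v\|_0^4np^2\log(nq)$.

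Two details of yours are slightly tidier than the paper's. You use a single uniform event $\{|Y_{ii}|<t\ \forall i\}$ and treat $|X_{ii}|$ two-sidedly, whereas the paper uses a one-sided noise bound and carries an extra $2q\|v\|_0\exp(\cdot)$ term. Also, no independence is needed to add the two failure probabilities; a union bound suffices.

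Drop the aside about sign recovery: the diagonal only sees $v_i^2$, so $\operatorname{sign}(X_{ii})=+1$ on the support, and diagonal thresholding carries no information about the signs of $v$.
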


See Section \ref{section:DT_subg_recov} for the proof. 

\begin{remark}
    For the lower bounded prior, Algorithm \ref{alg:DT_recov} can achieve the same result with $t\le \frac{\theta}{2B^2np}$ replacing the upper bound of \eqref{dt_recov_window}. Thus, the condition becomes $q\gtrsim np^2\log(nq)$. The proof is identical with the exception that \eqref{small_ball_cond} equals $0$ when $t\le \frac{\theta}{2B^2np}$.
\end{remark}

Similarly to distinguishing, the sparsity conditions of the above theorem/remark and Theorem \ref{thm:QT_subG_support_recovery} / Theorem \ref{quad_form_recovery_thm} overlap but are incomparable. Furthermore, in the Rademacher prior, QT always beats DT as DT provides no sign recovery and therefore cannot exactly recover.
Thus, combining these two results, we arrive at the following corollary on weak/strong recovery for a small-ball prior. The corollary for the lower bounded prior is identical up to adjusting for the shifted sparsity regimes described in the remark above and the corresponding QT theorem. 



\begin{corollary}[Dual Thresholding: Sub-Gaussian Weak/Strong Recovery]\label{cor:DT_recovery}
    Assume $v$ satisfies Assumption \ref{spike_small_ball_assumption}. If $q\gg (np)^5p \log(nq),$ then Diagonal Thresholding (Algorithm \ref{alg:DT_recov}) can recover supp($v$) in $O(n)$ runtime. If $q\gg (np)^2p\log(n)$ and $np\to\infty$, then Quadratic Thresholding (Algorithm \ref{quad_recovery_alg_recov}) can recover recover supp($v$) in runtime $n^{O(l)}$ depending on tuning parameter $l$. 
   
Given we now have supp($v$), $v_1(X_{\text{supp}(v)})$ achieves weak/strong recovery.
\end{corollary}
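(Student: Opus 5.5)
The corollary has three parts: a DT support-recovery claim, a QT support-recovery claim, and a spectral refinement on the recovered support. I would prove each by reducing it to Theorems \ref{recovery_gaussian_DT} and \ref{thm:QT_subG_support_recovery} and then adapting the argument of Corollary \ref{quad_form_weak_strong_corollary}.

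\textbf{DT part.} Theorem \ref{recovery_gaussian_DT} requires $q\gg \|v\|_0^4 np^2\log(nq)$, so I only need to replace $\|v\|_0$ by $np$ with high probability.
\begin{itemize}
\item If $np\to\infty$, Lemma \ref{fixedmasklemma} gives $\|v\|_0\le 2np$ with probability at least $1-(np)^{-2}$. Then $\|v\|_0^4np^2\lesssim (np)^4np^2=(np)^5p$, so $q\gg(np)^5p\log(nq)$ puts us in the theorem's regime.
\item If $np=\Theta(1)$, I use the binomial tail $\mathbb P(\|v\|_0\ge\lceil\tau_5\rceil)\le (enp/\lceil\tau_5\rceil)^{\lceil\tau_5\rceil}$, as in Corollary \ref{quad_form_weak_strong_corollary}. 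Choosing $\tau_5\to\infty$ slowly enough, the hypothesis gives $q\gg\tau_5^4np^2\log(nq)$.
\end{itemize}
In both cases the window \eqref{dt_recov_window} is then nonempty. I would pick $t$ inside it, for instance the geometric mean of the endpoints, and add the failure probability of the support-size event to the bound of Theorem \ref{recovery_gaussian_DT}. The runtime claim is the remark that Algorithm \ref{alg:DT_recov} is linear.

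\textbf{QT part.} Theorem \ref{thm:QT_subG_support_recovery} requires an integer in
\[
\left[C_7 C_s^2\theta^{-2}\tau_4^2\|v\|_0^2 np^2\log(n)/q,\ \|v\|_0-1\right].
\]
On the event of Lemma \ref{fixedmasklemma} we have $\|v\|_0\in[np/2,2np]$, so the left endpoint is $\lesssim \tau_4^2(np)^2p\cdot np\log(n)/q$. The interval therefore contains an integer as soon as $q\gg\tau_4^2(np)^2p\log n$. Since $q\gg(np)^2p\log n$ and $\tau_4\to\infty$ may be chosen arbitrarily slowly, such a $\tau_4$ exists, and the runtime is $n^{O(l)}$.

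\textbf{Refinement on the recovered support.} Here I repeat the argument of Corollary \ref{quad_form_weak_strong_corollary}:
\[
\langle v_1(X_S),v/\|v\|_2\rangle^2\ \ge\ 1-\frac{2np}{\theta\|v\|_2^2}\,\|Y_S\|,
\]
combined with the net bound $\|Y_S\|\lesssim K_2(\sqrt{\|v\|_0}+\sqrt u)/\sqrt{nq}$.

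The one step that changes, and which I expect to be the main obstacle, is the lower bound \eqref{QT_strongrecov_subg_caveat}, $\|v\|_2^2\gtrsim \|v\|_0$. It is no longer automatic without a lower-bounded prior. I would replace it with the event $\mathcal E_{\mathrm{SB}}$, on which at least $\|v\|_0/4$ coordinates satisfy $|v_j|>1/(2C_s)$. This gives $\|v\|_2^2\ge \|v\|_0/(16C_s^2)$ with probability at least $1-\exp(-c_6\|v\|_0)$. That failure probability is small when $np\to\infty$.

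When $np=\Theta(1)$, $\mathcal E_{\mathrm{SB}}$ can fail with non-vanishing probability. Instead I would use \eqref{small_ball_callback}: it shows $\min_{i\in S}|v_i|\ge 1/\tau_3$ with probability $1-C_s\tau_5/\tau_3$, hence $\|v\|_2^2/np\gtrsim \tau_3^{-2}$. This bound degrades only by a slowly growing factor.

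In both regimes, the error threshold $\epsilon\gtrsim \tau_3^2(\sqrt{p/q}+\sqrt{\tau_7/(nq)})$ tends to $0$ under either sparsity hypothesis, since $q\gg p\log n$ and $nq\to\infty$. Choosing $\tau_3,\tau_7$ growing slowly then gives strong (and hence weak) recovery.
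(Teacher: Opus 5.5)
Your proposal is correct, and its skeleton matches the paper's. You reduce the two support-recovery claims to Theorems \ref{recovery_gaussian_DT} and \ref{thm:QT_subG_support_recovery} by controlling $\|v\|_0$, via Lemma \ref{fixedmasklemma} or the binomial tail with $\tau_5$. You then rerun the Rayleigh-quotient and operator-norm argument of Corollary \ref{quad_form_weak_strong_corollary} on $X_{\text{supp}(v)}$.

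Where you genuinely differ is in how you replace \eqref{QT_strongrecov_subg_caveat}. The paper does this uniformly with the minimum-entry bound \eqref{small_ball_callback}, taking $B=\tau_4\|v\|_0$. That multiplies the admissible $\epsilon$ by $B^2=\tau_4^2\|v\|_0^2$, so strong recovery requires $\tau_4^2\|v\|_0^2\sqrt{p/q}\to0$, roughly $q\gg\tau_4^4(np)^4p$. The paper's remark states the requirement as $\tau_4^2\|v\|_0^2\,p/q\ll1$, which drops a square.

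The DT hypothesis $q\gg(np)^5p\log(nq)$ implies this requirement, but the QT hypothesis $q\gg(np)^2p\log n$ does not. For example, with $q=1$ and $p=n^{-3/4}$, the QT hypothesis holds while $\tau_4^2\|v\|_0^2\sqrt{p/q}\asymp\tau_4^2n^{1/8}\to\infty$.

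Your use of $\mathcal E_{\mathrm{SB}}$ when $np\to\infty$ avoids this loss. It gives $\|v\|_2^2\ge\|v\|_0/(16C_s^2)$ at a cost of $\exp(-c_6\|v\|_0)$. The threshold becomes $\epsilon\gtrsim C_s^2K_2\theta^{-1}\bigl(\sqrt{p/q}+\sqrt{\tau_7/(nq)}\bigr)$, with no dependence on $\|v\|_0$, and this vanishes under either hypothesis. So it is your route, not the paper's sketch, that actually delivers the QT branch as stated.

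The minimum-entry bound is then needed only for $np=\Theta(1)$. There $\|v\|_0\le\tau_5$ makes the $\tau_3^2$ loss harmless because $nq\to\infty$, provided you take $\tau_3\gg\tau_5$. Replacing $\|v\|_0$ by the known quantities $2np$ or $\tau_5$ when choosing the DT threshold $t$ is also a good touch, since the window in \eqref{dt_recov_window} depends on the unknown $\|v\|_0$.

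Two small corrections. First, in the DT branch with $np=\Theta(1)$, the hypothesis gives only $q\gg p\log(nq)$, not $q\gg p\log n$. It still yields $p/q\to0$ and $nq\to\infty$, which is all you use. Second, in that regime the first term of the threshold is $\sqrt{\tau_5/(nq)}$ rather than $\sqrt{p/q}$. For $np\to\infty$ your prefactor is really $16C_s^2$, so the unified $\tau_3^2$ you wrote is valid but loose.
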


The proof of the above Corollary complete with tail bounds is almost identical to the proof and bounds of Corollary \ref{quad_form_weak_strong_corollary}, with one small caveat. Because the entries are now sub-Gaussian, \eqref{QT_strongrecov_subg_caveat} would need to be replaced by a conditional argument to ensure the two-norm $\|v\|_2$ is lower bounded. Specifically, one could condition on Assumption \ref{spike_small_ball_assumption} to replace $B$ by $\tau_4\|v\|_0$ for some chosen $\tau_4\to\infty$. Then, the final tail bounds would be identical to Corollary \ref{quad_form_weak_strong_corollary} except with $B=\tau_4\|v\|_0$ and an additional tail bound term of \eqref{small_ball_callback}. Thus, $\tau_4$ affects recovery possibility; however, further inspection reveals that the sparsity conditions of the corresponding sub-Gaussian results ensure strong recovery. For instance, to strongly recover when $np\to\infty$, one would need $\tau_4^2\|v\|_0^2\frac{p}{q}\ll1,$ which is already satisfied by the sparsity requirement $q\gtrsim\tau_4^2(np)^2p\log(n)$ of Theorem \ref{thm:QT_subG_support_recovery}. The other cases follow in a similar nature.

\subsection{Diagonal Thresholding Distinguishability Proof: Small-Ball Spike}\label{section:DT_dist}
\begin{proof-of-theorem}[\ref{distinguish_gaussian_DT}] We begin by showing a region of sparsity where distinguishing is possible by thresholding the entries on the diagonal.  \\

{\textit{Type I Error.}} First, we must ensure all the noise terms stay beneath the threshold $a$. \begin{align*}
\mathbb{P}(X_{ii} \geq a \mid v_i = 0) =~& \mathbb{P}(X_{ii} \geq a \mid v_i = 0, A_{ii}=0)\cdot\mathbb{P}(A_{ii}=0) 
+ \mathbb{P}(X_{ii} \geq a \mid v_i = 0, A_{ii}\neq 0)\cdot\mathbb{P}(A_{ii}\neq 0) \\
=~& (1-q)\cdot\mathbb{P}(0 \geq a) + q\cdot\mathbb{P}\left(\frac{1}{\sqrt{nq}}W_{ii} \geq a\right) \\
\leq~& 2q \exp\left(-c\frac{nqa^2}{K_2^2}\right) ~.
\end{align*}
Union bounding across all diagonal entries yields:
\begin{align}
\mathbb{P}(\exists v_i = 0 \text{ : } X_{ii} \geq a) \le 2nq \exp\left(-c\frac{nqa^2}{K_2^2}\right) ~.\label{noise_term_control_subg_qt}
\end{align}

{\textit{Type II Error.}}
In order for the type II error to go to zero, we need to ensure that there exists one spiked entry above the threshold $a$ on the diagonal. Observe how
\begin{align}
\mathbb{P}(X_{ii} \leq a \mid v_i \neq 0) =~& \mathbb{P}(X_{ii} \leq a \mid v_i \neq 0, A_{ii}=0)\cdot\mathbb{P}(A_{ii}=0)+ \mathbb{P}(X_{ii} \leq a \mid v_i \neq 0, A_{ii}\neq 0)\cdot\mathbb{P}(A_{ii}\neq 0) \notag \\
=~& (1-q)\cdot\mathbb{P}\left(\frac{\theta}{np}v_i^2 \leq a\right) + q\cdot\mathbb{P}\left(\frac{\theta}{np}v_i^2 + \frac{1}{\sqrt{nq}}W_{ii} \leq a\right) \notag \\
\le~& (1-q)\cdot\mathbb{P}\left(\frac{\theta}{np}v_i^2 \leq 2a\right) + q\cdot\mathbb{P}\left(\frac{\theta}{np}v_i^2 \leq 2a\right) + q\cdot\mathbb{P}\left(\frac{1}{\sqrt{nq}}W_{ii} \leq -a\right) \notag \\
=~& \mathbb{P}\left(|v_i| \leq \sqrt{\frac{2npa}{\theta}}\right) + q\cdot\mathbb{P}\left(\frac{1}{\sqrt{nq}}W_{ii} \leq -a\right) \notag \\
\le~& C_s \sqrt{\frac{2npa}{\theta}} + 2q \exp\left(-c\frac{nqa^2}{K_2^2}\right) ~, \label{diagonal_thresholding_typeI_computation}
\end{align}
where we used the small-ball condition from Assumption~\ref{spike_small_ball_assumption} with $\epsilon=\sqrt{2npa/\theta}$ to evaluate the anti-concentration of the lower tail, and applied a standard sub-Gaussian tail bound for $W_{ii}$ matching Assumption~\ref{wigner_subg_assumption}. Thus,
\begin{align*}
\mathbb P \left(X_{ii} \leq a ~~\forall  v_i \neq 0 \right)=\mathbb P \left(\bigcap _{v_i\not = 0} {X_{ii}\le a } \right) \le \mathbb{P}(X_{ii} \leq a \mid v_i \neq 0) \le C_s \sqrt{\frac{2npa}{\theta}} + 2q \exp\left(-c\frac{nqa^2}{K_2^2}\right)~.
\end{align*}

{\textit{Conclusion.}} Thus, in order for the interval to be nonempty, we need \[K_2 \sqrt{\frac{\log(nq)}{nq}}\lesssim a\ll\frac{1}{np}\implies q\gg np^2 \log(nq)~.\]
\end{proof-of-theorem}

\subsection{Diagonal Thresholding Support Recovery Proof: Small-Ball Spike}\label{section:DT_subg_recov}

\begin{proof-of-theorem}[\ref{recovery_gaussian_DT}]
From \eqref{diagonal_thresholding_typeI_computation}, take a union bound to see
\begin{align*}
\mathbb{P}(\exists v_i \neq 0 \text{ : } X_{ii} \leq a) \leq C_s \|v\|_0 \sqrt{\frac{2npa}{\theta}} + 2q \|v\|_0 \exp\left(-c\frac{nqa^2}{K_2^2}\right) ~.
\end{align*}
To control the size of the noise entries, we use \eqref{noise_term_control_subg_qt} and see we require:
\begin{align}\label{small_ball_cond}
C_s \|v\|_0 \sqrt{\frac{2npa}{\theta}}\to 0
\end{align}
\begin{align}\label{noise_cond}
2nq \exp\left(-c\frac{nqa^2}{K_2^2}\right)\to 0 ~.
\end{align}
We see \eqref{noise_cond} requires $a \gtrsim K_2 \sqrt{\frac{\log(nq)}{nq}}$ and \eqref{small_ball_cond} forces an upper bound on $a$. Combining these two requirements, the threshold parameter $a$ must satisfy:

$$
    K_2 \sqrt{\frac{\log(nq)}{nq}} \lesssim a \ll \frac{\theta}{C_s^2\|v\|_0^2 \cdot np} ~.
$$
Thus, in order for this interval to be nonempty, we need $q\gg C_s^2\|v\|_0^4np^2\log(nq)$
\end{proof-of-theorem}

\section{Identical Search}\label{section:identical_search}

In this section, we analyze the conditions necessary to distinguish or recover when using an Identical Search algorithm. We describe the intuition informally first.

Throughout this entire section, we condition on the spike support being fixed (with its asymptotic size dictated by $np$, as we describe in Section \ref{spike_support_regimes_section}), but we do not fix its location within the observed matrix. Without the noise mask, the Rademacher distribution of the spike entries creates a $\|v\|_0\times \|v\|_0$ block with entries $\pm \frac{\theta}{np}$. Thus, this submatrix sits within $X$, blurred by a symmetric noise mask $Y_{\text{supp}(v)}$ with independent Bernoulli($q$) entries. We want to explore and exploit a regime in which this noise mask is very sparse, thus revealing the $\pm\frac{\theta}{np}$ entries, which allows us to conclude the corresponding entry is in the spike's support. Therefore, going forward, we are concerned only with analyzing this $\|v\|_0\times\|v\|_0$ spiked submatrix of $X$, which we will call $X_{\text{supp}(v)}$. 

\subsection{Distinguishability}

The results of this section diverge depending on a few factors. First, consider the asymptotic behavior of $np.$ Specifically, if $np$ is constant or converges to $0,$ then one cannot distinguish between the null and spiked models by Theorem \ref{distinguish_them_np_equals_constant}. To allow for distinguishing $np=\Theta(1)$, one needs to assume a spike prior with a fixed support size, which is commonly used in similar literature regarding spiked models \cite{perry2016optimalitysuboptimalitypcaspiked, banks2017informationtheoreticboundsphasetransitions}. In a technical sense, the sparsity sampling we employ (where the spike support is not fixed) actually generalizes this fixed support size prior, and thus all of our results capture this stricter sampling; the only update to our results would be the removal of the second term in our tail bounds as they no longer rely on a high probability result regarding the spike support's size. We specify this fact in the theorems. 

Next, the results and conditions further split based on whether the observer tests with an awareness of the value $\theta/p $. Asymptotically, the two cases are practically identical, but we delineate their procedures for precision. Let the test statistic $\mathcal T_{partial}$ be the number of sampled entries from the observation identical to either $\pm\frac{\theta}{np}$. Let the test statistic $\mathcal T_{ag}$ be the largest number of sampled entries from the observation identical to each other. Below, we introduce the hypothesis tests and algorithms for distinguishing. \\ 

\textbf{Hypothesis Test (Spike Partial/Agnostic Info)}: If $\mathcal T_{partial}\ge1$/$\mathcal T_{ag}\ge2$, reject the null. Otherwise, accept the null.

\begin{figure}[ht]
    \begin{minipage}[t]{0.48\textwidth}
        \begin{algorithm}[H]
    \caption{Distinguishability by Identical Search (Spike Partial Info)}\label{identical_search_distinguish_aware}
    \begin{algorithmic}
        \STATE Inputs: $\theta>0, p\in[0,1], q \in [0,1],n\in \mathbb N$ and $X$ where $X\sim \mathbb P$ or $X \sim \mathbb Q$.
        \STATE 1. Initialize $\mathcal T_{partial}\gets0.$
        \FOR{$i\le j$ such that $\{i,j\}\in[n]^2$}
            \IF{$X_{i,j}=\pm\frac{\theta}{np}$}
                \STATE $\mathcal T_{partial}\gets\mathcal T_{partial}+1$
            \ENDIF
        \ENDFOR
    \end{algorithmic}
\end{algorithm}
    \end{minipage}
    \hfill
    \begin{minipage}[t]{0.48\textwidth}
        \begin{algorithm}[H]
    \caption{Distinguishability by Identical Search (Spike Agnostic)}\label{identical_search_distinguish_blind}
    \begin{algorithmic}
        \STATE Inputs: $X$ where $X\sim \mathbb P$ or $X \sim \mathbb Q$.
        \STATE 1. Initialize $\mathcal T_{ag}\gets0.$
        \FOR{$i\le j$ such that $\{i,j\}\in[n]^2$}
            \FOR{$k<l$ such that $k,l\in[n]\backslash\{i,j\}$}
                \IF{$X_{i,j}=\pm X_{k,l}\not=0$}
                    \STATE $\mathcal T_{ag}\gets\mathcal T_{ag}+1$
                \ENDIF
            \ENDFOR
        \ENDFOR
    \end{algorithmic}
\end{algorithm}
    \end{minipage}
\end{figure}


\begin{remark}
    The runtimes of Algorithms \ref{identical_search_distinguish_aware} and \ref{identical_search_distinguish_blind} are $O(n^2)$ and $O(n^{4}),$ respectively. One could potentially improve the runtime by introducing a "Terminate" line within the if statements, as only one instance is required for distinguishability. Furthermore, the latter's runtime is determined by a brute force search, but could also be improved by considering a different method for comparing each of the $\frac{n(n+1)}{2}$ entries. 
\end{remark}

\subsubsection{Type I Error}

Type I error corresponds to being in the null model, but concluding the observation has been planted. In the Spike Partial Info case, this would correspond to a noise matrix entry somehow equaling $\pm\frac{\theta}{np}$, which we will see has probability zero. In the Spike Agnostic case, this could also occur if the sparse noise matrix has two identical entries, which we also see has probability 0. This lemma shows the necessity for a continuous distribution of the dense noise matrix. Without this continuity (perhaps drawing from an atomic distribution), one would be unable to distinguish.

\begin{lemma}\label{idensearch_typeII}
    $\mathbb Q(\mathcal T_{partial}\geq1)=0$ and $\mathbb Q(\mathcal T_{ag}\geq2)=0.$
\end{lemma}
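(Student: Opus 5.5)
Under $\mathbb Q$ we have $X=Y=\frac{1}{\sqrt{nq}}W\odot A$. The nonzero upper-triangular entries are $\frac{1}{\sqrt{nq}}W_{ij}$ with $A_{ij}=1$, and all other entries are exactly $0$. Both claims reduce to the fact that a continuous distribution puts no mass on a single point. We combine this with independence and a countable (here finite) union bound over the index pairs the algorithms scan.

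\textbf{Partial-information statistic.} We have $\mathcal T_{partial}\ge 1$ only if some pair $i\le j$ has $X_{ij}=\pm\frac{\theta}{np}$. Since $\theta>0$ (and $p>0$, else the spike term is absent and there is nothing to test), the value $\pm\frac{\theta}{np}$ is nonzero. So the event requires $A_{ij}=1$ and $W_{ij}=\pm\frac{\theta\sqrt{nq}}{np}$. Condition on $A$, which is independent of $W$. By Assumption \ref{continuous_noise_dist_assumption}, $\mathcal W_{ij}$ is continuous, so $\mathbb P(W_{ij}\in\{\pm c\})=0$ for any fixed $c$. A union bound over the at most $n(n+1)/2$ pairs gives $\mathbb Q(\mathcal T_{partial}\ge1)=0$.

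\textbf{Agnostic statistic.} Algorithm \ref{identical_search_distinguish_blind} increments $\mathcal T_{ag}$ only when $X_{ij}=\pm X_{kl}\neq 0$ for pairs $\{i,j\}$ and $\{k,l\}$ with $k,l\notin\{i,j\}$. These are distinct entries of the upper triangle, hence $W_{ij}$ and $W_{kl}$ are independent. Nonzero values force $A_{ij}=A_{kl}=1$, and the event becomes $W_{ij}=\pm W_{kl}$. Conditioning on $W_{kl}$ and using independence with the continuity of $\mathcal W_{ij}$ gives $\mathbb P(W_{ij}=\pm W_{kl}\mid W_{kl})=0$ almost surely, so by Fubini the unconditional probability is $0$. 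A union bound over the $O(n^4)$ index quadruples shows $\mathcal T_{ag}=0$ almost surely under $\mathbb Q$. In particular $\mathbb Q(\mathcal T_{ag}\ge2)=0$.

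The main point to be careful about is the agnostic case. There we must ensure the compared entries are genuinely distinct independent variables and that the zero entries from the mask are excluded. The algorithm's requirement $\neq 0$ and $k,l\notin\{i,j\}$ handles both. The rest is routine measure-zero bookkeeping.
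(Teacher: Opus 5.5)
Your proposal is correct and follows the same route as the paper, which justifies this lemma only by the in-text remark that a continuous noise law cannot hit the fixed value $\pm\frac{\theta}{np}$ and that two distinct noise entries coincide with probability zero. Your write-up makes this rigorous and correctly relies on two points the paper leaves implicit: the algorithm's $\neq 0$ requirement, which discards the exact zeros created by the mask $A$, and the disjointness condition $k,l\notin\{i,j\}$, which guarantees the compared entries are independent.
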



\subsubsection{Type II Error}
Type II error corresponds to the observation being drawn from the planted model, but the test incorrectly concludes it is the null model. Therefore, this would correspond to an instance where the $\pm \frac{\theta}{np}$ entries are present in $X$, but are all blurred by noise. To prevent this, we provide probability bounds and sparsity conditions that guarantees a regime with the presence of at least one (or two, for Spike Agnostic) $\pm\frac{\theta}{np}$ entries within the spiked submatrix of $X$. Therefore, we have the following lemma, which describes the probability of at least one (or two) Bernoulli($q$) random variables out of $\frac{\|v\|_0(\|v\|_0+1)}{2}$ being equal to $0$. 

\begin{lemma}\label{idensearch_typeI}
    Given $\|v\|_0^2\log(q)\to-\infty$, we have 
    \begin{align*}
        \mathbb P(\mathcal T_{partial}<1)=q^{\|v\|_0(\|v\|_0+1)/2}=o(1).
    \end{align*}
    Furthermore, given $\|v\|_0^2\log(\|v\|_0^2q)\to-\infty$, we have 
    \begin{align*}
        \mathbb P(\mathcal T_{ag}<2)=q^{\frac{\|v\|_0(\|v\|_0+1)}{2}}+\frac{\|v\|_0(\|v\|_0+1)}{2}(1-q)q^{\frac{\|v\|_0(\|v\|_0+1)}{2}-1}=o(1)~.
    \end{align*}
\end{lemma}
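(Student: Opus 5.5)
The plan is to condition on the support $S=\operatorname{supp}(v)$ with $k:=\|v\|_0\ge1$ fixed, and reduce both test statistics to counts of zeros among the $N:=k(k+1)/2$ mask variables $\{A_{ij}\}_{i\le j,\ i,j\in S}$. These variables are i.i.d.\ Bernoulli$(q)$ and independent of $W$ and $\tilde v$.

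First I would classify every entry $X_{ij}$, $i\le j$. If $i,j\in S$ and $A_{ij}=0$, then $X_{ij}=\frac{\theta}{np}v_iv_j=\pm\frac{\theta}{np}$ exactly. If $i,j\in S$ and $A_{ij}=1$, then $X_{ij}=\pm\frac{\theta}{np}+\frac{1}{\sqrt{nq}}W_{ij}$. This value has a continuous law by Assumption~\ref{continuous_noise_dist_assumption}, so almost surely it equals neither $\pm\frac{\theta}{np}$ nor any other entry. Outside $S\times S$ there are two cases. If $A_{ij}=0$ the entry is $0$. If $A_{ij}=1$ the entry is again continuously distributed, and independent across distinct upper-triangular positions, so almost surely it avoids every fixed atom and every other entry. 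This is the same reasoning as in Lemma~\ref{idensearch_typeII}.

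Hence, almost surely, $\mathcal T_{partial}=\#\{i\le j\in S: A_{ij}=0\}\sim\operatorname{Bin}(N,1-q)$. The only nonzero coincidences $X_{ij}=\pm X_{kl}$ are then among these revealed entries of $S\times S$. The event $\{\mathcal T_{ag}<2\}$ therefore coincides a.s.\ with the event that at most one of the $N$ masks vanishes. I need to check this carefully against the counting convention of Algorithm~\ref{identical_search_distinguish_blind}, including its index-disjointness restriction. I would argue that two revealed entries contribute to the count, and I would take the statement's closed form as the intended reading.

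The binomial formulas then give
\[
\mathbb P(\mathcal T_{partial}<1)=q^{N},\qquad \mathbb P(\mathcal T_{ag}<2)=q^{N}+N(1-q)q^{N-1}.
\]
Removing the conditioning on $S$ is immediate, since only $k$ enters.

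For the $o(1)$ claims, write $q^N=\exp(N\log q)$. Because $\log q\le0$ and $N\ge k^2/2$, we have $N\log q\le \tfrac12 k^2\log q\to-\infty$. For the second term I would bound $N(1-q)q^{N-1}\le \exp\big(\log N+(N-1)\log q\big)$ and compare the exponent with $k^2\log(k^2q)=k^2\log k^2+k^2\log q$. Using $N-1\ge k^2/2-1$ and $\log N\le\log k^2$, together with the hypothesis $k^2\log(k^2 q)\to-\infty$, this should yield divergence to $-\infty$.

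The main obstacle is this last exponent comparison. The hypothesis forces $k^2q\to0$, so $\log q\le -\log k^2 - \omega(1)/k^2$. I then need $(N-1)|\log q|$ to dominate $\log N$, which requires a small case split on whether $k$ stays bounded or diverges. The secondary delicate point is justifying rigorously that the continuous noise produces no spurious coincidences, with probability one, across the $O(n^4)$ pairs compared.
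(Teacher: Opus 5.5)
Your reduction is the paper's own argument. The paper justifies the lemma only by remarking that revealed entries correspond to zeros among the $N=\|v\|_0(\|v\|_0+1)/2$ i.i.d.\ Bernoulli$(q)$ masks on the upper triangle of $\operatorname{supp}(v)\times\operatorname{supp}(v)$. The two closed forms are then the binomial probabilities of no zero and of at most one zero, and your first claim, including $q^N\le\exp(\tfrac12\|v\|_0^2\log q)\to0$, goes through as written.

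The step that cannot be completed is the $o(1)$ part of the second claim, and the obstruction is not technical. If $\|v\|_0=1$ then $N=1$, so $q^N+N(1-q)q^{N-1}=q+(1-q)=1$ identically, while the hypothesis reduces to $\log q\to-\infty$, i.e.\ $q\to0$. Your exponent $\log N+(N-1)\log q$ is then exactly $0$, and no case split rescues it. The statement is false in this case and needs the extra assumption $\|v\|_0\ge2$. This is harmless where the paper applies the lemma, since $np\to\infty$ gives $\|v\|_0\to\infty$ with high probability by Lemma~\ref{fixedmasklemma}. For $k:=\|v\|_0\ge2$ you need no bounded/divergent split; writing $\log q=\log(k^2q)-\log k^2$,
\[
\log N+(N-1)\log q\le\log k^2+\Bigl(\tfrac{k^2}{2}-1\Bigr)\log q=\Bigl(2-\tfrac{k^2}{2}\Bigr)\log k^2+\Bigl(\tfrac{k^2}{2}-1\Bigr)\log(k^2q)\le\tfrac{k^2}{4}\log(k^2q)\to-\infty .
\]
The last step uses $k\ge2$ and $\log(k^2q)<0$ eventually. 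Also, the hypothesis does not force $k^2q\to0$ when $k\to\infty$ (take $\log(k^2q)=-1/k$). It only gives $k^2q<1$ eventually, which is all that is used.

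Second, do not try to argue that two revealed entries always contribute under the pseudo-code of Algorithm~\ref{identical_search_distinguish_blind}; that is false. The inner loop requires the second position to be off-diagonal and index-disjoint from the first. So revealed entries sharing an index, or both lying on the diagonal, are never compared, and a revealed diagonal entry paired with a disjoint revealed off-diagonal entry contributes only $1$. Concretely, for $\|v\|_0=2$ the only revealed off-diagonal entry shares an index with both other revealed entries, so the literal count is $0$ almost surely, although the closed form tends to $0$. In general the pseudo-code only yields $\mathbb P(\mathcal T_{ag}<2)\ge q^N+N(1-q)q^{N-1}$, which is the wrong direction for an $o(1)$ bound. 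The identity holds for the verbal definition in the text: $\mathcal T_{ag}$ is the largest number of nonzero upper-triangular entries coinciding up to sign. The paper implicitly uses this definition, so state explicitly that you adopt it. Your ``secondary delicate point'' is routine: spurious coincidences form a finite union of null events, by atomlessness of each $W_{ij}$ and independence of $W$ from $(A,b,\tilde v)$.
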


\subsubsection{Distinguishability Main Result}
As previously mentioned, if we sample from a spike distribution with a fixed mask size, then one can directly reference Lemmas \ref{idensearch_typeI} and \ref{idensearch_typeII} to conclude a distinguishability result with the exact tail bounds from above given that the conditions of the respective lemmas are satisfied. Then, if one wants to distinguish for a constant $np$, one would need $q\ll 1.$ For brevity, we omit this case and instead state the distinguishability theorem for our model under the assumption $np\to\infty.$
\begin{theorem}[Identical Search Distinguishability]\label{iden_search_distinguish} Let $np\to\infty.$ \\
    \textbf{Spike Partial Info. } Assume we know $\frac{\theta}{np}$. Then, given $1-q\gg(np)^{-2}$, we can \textbf{Distinguish} between the planted and null models using the test statistic $\mathcal T_{partial}$. More specifically, Algorithm \ref{identical_search_distinguish_aware} achieves distinguishability with type I and type II errors
    \begin{align*}
        \mathbb P (\mathcal T_{partial}< 1)+\mathbb Q(\mathcal T_{partial}\ge 1) \le 2\max{\left\{q^{(np)^2/2}, (np)^{-2} \right\}}~.
    \end{align*}
    \textbf{Spike Agnostic. } Assume we do not know $\frac{\theta}{np}$. Then, given that $1-q\gg(np)^{-2},$ we can \textbf{Distinguish} between the planted and null models using the test statistic $\mathcal T_{ag}$. More specifically, Algorithm \ref{identical_search_distinguish_blind} achieves distinguishability with type I and type II errors
    \begin{align*}
        \mathbb P (\mathcal T_{ag}< 2)+\mathbb Q(\mathcal T_{ag}\ge 2) \le 2\max\left\{q^{(np)^2/8},{2(np)^2}(1-q)q^{(np)^2/8}  \right\}~.
    \end{align*}
\end{theorem}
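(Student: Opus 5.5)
The plan is to combine the two lemmas just established, Lemma \ref{idensearch_typeII} and Lemma \ref{idensearch_typeI}, with Lemma \ref{fixedmasklemma}, which fixes the size of the support. For the type I error I invoke Lemma \ref{idensearch_typeII} directly. Under the null $\mathbb Q$ we have $X=Y$. Every entry is either exactly $0$ or a continuous random variable (Assumption \ref{continuous_noise_dist_assumption}), so $\mathbb Q(\mathcal T_{partial}\ge1)=0$ and $\mathbb Q(\mathcal T_{ag}\ge2)=0$. Hence only the planted error needs to be bounded in both cases.

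For the type II error I condition on the event $\mathcal E:=\{\|v\|_0\ge np-\log(np)\sqrt{np}\}$. By Lemma \ref{fixedmasklemma}, $\mathbb P(\mathcal E^c)\le(np)^{-2}$, and on $\mathcal E$ we have $\|v\|_0\ge np/2$ for $n$ large.

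Conditional on the support, the $\|v\|_0(\|v\|_0+1)/2$ upper-triangular entries of $X_{\text{supp}(v)}$ equal $\pm\theta/(np)$ exactly when the corresponding mask entry $A_{ij}=0$. They are only possibly altered otherwise, and by continuity of $W$ an altered entry hits $\pm\theta/(np)$ with probability zero. So $\mathcal T_{partial}$ stochastically dominates a $\mathrm{Bin}(N,1-q)$ variable with $N=\|v\|_0(\|v\|_0+1)/2\ge (np)^2/8$ on $\mathcal E$. Lemma \ref{idensearch_typeI} then gives $\mathbb P(\mathcal T_{partial}<1\mid\mathcal E)\le q^{N}$.

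It remains to check this tends to zero. Write $q^{N}\le\exp(-(1-q)N)\le\exp(-(1-q)(np)^2/8)$, which tends to $0$ precisely because $1-q\gg(np)^{-2}$. Adding the $(np)^{-2}$ from $\mathcal E^c$ and the zero type I error, and bounding a sum of two terms by twice the maximum, yields the stated bound. The exponent $(np)^2/2$ in the statement is recovered by taking $\|v\|_0\ge np(1-o(1))$, which gives $N\ge (np)^2(1-o(1))/2$. I would simply track constants loosely there.

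For the agnostic statistic, a planted entry $\pm\theta/(np)$ appearing at two unmasked positions forces $\mathcal T_{ag}\ge2$. So $\{\mathcal T_{ag}<2\}$ is contained in the event that at most one of the $N$ masks is zero, whose probability is $q^N+N(1-q)q^{N-1}$, as in Lemma \ref{idensearch_typeI}. There is a subtlety with diagonal entries, which equal $+\theta v_i^2/(np)=\theta/(np)$; they still match $\pm X_{kl}$, so this is harmless.

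The main obstacle is the second term $N(1-q)q^{N-1}$. It decays only because $N(1-q)e^{-(1-q)N}\to0$ when $(1-q)N\to\infty$, so the same condition $1-q\gg(np)^{-2}$ suffices. I would bound $N\le 2(np)^2$ from above, using the upper side of Lemma \ref{fixedmasklemma}, and use $N-1\ge(np)^2/8$ in the exponent, matching the stated form $2(np)^2(1-q)q^{(np)^2/8}$.
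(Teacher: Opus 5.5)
Your proposal follows the paper's proof essentially step for step: condition on the support-size event of Lemma \ref{fixedmasklemma}, take the null error to be zero by Lemma \ref{idensearch_typeII}, bound the planted error by the formulas of Lemma \ref{idensearch_typeI} with $N\ge(np)^2/8$, and check that $q^N$ and $N(1-q)q^{N-1}$ vanish once $(1-q)(np)^2\to\infty$, where your inequality $\log q\le-(1-q)$ is a cleaner replacement for the paper's case split on whether $q\to1$. The remaining soft spots are shared with the paper rather than specific to you: $N\ge(np)^2(1-o(1))/2$ yields only $q^{(np)^2(1-o(1))/2}$, which is weaker than $q^{(np)^2/2}$ (the paper's own chain stops at $q^{(np)^2/8}$ and also drops the $(np)^{-2}$ term in the agnostic bound), and two unmasked spiked positions do not by themselves force $\mathcal T_{ag}\ge2$ in Algorithm \ref{identical_search_distinguish_blind}, which only counts vertex-disjoint pairs whose inner entry has $k<l$, although with $\|v\|_0\to\infty$ and $(1-q)(np)^2\to\infty$ two vertex-disjoint unmasked off-diagonal spiked entries exist with high probability, so the conclusion survives.
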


\begin{proof}
    Seeing as $np\to\infty$, we know $\|v\|_0=np+k$ for $k\in[-\log(np)\sqrt{np},\log(np)\sqrt{np} ]$ with probability $1-(np)^{-2}$ by Lemma \ref{fixedmasklemma}. 
    
    First, we consider the Spike Partial Info case. Combining Lemmas \ref{idensearch_typeI} and \ref{idensearch_typeII}, we find the type I and type II errors go to $0$ when $$q^{\|v\|_0(\|v\|_0+1)/2}\le q^{(\|v\|_0-1)(\|v\|_0+1)/2}\le q^{\|v\|_0^2/4}\le q^{(np)^2/8}=o(1)~.$$ Thus, when $(np)^2\log(q)\to-\infty$, both errors vanish. When $q\not\to 1,$ this is satisfied by our condition $np\to\infty$ seeing as $\log(q)<0$. When $q\to1,$ we can approximate $(np)^2\log(q)\approx-(np)^2(1-q)$, which diverges to $-\infty$ when $1-q\gg(np)^{-2}.$
    
    Next, considering the Spike Agnostic case, we need $$q^{\frac{\|v\|_0(\|v\|_0+1)}{2}}+\frac{\|v\|_0(\|v\|_0+1)}{2}(1-q)q^{\frac{\|v\|_0(\|v\|_0+1)}{2}-1}\to0~.$$

    Substituting in our value for $\|v\|_0$ and considering leading terms, we see this term is upper bounded by $2\max\left\{q^{(np)^2/8},{2(np)^2}(1-q)q^{(np)^2/8}  \right\},$ where the larger term depends on the values of $q$ and $n.$ From the analysis of the Spike Partial Info case, we know this occurs when $1-q\gg(np)^{-2},$ and we will show this condition also sends the second term to $0$ as well. Consider $1-q\gg1/(np)^{-2};$ more specifically, let $q=1-\tau(n)/(np)^2$ where $1\ll\tau\ll(np)^2.$ Then, we have $$(np)^2(1-q)q^{(np)^2/8}=\tau(1-\frac{\tau}{(np)^2})^{(np)^2/8-1}\approx\tau\exp(-\tau/2)\to0~,$$ as $n\to\infty$ seeing as $np\to\infty$.
\end{proof}

\subsection{Recovery}

Consistent with the above results, Identical Search recovery depends entirely on $p$ and $q$. We continue to condition on the spike support submatrix being fixed in size. Just as we did when distinguishing, we can consider two cases: spike partial info and spike agnostic. For brevity, we will proceed in the standard Bayes-optimal (partial info) setting. However, all of the following procedures would hold (with minor modifications), and the difference between settings would only matter with extreme spike sparsity where $\|v\|_0\in\{0,1,2\}$.

To gain an intuition for our more nuanced Identical Search procedure, we first observe how elementary probability calculations provide a threshold for exact recovery through a naive, row-by-row identical search. Specifically, given that we search every row of the matrix for entries equal to $\pm\frac{\theta}{np}$, we can consider the probability that, for some $i\in\text{supp}(v),$ $(Y_{\text{supp}(v)})_{ij}=0$ for all $j\in\text{supp}(v)$. Formally, let $S_i:=|\{(X_{\text{supp}(v)})_{ij}=\pm\frac{\theta}{np}:j\in\text{supp}(v) \}|$. The probability calculations easily follow when one realizes each entry in the row has probability $(1-q)$ of being equal to $\pm\frac{\theta}{np}$:

\begin{theorem}[Exact Recovery under Sub-Optimal Sparsity Conditions by Naive Row Search]\label{iden_search_recov_suboptimal}
    Consider the procedure of searching every row of $X$ for an entry equal to $\pm\frac{\theta}{np}$, then constructing a recovery vector $v^*$ corresponding to the row with the maximum number of these entries. 
    
    Given either (1) $q\ll\frac{1}{np}\ll1$ or (2) $np=\Theta(1)$ and $q\ll1$, we can achieve \textbf{Exact Recovery} through the above procedure. More specifically, $$\mathbb P ( \{ \exists ~i\in \text{supp}(v) : A_{ij}=0~~ \forall j\in\text{supp}(v) \})=\mathbb P(\cup_{i=1}^{\|v\|_0}\{S_i=\|v\|_0 \}) \ge \mathbb P (S_i=\|v\|_0)=(1-q)^{\|v\|_0}~=1-o(1).$$
\end{theorem}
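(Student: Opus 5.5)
Fix a row $i\in\operatorname{supp}(v)$ and look at the event that its restriction to the spiked block is entirely noise-free. For $j\in\operatorname{supp}(v)$ we have $X_{ij}=\frac{\theta}{np}v_iv_j+Y_{ij}$. If $A_{ij}=0$, then $X_{ij}=\pm\frac{\theta}{np}$. If $A_{ij}=1$, Assumption \ref{continuous_noise_dist_assumption} gives $X_{ij}\neq\pm\frac{\theta}{np}$ almost surely, because $W_{ij}$ has no atoms. The same continuity argument as in Lemma \ref{idensearch_typeII} shows that for $i\notin\operatorname{supp}(v)$ every row entry is either $0$ or a continuous random variable, so it equals $\pm\frac{\theta}{np}$ with probability zero. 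Hence, almost surely, $S_i$ counts exactly the unmasked entries of row $i$ inside the block, and rows outside the support contribute no hits at all.

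The probability bound then follows directly. Condition on $\|v\|_0$. The entries $\{A_{ij}\}_{j\in\operatorname{supp}(v)}$ of a fixed row $i$ are independent Ber$(q)$; symmetry of $A$ does not matter within a single row. So $\mathbb P(S_i=\|v\|_0)=(1-q)^{\|v\|_0}$, and the union over $i$ can only make this probability larger. On the event $S_i=\|v\|_0$, row $i$ has the maximal possible count $\|v\|_0$. Its hits are located exactly at $\operatorname{supp}(v)$, and their signs are $\operatorname{sgn}(v_iv_j)$. Setting $v^*_j:=\operatorname{sgn}(X_{ij})$ on the hits gives $v^*=v_i v$, so $\mathcal R(v^*,v)=1$.

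We must also rule out a competing row with an equal count but a wrong support. Any row $i'\in\operatorname{supp}(v)$ with $S_{i'}=\|v\|_0$ is also fully unmasked, so it produces $v_{i'}v$, which is equally valid. A row with a smaller count is never selected. Rows outside the support have zero count almost surely. So the tie-breaking step is harmless.

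The remaining step is the asymptotics $(1-q)^{\|v\|_0}\to1$, which I expect to be the only delicate point. In case (2), $\|v\|_0=O_P(1)$. I would use the binomial tail bound from Corollary \ref{quad_form_weak_strong_corollary}: with $\tau_5\to\infty$ slowly so that $\tau_5 q\to0$, we get $\|v\|_0\le\tau_5$ w.h.p., and then $(1-q)^{\tau_5}\ge1-\tau_5q\to1$. In case (1), if $np\to\infty$, Lemma \ref{fixedmasklemma} gives $\|v\|_0\le 2np$ with probability $1-(np)^{-2}$, so $(1-q)^{\|v\|_0}\ge 1-2npq\to1$ because $q\ll 1/np$. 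If $np$ stays bounded, we are back in the reasoning of case (2). Adding the failure probabilities of these support-size events to the tail gives the claimed $1-o(1)$.
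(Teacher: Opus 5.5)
Your proposal is correct and follows the same route as the paper, whose entire justification is the one-row computation $\mathbb P(S_i=\|v\|_0)=(1-q)^{\|v\|_0}$ displayed in the statement itself. You also supply what the paper leaves implicit: continuity of $W$ rules out spurious hits, so the maximizing row outputs $\pm v$, and the random support size is controlled via Lemma \ref{fixedmasklemma} or the binomial tail. Two small points remain: the identity $\mathbb E[(1-q)^{\|v\|_0}]=(1-pq)^n\ge 1-npq$ handles both regimes at once, and the case $\|v\|_0=0$ (where the procedure simply returns $0$) is spelled out neither in your proposal nor in the paper.
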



\begin{remark}
    If one is solely interested in recovering the support (and not the signs), then one could simply search the diagonal of $X$ and achieve Exact Support Recovery with the same probability equal to the one above. The advantage of this adaptation would be decreasing the runtime to $O(n).$
\end{remark}

This idea is meant to serve as a stepping stone rather than provide comprehensive sparsity results. Note that we only exploit information from one of the rows of the mask.  

Instead, one could consider using Bennett's Inequality to find an interpretable tail bound for the probability of $S_i$. However, this direction of research proves to be suboptimal. After performing an extensive investigation using Bennett's Inequality, we obtained weaker results (in terms of sparsity) than what we are about to present. The reason is because Bennett's Inequality exploits knowledge of only one row, whereas the following algorithm and theorem exploits information from the entire matrix.

To answer this question, we shift perspectives and view the problem through the lens of graph theory to derive better sparsity conditions. If we view $X$ as a weighted adjacency matrix, then we can view the problem  as recovering the subgraph with edges equal to $\pm\frac{\theta}{np}$, i.e. $X_{\text{supp}(v)}-Y_{\text{supp}(v)}$. 

When $\|v\|_0\to\infty$, the complement of the noise submatrix blurring the spike clique can be thought of as a Erdos-Renyi graph with loops $\mathcal G(\|v\|_0,1-q).$ Therefore, if this complement of the noise mask is a connected graph, then Algorithm \ref{identical_search_recovery_alg} can completely recover the support and signs of the planted clique. 

\begin{algorithm}
\caption{Exact Recovery by Identical Search}\label{identical_search_recovery_alg}
\begin{algorithmic}
\STATE \vskip-1ex \vspace{-.1in} 
    \STATE \noindent
\begin{minipage}[t]{0.45\linewidth}
\STATE Input: $X\sim\mathbb P$, $\theta/p$.
\STATE Initialize $\widehat v\gets 0\in\mathbb R^n$ and $\mathcal V\gets\emptyset$.
\STATE \textbf{Stage 1: Finding Initial Vertex}
\FOR{$i\le j$ with ${i,j}\in[n]^2$}
\IF{$X_{i,j}=\pm\frac{\theta}{np}$}
\STATE $\widehat v_i\gets 1$
\STATE Add $i$ to $\mathcal V$.
\STATE \textbf{Terminate For Loop.}
\ENDIF
\ENDFOR
\STATE Initialize $\mathcal Q$ with vertex $i\in\mathcal V$.
\end{minipage}%
    \hfill
\begin{minipage}[t]{0.45\linewidth}
\STATE \textbf{Stage 2: Graph Walking}
\WHILE{$\mathcal Q\neq\emptyset$}
\STATE Remove a vertex $i$ from $\mathcal Q$.
\FOR{$j\in[n]$}
\IF{$X_{i,j}=\pm\frac{\theta}{np}$ \textbf{ and } $j\notin\mathcal V$}
\STATE $\widehat v_j\gets\widehat v_i\operatorname{sgn}(X_{i,j})$
\STATE Add $j$ to $\mathcal V$.
\STATE Add $j$ to $\mathcal Q$.
\ENDIF
\ENDFOR
\ENDWHILE
\STATE \textbf{Output:} $\widehat v$.
\end{minipage}
\STATE \vskip1ex \vspace{-.1in}
\end{algorithmic}
\end{algorithm}

\begin{remark}
    In the worst case scenario, we examine $O(n^2)$ entries.
\end{remark}

\begin{theorem}[Exact Recovery by Identical Search when $np\to\infty$]
\label{iden_search_recov_thm_np_to_inf}
Given $np\to\infty$, let $\tau_6:=\tau_6(np)$ satisfy
\begin{align}
\tau_6(np)\to\infty
\qquad\text{and}\qquad
\tau_6(np)=o\left(\frac{\sqrt{np}}{\log(np)}\right).
\label{tau5_condition}
\end{align}
Then Algorithm \ref{identical_search_recovery_alg} achieves \textbf{Exact Recovery} under the assumption
\begin{align}
q\le1-\frac{\log(np)+\tau_6(np)}{np}.
\label{iden_search_recov_q_cond_np_inf}
\end{align}
More specifically, given $v^*$ is the output of Algorithm \ref{identical_search_recovery_alg} and \eqref{iden_search_recov_q_cond_np_inf} is satisfied,
\begin{align}
\mathbb P(\mathcal R(v^*,v)=1)
\ge
1-(np)^{-2}
-\exp\left(-\exp\left(-\frac{\tau_6(np)}{2}\right)\right)
=1-o(1).
\end{align}
\end{theorem}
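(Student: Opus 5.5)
The plan is to reduce exact recovery by Algorithm \ref{identical_search_recovery_alg} to a connectivity statement about an Erd\H{o}s--R\'enyi graph on the spike support, then import the classical sharp connectivity asymptotics. The first step is to condition on $S=\operatorname{supp}(v)$ and set $K:=|S|$. By Lemma \ref{fixedmasklemma}, with probability at least $1-(np)^{-2}$ we have $K\in[np-\log(np)\sqrt{np},\,np+\log(np)\sqrt{np}]$. This is the source of the $(np)^{-2}$ term in the bound.

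Next I establish two deterministic-given-noise facts.

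\textbf{Fact 1: no false hits.} Almost surely, no entry outside $S\times S$ equals $\pm\theta/(np)$. Such an entry is either $0$ (masked noise, with $\theta>0$) or a continuous random variable, so it hits a fixed atom with probability zero; this is the same argument as Lemma \ref{idensearch_typeII}. Likewise, an entry in $S\times S$ with $A_{ij}=1$ equals $\theta v_iv_j/(np)+W_{ij}/\sqrt{nq}$, which is continuous and so a.s.\ does not equal $\pm\theta/(np)$.

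\textbf{Fact 2: exact hit set.} Consequently, a.s.\ the set of pairs with $X_{ij}=\pm\theta/(np)$ is exactly $\{(i,j)\in S\times S: A_{ij}=0\}$. On those pairs $\operatorname{sgn}(X_{ij})=v_iv_j$.

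So the graph walked by Stage 2 is $G:=\{(i,j)\in S^2: A_{ij}=0\}$, which is an Erd\H{o}s--R\'enyi graph $\mathcal G(K,1-q)$ (self-loops are irrelevant for connectivity). Stage 1 finds some vertex $i_0\in S$ provided $G$ has at least one edge or loop. The BFS then assigns $\widehat v_j=\widehat v_{i_0}\prod v_av_b$ along a path, which equals $v_{i_0}v_j$ since each $v_a^2=1$. Hence if $G$ is connected, $\widehat v=v_{i_0}v$, the support is exact, and $\mathcal R(v^*,v)=1$ with $\omega=v_{i_0}$. A connected graph with $K\ge2$ has an edge, so Stage 1 succeeds.

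It then remains to bound $\mathbb P(G\text{ disconnected})$ for $\mathcal G(K,\pi)$ with $\pi:=1-q\ge(\log(np)+\tau_6)/np$. I first convert this to the natural scale $\pi\ge(\log K+\tau_6/2)/K$, using the window for $K$.

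This conversion is the main obstacle, since one must check that the fluctuation $K=np(1+O(\log(np)/\sqrt{np}))$ costs at most $\tau_6/2$ in the exponent. We have
\[
\pi K\ge(\log(np)+\tau_6)\bigl(1-O(\log(np)/\sqrt{np})\bigr)
\]
and $\log K=\log(np)+o(1)$. The error term $(\log(np)+\tau_6)\log(np)/\sqrt{np}$ is $o(1)+o(\tau_6)$ precisely because $\tau_6=o(\sqrt{np}/\log(np))$, which is where condition \eqref{tau5_condition} enters. Monotonicity of connectivity in $\pi$ lets us couple down to $\pi'=(\log K+c)/K$ with $c=\tau_6/2\to\infty$.

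I would then invoke the Erd\H{o}s--R\'enyi theorem that $\mathbb P(\mathcal G(K,\pi')\text{ connected})\to e^{-e^{-c}}$. More carefully, I would use the standard first-moment bound: the probability that $G$ is disconnected is at most
\[
\sum_{k=1}^{K/2}\binom{K}{k}(1-\pi')^{k(K-k)},
\]
whose $k=1$ term is about $e^{-c}$ and whose remaining terms are of smaller order. This gives a failure probability of at most $1-\exp(-e^{-\tau_6/2})\le e^{-\tau_6/2}$ up to $(1+o(1))$, matching the stated form $1-\exp(-\exp(-\tau_6/2))$ being the connectivity probability. Combining this with the $(np)^{-2}$ support-size event through a union bound yields the theorem.
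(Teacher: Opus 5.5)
Your argument is essentially the paper's. You condition on the support-size window of Lemma \ref{fixedmasklemma}, identify the pairs with $X_{ij}=\pm\theta/(np)$ with the noise-free edges of $\mathcal G(\|v\|_0,1-q)$ on $\operatorname{supp}(v)$, absorb the fluctuation of $\|v\|_0$ to get $(1-q)\|v\|_0\ge\log\|v\|_0+\tau_6/2$, and invoke Erd\H{o}s--R\'enyi connectivity. The paper leaves implicit the details you add: almost-sure exactness of the hit set via continuity, sign propagation along BFS paths, and a first-moment bound that stays quantitative for $c=\tau_6/2\to\infty$ rather than giving a limit at fixed $c$.

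The one thing to correct is your closing claim that your bound ``matches the stated form.'' It does not, and it is the stated form that is wrong.

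\begin{itemize}
\item Since $\tau_6\to\infty$, we have $\exp(-\exp(-\tau_6/2))\to1$. So the displayed bound $1-(np)^{-2}-\exp(-\exp(-\tau_6/2))$ tends to $0$, and the asserted ``$=1-o(1)$'' fails as written.
\item The paper's proof inverts the Erd\H{o}s--R\'enyi limit. It asserts connectivity with probability at least $1-\exp(-\exp(-\tau_6/2))$, whereas the connectivity probability is about $\exp(-\exp(-\tau_6/2))$.
\item What you derived is the correct term: a failure probability of at most $(1+o(1))\bigl(1-\exp(-e^{-\tau_6/2})\bigr)\le(1+o(1))e^{-\tau_6/2}$. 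This gives $\mathbb P(\mathcal R(v^*,v)=1)\ge1-(np)^{-2}-(1+o(1))e^{-\tau_6/2}=1-o(1)$. For large $n$ it also trivially implies the displayed inequality.
\end{itemize}

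Present your bound as a correction of the tail term, not as agreement with it.
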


\begin{proof}
As described above, we are concerned with finding the threshold where the shadow graph $\mathcal G(\|v\|_0,1-q)$ is connected. By Lemma \ref{fixedmasklemma}, with probability at least $1-(np)^{-2}$,
\begin{align}
np-\log(np)\sqrt{np}
\le \|v\|_0
\le
np+\log(np)\sqrt{np}.
\label{support_size_event}
\end{align}
We condition on the event in \eqref{support_size_event} and write
\begin{align}
\|v\|_0=np(1+\delta),
\qquad
|\delta|\le \frac{\log(np)}{\sqrt{np}}.
\label{delta_definition}
\end{align}
By \eqref{iden_search_recov_q_cond_np_inf},
\begin{align*}
(1-q)\|v\|_0
&\ge
\frac{\log(np)+\tau_6(np)}{np}\|v\|_0
=
(1+\delta)\big(\log(np)+\tau_6(np)\big).
\end{align*}
On the other hand, $\log \|v\|_0
=
\log(np)+\log(1+\delta).$
Therefore,
\begin{align}
(1-q)\|v\|_0-\log \|v\|_0
&\ge
\tau_6(np)
+\delta\big(\log(np)+\tau_6(np)\big)
-\log(1+\delta).
\label{connectivity_margin}
\end{align}
From \eqref{delta_definition},
$$
|\delta|\log(np)
\le
\frac{\log^2(np)}{\sqrt{np}}
=o(1).
$$
Furthermore, by \eqref{tau5_condition},

$$
|\delta|\tau_6(np)
\le
\frac{\tau_6(np)\log(np)}{\sqrt{np}}
=o(1).
$$
Thus, uniformly over the support sizes satisfying \eqref{support_size_event}, 
\begin{align}
\delta\big(\log(np)+\tau_6(np)\big)
-\log(1+\delta)
=o(1)\implies (1-q)\|v\|_0-\log \|v\|_0
\ge
\tau_6(np)-o(1).
\label{connectivity_margin_final}
\end{align}
Since $\tau_6(np)\to\infty$, for sufficiently large $np$,
\begin{align}
(1-q)\|v\|_0
\ge
\log \|v\|_0+\frac{\tau_6(np)}{2}.
\label{connectivity_condition}
\end{align}
Taking $c={\tau_6(np)}/{2}$
in the context of \cite[Theorem 7.3]{bollobas2001random}, \eqref{connectivity_condition} implies that the shadow graph $\mathcal G(\|v\|_0,1-q)$ is connected with probability at least 
\begin{align}
1-\exp\left(-\exp\left(-\frac{\tau_6(np)}{2}\right)\right).
\end{align}
Therefore, after removing the conditioning on the support-size event and using Lemma \ref{fixedmasklemma},
\begin{align}
\mathbb P(\mathcal R(v^*,v)=1)
&\ge
1-(np)^{-2}
-\exp\left(-\exp\left(-\frac{\tau_6(np)}{2}\right)\right).
\end{align}
Both error terms converge to zero since $np\to\infty$ and $\tau_6(np)\to\infty$, completing the proof.
\end{proof}


The analysis above does not hold when considering $\|v\|_0=\Theta(1),$ seeing as this corresponds to a case where the graph has a fixed number of vertices that does not grow with $n.$ However, we can still explicitly compute the regimes of $q$ where this fixed vertex subgraph is connected. Specifically, we consider a graph of fixed size $\|v\|_0\times \|v\|_0$ that does not grow with with $n.$ 

When $\|v\|_0=0,$ Algorithm \ref{identical_search_recovery_alg} will trivially return nothing, as there is no spike to recover. When $\|v\|_0=1,$ we are looking at just the probability of there being a loop on this one vertex, i.e. $\mathbb P (\mathcal G \text{ connected})=1-q.$ For $\|v\|_0>1,$ the probabilistic computations complicate further. As we did above, we will consider the loop-less case, i.e. ignoring the diagonal, seeing as further computation would show the asymptotic conditions on $q$ would not change. 

\begin{theorem}[Exact Recovery by Identical Search for $np=\Theta(1)$]\label{iden_search_recovery_thm_const_np}
Let $np=\Theta(1)$. Given $q\ll 1,$ Algorithm \ref{identical_search_recovery_alg} achieves \textbf{Exact Recovery} with probability $1-o(1)$. More specifically, conditional on $\|v\|_0=k$, given $v^*$ as the output of Algorithm \ref{identical_search_recovery_alg}, we have
\begin{align*}
\mathbb P (\mathcal R(v^*,v)=1\mid \|v\|_0=0)&=1;\\
\mathbb P (\mathcal R(v^*,v)=1\mid \|v\|_0=1)&\ge1-q=1-o(1);\\
\mathbb P (\mathcal R(v^*,v)=1\mid \|v\|_0=k)&\ge1-4q=1-o(1),
\end{align*}
for every fixed $k\ge2$. Since $\|v\|_0\sim\operatorname{Bin}(n,p)$ with $np=\Theta(1)$, the sequence ${\|v\|_0}_{n\ge1}$ is tight, and hence these conditional guarantees imply
\(        \mathbb P(\mathcal R(v^*,v)=1)=1-o(1).
   \)
\end{theorem}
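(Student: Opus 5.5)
Conditional on the support $S=\operatorname{supp}(v)$ with $|S|=k$, the plan is to reduce exact recovery to two events: Stage 1 finds a vertex that truly lies in $S$, and the ``revealed'' graph on $S$ is connected. The revealed graph has edge $\{i,j\}$ (loops allowed) present iff $A_{ij}=0$, with these indicators i.i.d. Bernoulli$(1-q)$. By Assumption \ref{continuous_noise_dist_assumption} and Lemma \ref{idensearch_typeII}, almost surely no entry outside $S\times S$ equals $\pm\frac{\theta}{np}$. No entry in $S\times S$ with $A_{ij}=1$ equals $\pm\frac{\theta}{np}$ either, since $\frac{\theta}{np}v_iv_j+\frac{1}{\sqrt{nq}}W_{ij}$ has a continuous law. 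Consequently the entries equal to $\pm\frac{\theta}{np}$ are exactly the revealed pairs inside $S$, and each such entry has sign $v_iv_j$.

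Granting this, Algorithm \ref{identical_search_recovery_alg} behaves as follows. Stage 1 picks some $i\in S$ incident to a revealed pair and sets $\widehat v_i=1=v_i\cdot v_i$. Note that Stage 1 finds a pair $(i,j)$ with $i\le j$, and either endpoint is in $S$. Stage 2 performs a breadth-first search on the revealed graph, setting $\widehat v_j=\widehat v_i\,\mathrm{sgn}(X_{ij})$. By induction along the BFS tree, $\widehat v_j = v_{i_0}v_j$ for every visited $j$, where $i_0$ is the start vertex. If the revealed graph on $S$ is connected, every vertex of $S$ is visited and no vertex outside $S$ is ever visited, so $\widehat v=v_{i_0}v$ and $\mathcal R(\widehat v,v)=1$. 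Hence it suffices to lower bound the probability of connectivity. For $k\ge2$ we also need at least one revealed off-diagonal pair, but that is implied by connectivity.

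The case computations are then short. For $k=0$, no entry equals $\pm\frac{\theta}{np}$, so the output is $0=v$ and the probability is $1$. For $k=1$ the only candidate is the diagonal entry $(i,i)$, revealed with probability $1-q$, which gives the bound $1-q$. For fixed $k\ge2$, connectivity of the loopless $\mathcal G(k,1-q)$ fails only if some nonempty proper subset $U$ has all edges to $U^c$ masked. A union bound gives
\[
\mathbb P(\text{disconnected})\le \sum_{s=1}^{\lfloor k/2\rfloor}\binom{k}{s}q^{s(k-s)}.
\]
This must be bounded by $4q$, and this bookkeeping is the main obstacle. The intended approach is to use $s(k-s)\ge (k-1)s$ together with $\binom ks\le k^s$, so the sum is at most $\sum_{s\ge1}(kq^{k-1})^s$. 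For $k=2$ this is exactly $q$. For $k\ge3$ the term $kq^{k-1}\le kq^2$ is $\le q/2$ once $q\le 1/(2k)$, which gives a total below $2q\le4q$.

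However, ``for every fixed $k$'' really needs $q\le 1/(2k)$, i.e.\ $q$ small relative to $k$. Since $k$ is fixed and $q\to0$, this holds for $n$ large. To be safe, I would instead bound $\mathbb P(\text{disconnected})\le k\,q^{k-1}+\sum_{s\ge2}\binom ks q^{s(k-s)}$ and simply note that the whole sum is $O_k(q^{k-1})=o(1)$. The stated constant $4$ only needs to hold for $n$ large.

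Finally, to remove the conditioning, split on $\{\|v\|_0\le M\}$. Tightness of Bin$(n,p)$ with $np=\Theta(1)$ makes $\mathbb P(\|v\|_0>M)\le\varepsilon$ for a suitable fixed $M$. On $\|v\|_0\le M$, the conditional failure probabilities are $o(1)$ uniformly, since there are finitely many $k$ and each is $O_M(q)$. Therefore $\limsup\mathbb P(\mathcal R\neq1)\le\varepsilon$ for every $\varepsilon>0$, which gives $1-o(1)$.
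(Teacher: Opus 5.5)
Your proposal is correct and follows essentially the paper's route. You condition on $\|v\|_0=k$, handle $k\in\{0,1\}$ directly, union-bound over cut sets of size $s\le\lfloor k/2\rfloor$ to get $\sum_{s}\binom ks q^{s(k-s)}$, and remove the conditioning via tightness of $\operatorname{Bin}(n,p)$. Your continuity/BFS argument that connectivity of the revealed graph forces $\widehat v=v_{i_0}v$ is a useful detail that the paper leaves implicit.

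One step of your ``intended approach'' is wrong. The inequality $s(k-s)\ge (k-1)s$ holds only for $s\le 1$; for $k=4$, $s=2$ it reads $4\ge 6$. So the termwise bound $\binom ks q^{s(k-s)}\le (kq^{k-1})^s$ fails for small $q$. Your fallback bound of $O_k(q^{k-1})$ is valid.

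The paper's bookkeeping is simpler and explains where the constant $4$ comes from. Since $s(k-s)\ge k-1$ for $1\le s\le k-1$, the sum is at most $2^kq^{k-1}=4q\,(2q)^{k-2}\le 4q$ whenever $q\le 1/2$. This bound is uniform in $k$, so the tightness step is in fact dispensable.
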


\begin{proof} Since $np=\Theta(1)$ and $\|v\|_0\sim\operatorname{Bin}(n,p)$, the sequence ${\|v\|_0}_{n\geq1}$ is tight. Thus, for every $\varepsilon>0$, there exists a fixed integer $M$ such that
$
\mathbb P(\|v\|_0>M)\leq\varepsilon
$
for all sufficiently large $n$.

We first condition on the event $\|v\|_0=k$ for a fixed $k$. If $k=0$, the algorithm correctly returns the zero vector, since there are no spiked entries. If $k=1$, the support can be recovered directly, while sign recovery is not required. We therefore consider $k\geq2$.

Conditional on $\|v\|_0=k$, the induced graph on the $k$ vertices in $\operatorname{supp}(v)$ has independent edges with probability $1-q$. In order for this graph to be disconnected, there must exist a connected component of size $m\in\left[1,\left\lfloor\frac{k}{2}\right\rfloor\right].$ For a fixed subset of $m$ vertices to form a disconnected component, all $m(k-m)$ edges between this subset and its complement must be absent. Hence, if $A_m$ denotes the event that there exists a disconnected component of size $m$, then

$$
\mathbb P(A_m\mid \|v\|_0=k)
\leq
\binom{k}{m}q^{m(k-m)}.
$$
By a union bound, when $q\to0,$
\begin{align*}
\mathbb P(\mathcal G\text{ is disconnected}\mid \|v\|_0=k)
\leq
\sum_{m=1}^{\lfloor k/2\rfloor}
\binom{k}{m}q^{m(k-m)}\
\leq
\sum_{m=1}^{\lfloor k/2\rfloor}
\binom{k}{m}q^{k-1}
\leq
2^kq^{k-1}=o(1)~.
\end{align*}
Now choose $M$ sufficiently large that $\mathbb P(\|v\|_0>M)\leq\varepsilon$ for all sufficiently large $n$. Then
\begin{align*}
\mathbb P(\mathcal G\text{ is disconnected})
&\leq
\mathbb P(\|v\|_0>M)
+
\sum_{k=2}^{M}
\mathbb P(\mathcal G\text{ is disconnected}\mid \|v\|_0=k)\mathbb P(\|v\|_0=k)\\
&\leq
\varepsilon
+
\max_{2\leq k\leq M}
\mathbb P(\mathcal G\text{ is disconnected}\mid \|v\|_0=k).
\end{align*}
Since $M$ is fixed and the conditional probability tends to zero for every fixed $k$, we obtain

$$
\limsup_{n\to\infty}
\mathbb P(\mathcal G\text{ is disconnected})
\leq\varepsilon.
$$
As $\varepsilon>0$ was arbitrary,
$
\mathbb P(\mathcal G\text{ is disconnected})\to0.
$
\end{proof}

\begin{remark}
    When $np=\Theta(1)$, notice Algorithm \ref{identical_search_recovery_alg} and the naive recovery of Theorem \ref{iden_search_recov_suboptimal} have the same sparsity condition: $q\ll1.$ Intuitively, this is because, for a fixed number of vertices, the graph being connected with high probability and at least one vertex being connected to every other vertex with high probability both occur under the condition $q\to0.$ One could default to the naive procedure of Theorem \ref{iden_search_recov_suboptimal}, but we include the above proof to show that Algorithm \ref{identical_search_recovery_alg}, which in practice has a shorter runtime, will also be successful in this sparsity regime. 
    
    Note in the special case $\|v\|_0=1,$ achieving exact recovery and exact (support) recovery are identical tasks as one can only ever achieve sign recovery up to a universal sign flip. 
    
    Furthermore, in the setting $np=\Theta(1),$ we could apply a Bennett's Inequality argument to achieve Almost Exact Recovery, but we exclude the details as they do not improve on the above result. Specifically, when $np=\Theta(1),$ a Bennett's Inequality argument requires $q\to 0$ to achieve Almost Exact recovery, whereas the above algorithm provides Exact Recovery under the same assumption.
\end{remark}

\paragraph{Acknowledgments} This project was supported by the following National Science Foundation (NSF) grant: DMS-2154099. During the preparation of this work, the authors used ChatGPT-4 and Claude Opus 5.5 to brainstorm preliminary proof strategies, verify algebraic steps, aid in literature review and citation, create the numerical experiments, and run a thorough review of the final manuscript. The authors reviewed and verified all mathematical outputs for accuracy and take full responsibility for the final contents of this manuscript.

\newpage

\appendix
\section{Appendix}
\subsection{AMP Literature Review}\label{section:AMP_lit_review}
Developed for compressed sensing \citep{Donoho_2009}, AMP is an iterative algorithm combining matrix multiplication with an entrywise denoising function and an Onsager correction that compensates for correlations created by reusing the same matrix. Its asymptotic behavior can be described by a scalar recursion, known as state-evolution, which tracks an effective Gaussian denoising problem \citep{Bayati_2011}. As emphasized by \cite{perry2016optimalitysuboptimalitypcaspiked}, AMP can share the recovery threshold of PCA while producing more accurate estimates above that threshold. For dense Gaussian noise ($q=1$) and an i.i.d. nonnegative Bernoulli spike with fixed density ($p=\Theta(1))$, \cite{deshpande2014informationtheoretically} proved that Bayes AMP attains the asymptotic minimum matrix mean-squared error in a regime where the associated scalar fixed-point equation has a unique solution. The nonnegative prior is important here and should not be confused with the centered sparse Rademacher prior. For the Gaussian spiked Wigner model, \cite{montanari2021estimation}
established state-evolution with spectral initialization above the BBP threshold ($\theta>1$). Their analysis includes known bounded priors, such as the sparse Rademacher prior with fixed $p$, and shows that prior-dependent denoising can improve
on PCA, with Bayes-optimality under a uniqueness condition on the informative
state-evolution fixed point.
For centered priors, however, zero overlap is itself a state-evolution fixed
point, and spectral initialization does not supply a correlated seed below
the BBP threshold; these guarantees therefore do not establish recovery
throughout the below-BBP information-theoretically possible regime
\citep{montanari2021estimation}.
More recently, \cite{li2022nonasymptotic} obtained finite-sample guarantees
for soft-thresholding AMP in the Gaussian spiked Wigner model, allowing a
growing number of iterations under explicit signal, sparsity, and
initialization assumptions.
Such mean squared error guarantees must be distinguished from exact support
and sign recovery.
Gaussianity is not essential to AMP itself:
\cite{bayati2015universality} established universality for classes of
non-Gaussian matrices, and \cite{hachem2024approximate} developed state-evolution for matrices with sparse variance profiles.
Nevertheless, these extensions do not by themselves transfer the preceding
recovery and optimality guarantees to \eqref{model_sparsity_introduction}.
In our model, the Bernoulli noise mask introduces an atom at zero and, when
$q\to0$, an $n$-dependent normalized noise distribution.
Extending the Gaussian recovery theory therefore requires checking the
noise and initialization assumptions across the spike and noise sparsity
regimes, rather than treating noise sparsity as a direct consequence of the
dense theory.

\subsection{Tables \ref{distinguish_table} and \ref{recovery_table} extended to include small-ball}\label{section:tables_extended}
See Section \ref{section:table_descriptions} for a full description of method thresholds and assumptions. For these tables, we adapt Tables \ref{distinguish_table} and \ref{recovery_table}, respectively, by adding in the thresholds for the small-ball prior assumption (Assumption \ref{spike_small_ball_assumption}). This addition impacts Quadratic and Diagonal Thresholding. To account for the separation between success and runtime for the different assumptions, we use the following notation: 

In Table \ref{distinguish_table_full}/Table \ref{recovery_table_full}, \textcolor{safePink}{$\textbf{QT}_p$} represents the Quadratic Thresholding polynomial-time algorithm achieving Distinguishability/Exact Recovery for a Rademacher spike and Distinguishability/Strong Recovery, respectively, for a Small-Ball spike (and thus also a Lower Bounded spike) planted in sub-Gaussian noise. \textcolor{safePink}{$\textbf{QT}$} represents the same, but both priors succeed in non-polynomial time. \textcolor{safePink}{$\textbf{QT}_{(p)}$} represents the same, but the lower bounded prior runs in polynomial time, while the small-ball prior runs in non-polynomial time. \textcolor{safePink}{$\textbf{QT}_p^{L}$}/\textcolor{safePink}{$\textbf{QT}^{L}$} represent the same, but the small-ball prior no longer succeeds and the lower bounded prior runs in polynomial/non-polynomial time, respectively.

Similarly, in Table \ref{distinguish_table_full}/Table \ref{recovery_table_full}, \textcolor{safeBlue}{$\textbf{DT}_l$} represents Diagonal Thresholding achieving Distinguishability/Strong Recovery, respectively, for the small-ball prior (and thus the lower bounded prior) in linear time. \textcolor{safeBlue}{$\textbf{DT}_l^L$} represents the same, but only the lower bounded spike prior succeeds.

\begin{table}[H]
    \centering
    \renewcommand{\arraystretch}{1.5} 
\begin{tabular}{|c|c|c|c|c|c|c|c|} \hline
    \multicolumn{7}{|c|}{\cellcolor{lightgray}\textbf{Doubly Sparse Distinguishability}} \\ \hline
    
    \cellcolor{lightgray}$1-q\ll 1/n^2$ & \cellcolor{black} & \textcolor{safePink}{$\textbf{QT}_p$}, \textcolor{safeBlue}{$\textbf{DT}_l$}\cellcolor{backgroundOrange} & \textcolor{safePink}{$\textbf{QT}_p$}, \textcolor{safeBlue}{$\textbf{DT}_l$}\cellcolor{backgroundOrange} & \textcolor{safePink}{$\textbf{QT}_p$}, \textcolor{safeBlue}{$\textbf{DT}_l^{L}$} \cellcolor{backgroundOrange} & \textcolor{safePink}{$\textbf{QT}$} \cellcolor{backgroundOrange} &\cellcolor{backgroundOrange} \\ \hline

    \cellcolor{lightgray}$1-q\gtrsim1/n^2$ & \cellcolor{black} & \textcolor{safePink}{$\textbf{QT}_p$}, \textcolor{safeBlue}{$\textbf{DT}_l$} \cellcolor{backgroundOrange} & \textcolor{safePink}{$\textbf{QT}_p$}, \textcolor{safeBlue}{$\textbf{DT}_l$}\cellcolor{backgroundOrange} & \textcolor{safePink}{$\textbf{QT}_p$}, \textcolor{safeBlue}{$\textbf{DT}_l^{L}$} \cellcolor{backgroundOrange} & \textcolor{safePink}{$\textbf{QT}$} \cellcolor{backgroundOrange} &\textcolor{safePurple}{IS} \cellcolor{backgroundOrange}\\ \hline

    \cellcolor{lightgray}$1-q\gtrsim\frac{(\log n)^2}{n^2}$ &  \cellcolor{black} & \textcolor{safePink}{$\textbf{QT}_p$}, \textcolor{safeBlue}{$\textbf{DT}_l$} \cellcolor{backgroundOrange}& \textcolor{safePink}{$\textbf{QT}_p$}, \textcolor{safeBlue}{$\textbf{DT}_l$}\cellcolor{backgroundOrange} & \textcolor{safePink}{$\textbf{QT}_p$}, \textcolor{safeBlue}{$\textbf{DT}_l^{L}$} \cellcolor{backgroundOrange} & \textcolor{safePurple}{IS}, \textcolor{safePink}{$\textbf{QT}$} \cellcolor{backgroundOrange} &\textcolor{safePurple}{IS} \cellcolor{backgroundOrange} \\ \hline

    \cellcolor{lightgray}$1-q\gtrsim\frac{\log n}{n}$ & \cellcolor{black} &  \textcolor{safePink}{$\textbf{QT}_p$}, \textcolor{safeBlue}{$\textbf{DT}_l$} \cellcolor{backgroundOrange}& \textcolor{safePink}{$\textbf{QT}_p$}, \textcolor{safeBlue}{$\textbf{DT}_l$}\cellcolor{backgroundOrange} & \textcolor{safePurple}{IS}, \textcolor{safePink}{$\textbf{QT}_p$}, \textcolor{safeBlue}{$\textbf{DT}_l^{L}$} \cellcolor{backgroundOrange}& \textcolor{safePurple}{IS}, \textcolor{safePink}{$\textbf{QT}$} \cellcolor{backgroundOrange} &\textcolor{safePurple}{IS} \cellcolor{backgroundOrange} \\ \hline

    \cellcolor{lightgray}$1-q\gg\frac{\log n}{n}$ & \cellcolor{black} &  \textcolor{safePink}{$\textbf{QT}_p$}, \textcolor{safeBlue}{$\textbf{DT}_l$} \cellcolor{backgroundOrange}& \textcolor{safePurple}{IS}, \textcolor{safePink}{$\textbf{QT}_p$}, \textcolor{safeBlue}{$\textbf{DT}_l$}\cellcolor{backgroundOrange} & \textcolor{safePurple}{IS}, \textcolor{safePink}{$\textbf{QT}_p$}, \textcolor{safeBlue}{$\textbf{DT}_l^{L}$} \cellcolor{backgroundOrange}& \textcolor{safePurple}{IS}, \textcolor{safePink}{$\textbf{QT}$} \cellcolor{backgroundOrange} &\textcolor{safePurple}{IS} \cellcolor{backgroundOrange} \\ \hline

    \cellcolor{lightgray}$1-q\gtrsim\tau^{-2}$ & \cellcolor{black} & \textcolor{safePurple}{IS}, \textcolor{safePink}{$\textbf{QT}_p$}, \textcolor{safeBlue}{$\textbf{DT}_l$} \cellcolor{backgroundOrange} & \textcolor{safePurple}{IS}, \textcolor{safePink}{$\textbf{QT}_p$}, \textcolor{safeBlue}{$\textbf{DT}_l$}\cellcolor{backgroundOrange}& \textcolor{safePurple}{IS}, \textcolor{safePink}{$\textbf{QT}_p$}, \textcolor{safeBlue}{$\textbf{DT}_l^{L}$} \cellcolor{backgroundOrange} & \textcolor{safePurple}{IS}, \textcolor{safePink}{$\textbf{QT}$} \cellcolor{backgroundOrange} & \textcolor{safePurple}{IS} \cellcolor{backgroundOrange}\\ \hline

    \cellcolor{lightgray}$q\gtrsim\Theta(1)$ & \cellcolor{black}  & \textcolor{safePurple}{IS}, \textcolor{safePink}{$\textbf{QT}_p$}, \textcolor{safeBlue}{$\textbf{DT}_l$}\cellcolor{backgroundOrange} & \textcolor{safePurple}{IS}, \textcolor{safePink}{$\textbf{QT}_p$}, \textcolor{safeBlue}{$\textbf{DT}_l$}\cellcolor{backgroundOrange} & \textcolor{safePurple}{IS}, \textcolor{safePink}{$\textbf{QT}_p$}, \textcolor{safeBlue}{$\textbf{DT}_l^{L}$} \cellcolor{backgroundOrange} & \textcolor{safePurple}{IS}, \textcolor{safePink}{$\textbf{QT}$} \cellcolor{backgroundOrange} &\textcolor{safePurple}{IS} \cellcolor{backgroundOrange} \\ \hline

    \cellcolor{lightgray}$q\gg \sqrt{\frac{\log n}{n}} $& \cellcolor{black} & \textcolor{safePurple}{IS}, \textcolor{safePink}{$\textbf{QT}_p$}, \textcolor{safeBlue}{$\textbf{DT}_l$}\cellcolor{backgroundOrange} & \textcolor{safePurple}{IS}, \textcolor{safePink}{$\textbf{QT}_p$}, \textcolor{safeBlue}{$\textbf{DT}_l^{L}$}\cellcolor{backgroundOrange} & \textcolor{safePurple}{IS}, \textcolor{safePink}{$\textbf{QT}$} \cellcolor{backgroundOrange} &\textcolor{safePurple}{IS} \cellcolor{backgroundOrange} & \textcolor{safePurple}{IS} \cellcolor{backgroundOrange}\\ \hline

    \cellcolor{lightgray}$q\gtrsim \sqrt{\frac{\log n}{n}} $& \cellcolor{black} & \textcolor{safePurple}{IS}, \textcolor{safePink}{$\textbf{QT}_p$}, \textcolor{safeBlue}{$\textbf{DT}_l$}\cellcolor{backgroundOrange} & \textcolor{safePurple}{IS}, \textcolor{safePink}{$\textbf{QT}$} \cellcolor{backgroundOrange} & \textcolor{safePurple}{IS}, \textcolor{safePink}{$\textbf{QT}$} \cellcolor{backgroundOrange} &\textcolor{safePurple}{IS} \cellcolor{backgroundOrange} & \textcolor{safePurple}{IS} \cellcolor{backgroundOrange}\\ \hline

    \cellcolor{lightgray}$q\gg \frac{\tau^2\log n}{n} $& \cellcolor{black}  &\textcolor{safePurple}{IS}, \textcolor{safePink}{$\textbf{QT}_p$}, \textcolor{safeBlue}{$\textbf{DT}_l$}  \cellcolor{backgroundOrange}& \textcolor{safePurple}{IS} \cellcolor{backgroundOrange} & \textcolor{safePurple}{IS}\cellcolor{backgroundOrange} & \textcolor{safePurple}{IS} \cellcolor{backgroundOrange} & \textcolor{safePurple}{IS} \cellcolor{backgroundOrange} \\ \hline

    \cellcolor{lightgray}$q\gtrsim \frac{\tau^2\log n}{n} $& \cellcolor{black}  &\textcolor{safePurple}{IS}, \textcolor{safePink}{$\textbf{QT}_p$}, \textcolor{safeBlue}{$\textbf{DT}_l^{L}$}  \cellcolor{backgroundOrange}& \textcolor{safePurple}{IS} \cellcolor{backgroundOrange} & \textcolor{safePurple}{IS}\cellcolor{backgroundOrange} & \textcolor{safePurple}{IS} \cellcolor{backgroundOrange} & \textcolor{safePurple}{IS} \cellcolor{backgroundOrange} \\ \hline

    \cellcolor{lightgray}$q\gtrsim \frac{\tau\log n}{n}/\frac{\tau^2 \log(nq)}{n} $& \cellcolor{black}  &\textcolor{safePurple}{IS}, \textcolor{safePink}{$\textbf{QT}$} / \textcolor{safeBlue}{$\textbf{DT}_l^{L}$}   \cellcolor{backgroundOrange}& \textcolor{safePurple}{IS} \cellcolor{backgroundOrange} & \textcolor{safePurple}{IS} \cellcolor{backgroundOrange} & \textcolor{safePurple}{IS} \cellcolor{backgroundOrange} & \textcolor{safePurple}{IS} \cellcolor{backgroundOrange} \\ \hline
     
     \cellcolor{lightgray}$q\gtrsim \frac{\log n}{n} $& \cellcolor{black} & \textcolor{safePurple}{IS} & \textcolor{safePurple}{IS}  &  \textcolor{safePurple}{IS} & \textcolor{safePurple}{IS} & \textcolor{safePurple}{IS}\\ \hline
     
     \cellcolor{lightgray}$0\le q\ll\frac{\log n}{n}$ & \cellcolor{black} & \textcolor{safePurple}{IS} & \textcolor{safePurple}{IS} & \textcolor{safePurple}{IS} & \textcolor{safePurple}{IS}  & \textcolor{safePurple}{IS}  \\ \hline
     
     \cellcolor{lightgray}& \cellcolor{lightgray}$np=O(1)$ & \cellcolor{lightgray}$np=\tau\to\infty$ & \cellcolor{lightgray}$np\ll\sqrt{\frac{n}{\log n}}$ & \cellcolor{lightgray}$np\lesssim\sqrt{\frac{n}{\log n}}$ & \cellcolor{lightgray}$np\lesssim \frac{n}{\log n}$ & \cellcolor{lightgray}$np=\Theta(n)$  \\ \hline
\end{tabular}
    \caption{Distinguishability results extended from Table \ref{distinguish_table} to include results for Assumption \ref{spike_small_ball_assumption}. See \ref{section:tables_extended} for details on notation.}
    \label{distinguish_table_full}
\end{table}
\begin{landscape}
\begin{table}[htbp]
\vspace{-.5in}
\hspace{-.5in}
    \centering
    \renewcommand{\arraystretch}{1.5} 
    \begin{tabular}{|c|c|c|c|c|c|c|c|c|c|c|} \hline
    \multicolumn{10}{|c|}{\cellcolor{lightgray}\textbf{Doubly Sparse Recovery}} \\ \hline
    
    \cellcolor{lightgray}$1-q\ll \frac{\log n}{n}$ & \cellcolor{lightgray} &  \textcolor{safePink}{$\textbf{QT}_{p}^{L}$}\textcolor{safeBlue}{$\textbf{DT}_l$} & \textcolor{safePink}{$\textbf{QT}_p$}\textcolor{safeBlue}{$\textbf{DT}_l$}\cellcolor{backgroundOrange} \!& \textcolor{safePink}{$\textbf{QT}_p$}\textcolor{safeBlue}{$\textbf{DT}_l$}\cellcolor{backgroundOrange} & \textcolor{safePink}{$\textbf{QT}_p$}\textcolor{safeBlue}{$\textbf{DT}_l^{L}$} \cellcolor{backgroundOrange} & \textcolor{safePink}{$\textbf{QT}_{(p)}$}\textcolor{safeBlue}{$\textbf{DT}_l^{L}$} \cellcolor{backgroundOrange} & \textcolor{safePink}{$\textbf{QT}_p^{L}$}\textcolor{safeBlue}{$\textbf{DT}_l^{L}$} \cellcolor{backgroundOrange} & \textcolor{safePink}{$\textbf{QT}^{L}$} \cellcolor{backgroundOrange} &\cellcolor{backgroundOrange} \\ \hline

    \cellcolor{lightgray}$1-q\gtrsim\frac{\log n}{n}$ & \cellcolor{lightgray} &  \textcolor{safePink}{$\textbf{QT}_p^{L}$}\textcolor{safeBlue}{$\textbf{DT}_l$} & \textcolor{safePink}{$\textbf{QT}_p$}\textcolor{safeBlue}{$\textbf{DT}_l$}\cellcolor{backgroundOrange} \! & \textcolor{safePink}{$\textbf{QT}_p$}\textcolor{safeBlue}{$\textbf{DT}_l$}\cellcolor{backgroundOrange} & \textcolor{safePink}{$\textbf{QT}_p$}\textcolor{safeBlue}{$\textbf{DT}_l^{L}$} \cellcolor{backgroundOrange} & \textcolor{safePink}{$\textbf{QT}_{(p)}$}\textcolor{safeBlue}{$\textbf{DT}_l^{L}$} \cellcolor{backgroundOrange} & \textcolor{safePink}{$\textbf{QT}_p^{L}$}\textcolor{safeBlue}{$\textbf{DT}_l^{L}$}  \cellcolor{backgroundOrange} & \textcolor{safePink}{$\textbf{QT}^{L}$} \cellcolor{backgroundOrange} &\textcolor{safePurple}{IS} \cellcolor{backgroundOrange}\\ \hline

    \cellcolor{lightgray}$1-q\gtrsim \frac{\log(\frac{n}{\log n})}{n/\log n}$ & \cellcolor{lightgray} &  \textcolor{safePink}{$\textbf{QT}_{p}^{L}$}\textcolor{safeBlue}{$\textbf{DT}_l$} & \textcolor{safePink}{$\textbf{QT}_p$}\textcolor{safeBlue}{$\textbf{DT}_l$}\cellcolor{backgroundOrange} \! & \textcolor{safePink}{$\textbf{QT}_p$}\textcolor{safeBlue}{$\textbf{DT}_l$}\cellcolor{backgroundOrange} & \textcolor{safePink}{$\textbf{QT}_p$}\textcolor{safeBlue}{$\textbf{DT}_l^{L}$} \cellcolor{backgroundOrange} & \textcolor{safePink}{$\textbf{QT}_{(p)}$}\textcolor{safeBlue}{$\textbf{DT}_l^{L}$} \cellcolor{backgroundOrange}& \textcolor{safePink}{$\textbf{QT}_p^{L}$}\textcolor{safeBlue}{$\textbf{DT}_l^{L}$}  \cellcolor{backgroundOrange} & \textcolor{safePurple}{IS}\textcolor{safePink}{$\textbf{QT}^{L}$} \cellcolor{backgroundOrange} &\textcolor{safePurple}{IS} \cellcolor{backgroundOrange} \\ \hline

    \cellcolor{lightgray}$1-q\gtrsim \frac{\log(\sqrt{\frac{n}{\log n}})}{\sqrt{n/\log n}}$ & \cellcolor{lightgray} &  \textcolor{safePink}{$\textbf{QT}_{p}^{L}$}\textcolor{safeBlue}{$\textbf{DT}_l$} &  \textcolor{safePink}{$\textbf{QT}_p$}\textcolor{safeBlue}{$\textbf{DT}_l$}\cellcolor{backgroundOrange} \! & \textcolor{safePink}{$\textbf{QT}_p$}\textcolor{safeBlue}{$\textbf{DT}_l$}\cellcolor{backgroundOrange} & \textcolor{safePink}{$\textbf{QT}_p$}\textcolor{safeBlue}{$\textbf{DT}_l^{L}$}  \cellcolor{backgroundOrange} & \textcolor{safePink}{$\textbf{QT}_{(p)}$}\textcolor{safeBlue}{$\textbf{DT}_l^{L}$} \cellcolor{backgroundOrange}& \textcolor{safePurple}{IS}\textcolor{safePink}{$\textbf{QT}_p^{L}$}\textcolor{safeBlue}{$\textbf{DT}_l^{L}$}  \cellcolor{backgroundOrange}& \textcolor{safePurple}{IS}\textcolor{safePink}{$\textbf{QT}^{L}$} \cellcolor{backgroundOrange} &\textcolor{safePurple}{IS}\cellcolor{backgroundOrange} \\ \hline

    \cellcolor{lightgray}$1-q\gtrsim \frac{\log(\sqrt[3]{\frac{n}{\log n}})}{\sqrt[3]{n/\log n}}$ & \cellcolor{lightgray} &  \textcolor{safePink}{$\textbf{QT}_{p}^{L}$}\textcolor{safeBlue}{$\textbf{DT}_l$} & \textcolor{safePink}{$\textbf{QT}_p$}\textcolor{safeBlue}{$\textbf{DT}_l$}\cellcolor{backgroundOrange} & \textcolor{safePink}{$\textbf{QT}_p$}\textcolor{safeBlue}{$\textbf{DT}_l$}\cellcolor{backgroundOrange} & \textcolor{safePink}{$\textbf{QT}_p$}\textcolor{safeBlue}{$\textbf{DT}_l^{L}$} \cellcolor{backgroundOrange} & \textcolor{safePurple}{IS}\textcolor{safePink}{$\textbf{QT}_{(p)}$}\textcolor{safeBlue}{$\textbf{DT}_l^{L}$} \cellcolor{backgroundOrange} & \textcolor{safePurple}{IS}\textcolor{safePink}{$\textbf{QT}_p^{L}$}\textcolor{safeBlue}{$\textbf{DT}_l^{L}$}  \cellcolor{backgroundOrange} \! & \textcolor{safePurple}{IS}\textcolor{safePink}{$\textbf{QT}^{L}$}\cellcolor{backgroundOrange} & \textcolor{safePurple}{IS}\cellcolor{backgroundOrange}\\ \hline

    \cellcolor{lightgray}$1-q\gtrsim \frac{\log(\sqrt[4]{\frac{n}{\log n}})}{\sqrt[4]{n/\log n}}$ & \cellcolor{lightgray} &  \textcolor{safePink}{$\textbf{QT}_{p}^{L}$}\textcolor{safeBlue}{$\textbf{DT}_l$} & \textcolor{safePink}{$\textbf{QT}_p$}\textcolor{safeBlue}{$\textbf{DT}_l$}\cellcolor{backgroundOrange} & \textcolor{safePink}{$\textbf{QT}_p$}\textcolor{safeBlue}{$\textbf{DT}_l$}\cellcolor{backgroundOrange} & \textcolor{safePurple}{IS},\textcolor{safePink}{$\textbf{QT}_p$}\textcolor{safeBlue}{$\textbf{DT}_l^{L}$}  \cellcolor{backgroundOrange} & \textcolor{safePurple}{IS}\textcolor{safePink}{$\textbf{QT}_{(p)}$}\textcolor{safeBlue}{$\textbf{DT}_l^{L}$} \cellcolor{backgroundOrange} & \textcolor{safePurple}{IS}\textcolor{safePink}{$\textbf{QT}_p^{L}$}\textcolor{safeBlue}{$\textbf{DT}_l^{L}$}  \cellcolor{backgroundOrange} \! & \textcolor{safePurple}{IS}\textcolor{safePink}{$\textbf{QT}^{L}$}\cellcolor{backgroundOrange} & \textcolor{safePurple}{IS}\cellcolor{backgroundOrange}\\ \hline

    \cellcolor{lightgray}$1-q\gtrsim \frac{\log(\sqrt[5]{\frac{n}{\log n}})}{\sqrt[5]{n/\log n}}$ & \cellcolor{lightgray} &  \textcolor{safePink}{$\textbf{QT}_{p}^{L}$}\textcolor{safeBlue}{$\textbf{DT}_l$} & \textcolor{safePink}{$\textbf{QT}_p$}\textcolor{safeBlue}{$\textbf{DT}_l$}\cellcolor{backgroundOrange} & \textcolor{safePurple}{IS}\textcolor{safePink}{$\textbf{QT}_p$}\textcolor{safeBlue}{$\textbf{DT}_l$}\cellcolor{backgroundOrange} & \textcolor{safePurple}{IS}\textcolor{safePink}{$\textbf{QT}_p$}\textcolor{safeBlue}{$\textbf{DT}_l^{L}$}  \cellcolor{backgroundOrange} & \textcolor{safePurple}{IS}\textcolor{safePink}{$\textbf{QT}_{(p)}$}\textcolor{safeBlue}{$\textbf{DT}_l^{L}$} \cellcolor{backgroundOrange} & \textcolor{safePurple}{IS}\textcolor{safePink}{$\textbf{QT}_p^{L}$}\textcolor{safeBlue}{$\textbf{DT}_l^{L}$}  \cellcolor{backgroundOrange} \! & \textcolor{safePurple}{IS}\textcolor{safePink}{$\textbf{QT}^{L}$}\cellcolor{backgroundOrange} & \textcolor{safePurple}{IS}\cellcolor{backgroundOrange}\\ \hline

    \cellcolor{lightgray}$1-q\gtrsim\frac{\log \tau}{\tau}$ & \cellcolor{lightgray} &  \textcolor{safePink}{$\textbf{QT}_{p}^{L}$}\textcolor{safeBlue}{$\textbf{DT}_l$} & \textcolor{safePurple}{IS}\textcolor{safePink}{$\textbf{QT}_p$}\textcolor{safeBlue}{$\textbf{DT}_l$}\cellcolor{backgroundOrange} & \textcolor{safePurple}{IS}\textcolor{safePink}{$\textbf{QT}_p$}\textcolor{safeBlue}{$\textbf{DT}_l$}\cellcolor{backgroundOrange} & \textcolor{safePurple}{IS}\textcolor{safePink}{$\textbf{QT}_p$}\textcolor{safeBlue}{$\textbf{DT}_l^{L}$}  \cellcolor{backgroundOrange} & \textcolor{safePurple}{IS}\textcolor{safePink}{$\textbf{QT}_{(p)}$}\textcolor{safeBlue}{$\textbf{DT}_l^{L}$} \cellcolor{backgroundOrange} & \textcolor{safePurple}{IS}\textcolor{safePink}{$\textbf{QT}_p^{L}$}\textcolor{safeBlue}{$\textbf{DT}_l^{L}$}  \cellcolor{backgroundOrange} \! & \textcolor{safePurple}{IS}\textcolor{safePink}{$\textbf{QT}^{L}$}\cellcolor{backgroundOrange} & \textcolor{safePurple}{IS}\cellcolor{backgroundOrange}\\ \hline

    \cellcolor{lightgray}$q\gtrsim\Theta(1)$ & \cellcolor{lightgray} &  \textcolor{safePink}{$\textbf{QT}_{p}^{L}$}\textcolor{safeBlue}{$\textbf{DT}_l$} & \textcolor{safePurple}{IS}\textcolor{safePink}{$\textbf{QT}_p$}\textcolor{safeBlue}{$\textbf{DT}_l$}\cellcolor{backgroundOrange} \! & \textcolor{safePurple}{IS}\textcolor{safePink}{$\textbf{QT}_p$}\textcolor{safeBlue}{$\textbf{DT}_l$}\cellcolor{backgroundOrange} & \textcolor{safePurple}{IS}\textcolor{safePink}{$\textbf{QT}_p$}\textcolor{safeBlue}{$\textbf{DT}_l^{L}$}  \cellcolor{backgroundOrange} & \textcolor{safePurple}{IS}\textcolor{safePink}{$\textbf{QT}_{(p)}$}\textcolor{safeBlue}{$\textbf{DT}_l^{L}$} \cellcolor{backgroundOrange}& \textcolor{safePurple}{IS}\textcolor{safePink}{$\textbf{QT}_p^{L}$}\textcolor{safeBlue}{$\textbf{DT}_l^{L}$} \cellcolor{backgroundOrange} & \textcolor{safePurple}{IS}\textcolor{safePink}{$\textbf{QT}^{L}$} \cellcolor{backgroundOrange} &\textcolor{safePurple}{IS} \cellcolor{backgroundOrange} \\ \hline



    \cellcolor{lightgray}$1\gg q\gtrsim (\frac{\log n}{n})^{\frac{1}{5}} \!$& \cellcolor{lightgray} & \textcolor{safePurple}{IS}\textcolor{safePink}{$\textbf{QT}_{p}^{L}$}\textcolor{safeBlue}{$\textbf{DT}_l$} & \textcolor{safePurple}{IS}\textcolor{safePink}{$\textbf{QT}_p$}\textcolor{safeBlue}{$\textbf{DT}_l$}\cellcolor{backgroundOrange} \! & \textcolor{safePurple}{IS}\textcolor{safePink}{$\textbf{QT}_p$}\textcolor{safeBlue}{$\textbf{DT}_l^{L}$}  \cellcolor{backgroundOrange} & \textcolor{safePurple}{IS}\textcolor{safePink}{$\textbf{QT}_{(p)}$}\textcolor{safeBlue}{$\textbf{DT}_l^{L}$} \cellcolor{backgroundOrange} & \textcolor{safePurple}{IS}\textcolor{safePink}{$\textbf{QT}_p^{L}$}\textcolor{safeBlue}{$\textbf{DT}_l^{L}$}  \cellcolor{backgroundOrange}& \textcolor{safePurple}{IS}\textcolor{safePink}{$\textbf{QT}^{L}$} \cellcolor{backgroundOrange} &\textcolor{safePurple}{IS} \cellcolor{backgroundOrange} & \textcolor{safePurple}{IS} \cellcolor{backgroundOrange}\\ \hline

    \cellcolor{lightgray}$q\gtrsim (\frac{\log n}{n})^{\frac{1}{4}} $& \cellcolor{lightgray} & \textcolor{safePurple}{IS}\textcolor{safePink}{$\textbf{QT}_{p}^{L}$}\textcolor{safeBlue}{$\textbf{DT}_l$} & \textcolor{safePurple}{IS}\textcolor{safePink}{$\textbf{QT}_p$}\textcolor{safeBlue}{$\textbf{DT}_l$}\cellcolor{backgroundOrange} \! & \textcolor{safePurple}{IS}\textcolor{safePink}{$\textbf{QT}_{(p)}$}\textcolor{safeBlue}{$\textbf{DT}_l^{L}$}  \cellcolor{backgroundOrange} & \textcolor{safePurple}{IS}\textcolor{safePink}{$\textbf{QT}_{(p)}$}\textcolor{safeBlue}{$\textbf{DT}_l^{L}$} \cellcolor{backgroundOrange} & \textcolor{safePurple}{IS}\textcolor{safePink}{$\textbf{QT}_p^{L}$}\textcolor{safeBlue}{$\textbf{DT}_l^{L}$}  \cellcolor{backgroundOrange}& \textcolor{safePurple}{IS}\textcolor{safePink}{$\textbf{QT}^{L}$} \cellcolor{backgroundOrange} &\textcolor{safePurple}{IS} \cellcolor{backgroundOrange} & \textcolor{safePurple}{IS} \cellcolor{backgroundOrange}\\ \hline

    \cellcolor{lightgray}$q\gtrsim (\frac{\log n}{n})^{\frac{1}{3}} $& \cellcolor{lightgray} & \textcolor{safePurple}{IS}\textcolor{safePink}{$\textbf{QT}_{p}^{L}$}\textcolor{safeBlue}{$\textbf{DT}_l$} & \textcolor{safePurple}{IS}\textcolor{safePink}{$\textbf{QT}_p$}\textcolor{safeBlue}{$\textbf{DT}_l$}\cellcolor{backgroundOrange} \! & \textcolor{safePurple}{IS}\textcolor{safePink}{$\textbf{QT}_{(p)}$}\textcolor{safeBlue}{$\textbf{DT}_l^{L}$}  \cellcolor{backgroundOrange} & \textcolor{safePurple}{IS}\textcolor{safePink}{$\textbf{QT}_{p}^{L}$}\textcolor{safeBlue}{$\textbf{DT}_l^{L}$} \cellcolor{backgroundOrange} & \textcolor{safePurple}{IS}\textcolor{safePink}{$\textbf{QT}_p^{L}$}\textcolor{safeBlue}{$\textbf{DT}_l^{L}$}  \cellcolor{backgroundOrange}& \textcolor{safePurple}{IS}\textcolor{safePink}{$\textbf{QT}^{L}$} \cellcolor{backgroundOrange} &\textcolor{safePurple}{IS} \cellcolor{backgroundOrange} & \textcolor{safePurple}{IS} \cellcolor{backgroundOrange}\\ \hline

    \cellcolor{lightgray}$q\gg (\frac{\log n}{n})^{\frac{2}{5}} $& \cellcolor{lightgray} & \textcolor{safePurple}{IS}\textcolor{safePink}{$\textbf{QT}_{p}^{L}$}\textcolor{safeBlue}{$\textbf{DT}_l$} & \textcolor{safePurple}{IS}\textcolor{safePink}{$\textbf{QT}_p$}\textcolor{safeBlue}{$\textbf{DT}_l$}\cellcolor{backgroundOrange} \! & \textcolor{safePurple}{IS}\textcolor{safePink}{$\textbf{QT}_{(p)}$}\textcolor{safeBlue}{$\textbf{DT}_l^{L}$}  \cellcolor{backgroundOrange} & \textcolor{safePurple}{IS}\textcolor{safePink}{$\textbf{QT}_{p}^{L}$}\textcolor{safeBlue}{$\textbf{DT}_l^{L}$} \cellcolor{backgroundOrange} & \textcolor{safePurple}{IS}\textcolor{safePink}{$\textbf{QT}^{L}$} \cellcolor{backgroundOrange}& \textcolor{safePurple}{IS} \cellcolor{backgroundOrange} &\textcolor{safePurple}{IS} \cellcolor{backgroundOrange} & \textcolor{safePurple}{IS} \cellcolor{backgroundOrange}\\ \hline


    \cellcolor{lightgray}$q\gtrsim (\frac{\log n}{n})^{\frac{1}{2}} $& \cellcolor{lightgray} & \textcolor{safePurple}{IS}\textcolor{safePink}{$\textbf{QT}_{p}^{L}$}\textcolor{safeBlue}{$\textbf{DT}_l$} & \textcolor{safePurple}{IS}\textcolor{safePink}{$\textbf{QT}_p$}\textcolor{safeBlue}{$\textbf{DT}_l$}\cellcolor{backgroundOrange} \! & \textcolor{safePurple}{IS}\textcolor{safePink}{$\textbf{QT}_p^{L}$}\textcolor{safeBlue}{$\textbf{DT}_l^{L}$}  \cellcolor{backgroundOrange} & \textcolor{safePurple}{IS}\textcolor{safePink}{$\textbf{QT}_{p}^{L}$}\textcolor{safeBlue}{$\textbf{DT}_l^{L}$} \cellcolor{backgroundOrange} & \textcolor{safePurple}{IS}\textcolor{safePink}{$\textbf{QT}^{L}$} \cellcolor{backgroundOrange}& \textcolor{safePurple}{IS} \cellcolor{backgroundOrange} &\textcolor{safePurple}{IS} \cellcolor{backgroundOrange} & \textcolor{safePurple}{IS} \cellcolor{backgroundOrange}\\ \hline

    \cellcolor{lightgray}$q\gtrsim (\frac{\log n}{n})^{\frac{3}{5}} $& \cellcolor{lightgray} & \textcolor{safePurple}{IS}\textcolor{safePink}{$\textbf{QT}_{p}^{L}$}\textcolor{safeBlue}{$\textbf{DT}_l$} & \textcolor{safePurple}{IS}\textcolor{safePink}{$\textbf{QT}_p$}\textcolor{safeBlue}{$\textbf{DT}_l$}\cellcolor{backgroundOrange} \! & \textcolor{safePurple}{IS}\textcolor{safePink}{$\textbf{QT}_p^{L}$}\textcolor{safeBlue}{$\textbf{DT}_l^{L}$}  \cellcolor{backgroundOrange} & \textcolor{safePurple}{IS}\textcolor{safePink}{$\textbf{QT}^{L}$}\cellcolor{backgroundOrange} & \textcolor{safePurple}{IS}\textcolor{safePink}{$\textbf{QT}^{L}$} \cellcolor{backgroundOrange}& \textcolor{safePurple}{IS} \cellcolor{backgroundOrange} &\textcolor{safePurple}{IS} \cellcolor{backgroundOrange} & \textcolor{safePurple}{IS} \cellcolor{backgroundOrange}\\ \hline

    \cellcolor{lightgray}$q\gtrsim (\frac{\log n}{n})^{\frac{2}{3}} $& \cellcolor{lightgray} & \textcolor{safePurple}{IS}\textcolor{safePink}{$\textbf{QT}_{p}^{L}$}\textcolor{safeBlue}{$\textbf{DT}_l$} & \textcolor{safePurple}{IS}\textcolor{safePink}{$\textbf{QT}_p$}\textcolor{safeBlue}{$\textbf{DT}_l$}\cellcolor{backgroundOrange} \! & \textcolor{safePurple}{IS}\textcolor{safePink}{$\textbf{QT}^{L}$} \cellcolor{backgroundOrange} & \textcolor{safePurple}{IS}\textcolor{safePink}{$\textbf{QT}^{L}$}\cellcolor{backgroundOrange} & \textcolor{safePurple}{IS}\textcolor{safePink}{$\textbf{QT}^{L}$} \cellcolor{backgroundOrange}& \textcolor{safePurple}{IS} \cellcolor{backgroundOrange} &\textcolor{safePurple}{IS} \cellcolor{backgroundOrange} & \textcolor{safePurple}{IS} \cellcolor{backgroundOrange}\\ \hline

    \cellcolor{lightgray}$q\gtrsim (\frac{\log n}{n})^{\frac{3}{4}} $& \cellcolor{lightgray} & \textcolor{safePurple}{IS}\textcolor{safePink}{$\textbf{QT}_{p}^{L}$}\textcolor{safeBlue}{$\textbf{DT}_l$} & \textcolor{safePurple}{IS}\textcolor{safePink}{$\textbf{QT}_p$}\textcolor{safeBlue}{$\textbf{DT}_l$}\cellcolor{backgroundOrange} \! & \textcolor{safePurple}{IS}\textcolor{safePink}{$\textbf{QT}^{L}$} \cellcolor{backgroundOrange} & \textcolor{safePurple}{IS}\textcolor{safePink}{$\textbf{QT}^{L}$}\cellcolor{backgroundOrange} & \textcolor{safePurple}{IS} \cellcolor{backgroundOrange}& \textcolor{safePurple}{IS} \cellcolor{backgroundOrange} &\textcolor{safePurple}{IS} \cellcolor{backgroundOrange} & \textcolor{safePurple}{IS} \cellcolor{backgroundOrange}\\ \hline

    \cellcolor{lightgray}$q\gtrsim (\frac{\log n}{n})^{\frac{4}{5}} $& \cellcolor{lightgray} & \textcolor{safePurple}{IS}\textcolor{safePink}{$\textbf{QT}_{p}^{L}$}\textcolor{safeBlue}{$\textbf{DT}_l$} & \textcolor{safePurple}{IS}\textcolor{safePink}{$\textbf{QT}_p$}\textcolor{safeBlue}{$\textbf{DT}_l$}\cellcolor{backgroundOrange} \! & \textcolor{safePurple}{IS}\textcolor{safePink}{$\textbf{QT}^{L}$} \cellcolor{backgroundOrange} & \textcolor{safePurple}{IS} \cellcolor{backgroundOrange} & \textcolor{safePurple}{IS} \cellcolor{backgroundOrange}& \textcolor{safePurple}{IS} \cellcolor{backgroundOrange} &\textcolor{safePurple}{IS} \cellcolor{backgroundOrange} & \textcolor{safePurple}{IS} \cellcolor{backgroundOrange}\\ \hline

    \cellcolor{lightgray}$q\gg \frac{\tau^6\log n}{n} $& \cellcolor{lightgray} & \textcolor{safePurple}{IS}\textcolor{safePink}{$\textbf{QT}_{p}^{L}$}\textcolor{safeBlue}{$\textbf{DT}_l$} & \textcolor{safePurple}{IS}\textcolor{safePink}{$\textbf{QT}_p$} \textcolor{safeBlue}{$\textbf{DT}_l$}\cellcolor{backgroundOrange}  & \textcolor{safePurple}{IS} \cellcolor{backgroundOrange} & \textcolor{safePurple}{IS}\cellcolor{backgroundOrange} & \textcolor{safePurple}{IS} \cellcolor{backgroundOrange}& \textcolor{safePurple}{IS} \cellcolor{backgroundOrange} \! & \textcolor{safePurple}{IS} \cellcolor{backgroundOrange} & \textcolor{safePurple}{IS} \cellcolor{backgroundOrange} \\ \hline

    \cellcolor{lightgray}$q\gg \frac{\tau^4\log n}{n}/\frac{\tau^2\log(nq)}{n} $& \cellcolor{lightgray} & \textcolor{safePurple}{IS}\textcolor{safePink}{$\textbf{QT}_{p}^{L}$}\textcolor{safeBlue}{$\textbf{DT}_l$} & \textcolor{safePurple}{IS}\textcolor{safePink}{$\textbf{QT}_{p}$}/\textcolor{safeBlue}{$\textbf{DT}_l^L$}\cellcolor{backgroundOrange} & \textcolor{safePurple}{IS} \cellcolor{backgroundOrange} & \textcolor{safePurple}{IS}\cellcolor{backgroundOrange} & \textcolor{safePurple}{IS} \cellcolor{backgroundOrange} & \textcolor{safePurple}{IS} \cellcolor{backgroundOrange} \! & \textcolor{safePurple}{IS} \cellcolor{backgroundOrange} & \textcolor{safePurple}{IS} \cellcolor{backgroundOrange} \\ \hline

    \cellcolor{lightgray}$q\gg \frac{\tau^3\log n}{n}/\frac{\tau^2\log(nq)}{n} $& \cellcolor{lightgray} & \textcolor{safePurple}{IS}\textcolor{safePink}{$\textbf{QT}_{p}^{L}$}\textcolor{safeBlue}{$\textbf{DT}_l$} & \textcolor{safePurple}{IS}\textcolor{safePink}{$\textbf{QT}_{(p)}$}/\textcolor{safeBlue}{$\textbf{DT}_l^L$}\cellcolor{backgroundOrange} & \textcolor{safePurple}{IS} \cellcolor{backgroundOrange} & \textcolor{safePurple}{IS}\cellcolor{backgroundOrange} & \textcolor{safePurple}{IS} \cellcolor{backgroundOrange} & \textcolor{safePurple}{IS} \cellcolor{backgroundOrange} \! & \textcolor{safePurple}{IS} \cellcolor{backgroundOrange} & \textcolor{safePurple}{IS} \cellcolor{backgroundOrange} \\ \hline

    \cellcolor{lightgray}$q\gtrsim \frac{\tau^2\log n}{n}/\frac{\tau^2\log(nq)}{n} $& \cellcolor{lightgray} & \textcolor{safePurple}{IS}\textcolor{safePink}{$\textbf{QT}_{p}^{L}$}\textcolor{safeBlue}{$\textbf{DT}_l$} & \textcolor{safePurple}{IS}\textcolor{safePink}{$\textbf{QT}_{p}^{L}$}/\textcolor{safeBlue}{$\textbf{DT}_l^L$} \cellcolor{backgroundOrange} & \textcolor{safePurple}{IS} \cellcolor{backgroundOrange} & \textcolor{safePurple}{IS} \cellcolor{backgroundOrange} & \textcolor{safePurple}{IS} \cellcolor{backgroundOrange} & \textcolor{safePurple}{IS} \cellcolor{backgroundOrange} \! & \textcolor{safePurple}{IS} \cellcolor{backgroundOrange} & \textcolor{safePurple}{IS} \cellcolor{backgroundOrange} \\ \hline

    \cellcolor{lightgray}$q\gtrsim \frac{\tau\log n}{n}/\frac{\tau^2\log(nq)}{n} $& \cellcolor{lightgray} & \textcolor{safePurple}{IS}\textcolor{safePink}{$\textbf{QT}_p^{L}$}\textcolor{safeBlue}{$\textbf{DT}_l$} & \textcolor{safePurple}{IS}\textcolor{safePink}{$\textbf{QT}^{L}$}/\textcolor{safeBlue}{$\textbf{DT}_l^L$} \cellcolor{backgroundOrange} & \textcolor{safePurple}{IS} \cellcolor{backgroundOrange} \! & \textcolor{safePurple}{IS} \cellcolor{backgroundOrange} & \textcolor{safePurple}{IS} \cellcolor{backgroundOrange} & \textcolor{safePurple}{IS} \cellcolor{backgroundOrange} & \textcolor{safePurple}{IS} \cellcolor{backgroundOrange} & \textcolor{safePurple}{IS} \cellcolor{backgroundOrange} \\ \hline
     
     \cellcolor{lightgray}$q\gtrsim \frac{\log n}{n} $& \cellcolor{lightgray} & \textcolor{safePurple}{IS}\textcolor{safePink}{$\textbf{QT}_p^{L}$}\textcolor{safeBlue}{$\textbf{DT}_l^L$} & \textcolor{safePurple}{IS} & \textcolor{safePurple}{IS} & \textcolor{safePurple}{IS} & \textcolor{safePurple}{IS} & \textcolor{safePurple}{IS} & \textcolor{safePurple}{IS} & \textcolor{safePurple}{IS}\\ \hline
     
     \cellcolor{lightgray}$0\le q\ll\frac{\log n}{n}$ & \cellcolor{lightgray} & \textcolor{safePurple}{IS} & \textcolor{safePurple}{IS} & \textcolor{safePurple}{IS} & \textcolor{safePurple}{IS} & \textcolor{safePurple}{IS} & \textcolor{safePurple}{IS} & \textcolor{safePurple}{IS}  & \textcolor{safePurple}{IS}  \\ \hline
     
     \cellcolor{lightgray}& \cellcolor{lightgray}$np$ & \cellcolor{lightgray}$np=$ & \cellcolor{lightgray}$np=$ & \cellcolor{lightgray}$np$ & \cellcolor{lightgray}$np\ll $ & \cellcolor{lightgray}$np\ll $ & \cellcolor{lightgray}$np\lesssim$ & \cellcolor{lightgray}$np\lesssim$ & \cellcolor{lightgray}$np$ \\
     
     \cellcolor{lightgray}& \cellcolor{lightgray}$\to0$ & \cellcolor{lightgray}$\Theta(1)$ & \cellcolor{lightgray}$\tau\to\infty$ & \cellcolor{lightgray}$ \sqrt[5]{\frac{n}{\log n}}$ & \cellcolor{lightgray}$\sqrt[4]{\frac{n}{\log n}}$ & \cellcolor{lightgray}$\sqrt[3]{\frac{n}{\log n}}$ & \cellcolor{lightgray}$\sqrt{\frac{n}{\log n}}$ & \cellcolor{lightgray}$ \frac{n}{\log n}$ & \cellcolor{lightgray} $=1$\\ \hline
\end{tabular}
    \caption{Recovery results extended from Table \ref{recovery_table} to include results for Assumption \ref{spike_small_ball_assumption}. See \ref{section:tables_extended} for details on notation.}
    \label{recovery_table_full}
\end{table}
\end{landscape}

\subsection{Fixed Mask Proof}\label{appendix:fixed_mask_proof}

\begin{proof-of-lemma}[\ref{fixedmasklemma}] 
Since $\tilde v$ is dense, we have
$$
\|v\|_0=\sum_{j=1}^n |(b)_j|=:S_n,
$$
where, by Assumption \ref{ind_bern_spike_assumption}, the random variables ${|(b)_j|}_{j=1}^n$ are independent Bernoulli random variables with parameter $p$. Thus, $S_n\sim\operatorname{Bin}(n,p)$ and $
\mathbb E[S_n]=np.$ 



Bernstein's Inequality for a sum of independent Bernoulli random variables gives
\begin{align*}
\mathbb P\left(|S_n-np|\geq \sqrt{np}\log(np)\right)
&\leq
2\exp\left(
-\frac{np\log^2(np)}
{2\left(np+\frac{1}{3}\sqrt{np}\log(np)\right)}
\right)\
=
2\exp\left(
-\frac{\log^2(np)}
{2\left(1+\frac{\log(np)}{3\sqrt{np}}\right)}
\right).
\end{align*}
Thus, for sufficiently large $np$,
$$
\frac{1}
{2\left(1+\frac{\log(np)}{3\sqrt{np}}\right)}
\geq \frac{1}{3}\implies \mathbb P\left(|S_n-np|\geq \sqrt{np}\log(np)\right)
\leq
2\exp\left(-\frac{\log^2(np)}{3}\right) \leq (np)^{-2}~,
$$
seeing as ${\log^2(np)}/{3}-2\log(np)\to\infty.$



\end{proof-of-lemma}

\subsection{Quadratic Thresholding Distinguishability Proof: Lower Bounded Spike}\label{quad_dist_proof}

\begin{proof-of-theorem}[\ref{quad_form_distinguish_thm}]
    Because independence is not required for distinguishability, we follow a similar procedure to \cite[Theorem 2.19]{ding2022subexponentialtimealgorithmssparsepca}, but adapt the bounds to reflect our sub-Gaussian, sparse noise matrix $Y.$

    \textbf{\textit{Type I Error.}} First, we consider pulling from the null distribution $\mathbb Q$ and mistakenly labeling it as the planted distribution. We consider any vector $\tilde v\in\mathcal I_{n,l}$. Using Lemma \ref{noise_quadratic_form_control}, we see
    \begin{align*}
        \mathbb Q\left(\tilde v^TX\tilde v\ge t\right)=\mathbb Q\left(\tilde v^TY\tilde v\ge t\right)\le 2\exp\left(-c\frac{q\theta^2 l^2}{4K_2^2B^4np^2}\right)~.
    \end{align*}
    Taking a union bound over the set $\mathcal I_{n,l},$ we find
    \begin{align*}
        \mathbb Q\left(\mathcal T\ge t\right)\le~& 2|\mathcal I_{n,l}|\exp\left(-c\frac{q\theta^2 l^2}{4K_2^2B^4np^2}\right) \\
        =~&2\binom{n}{l}2^{l}\exp\left(-c\frac{q\theta^2 l^2}{4K_2^2B^4np^2}\right) \\
        \le~& \left(2n\right)^{l+1}\exp\left(-c\frac{q\theta^2 l^2}{4K_2^2B^4np^2}\right) \\
        =~& \exp\left(-\frac{cq\theta^2 l^2-4K_2^2B^4np^2\left(l+1\right)\log\left(2n\right)}{4K_2^2B^4np^2}\right) \\
        \le~& \exp\left(-\left(\frac{cq\theta^2 l^2-16K_2^2B^4\cdot l \cdot np^2\log\left(n\right)}{4K_2^2B^4np^2}\right)\right)~,
    \end{align*}
    where we used the fact that $l\ge1$ in the final inequality. 
    
    \textbf{\textit{Type II Error.}} Next, we consider being in the planted distribution $\mathbb P,$ but mistakenly labeling it as the null distribution. To ensure this probability goes to $0,$ we consider the vector $\bar v\in\mathcal I_{n,l}$ that correctly guesses the signs of the top $l$ entries of the true spike $v.$ Then, we have $l\le \| v\|_0$ and
    \begin{align*}
        \bar v^{T}X^{}\bar v^{}=~&\frac{\theta}{np}\langle \bar v^{}, v^{}\rangle^2 + \bar v^{T}Y^{}\bar v^{} \\
        \ge~& \frac{\theta l^2}{B^2np} + \bar v^{T}Y^{}\bar v^{}~.
    \end{align*}
    Setting our test statistic to be $\mathcal T:=\max\limits_{w\in\mathcal I_{n,l}}w^TXw$ and establishing our thresholding value as $t=\frac{\theta l^2}{2B^2np}$, we see 
    \begin{align*}
        \mathbb P\left(\mathcal T^{}<t\right)\le~& \mathbb P\left(\bar v^{T}X^{}\bar v^{}<t\right) \\
        \le~& \mathbb P\left(2t+ \bar v^{T}Y^{}\bar v^{}<t\right) \\
        =~& \mathbb P\left(\bar v^{T}Y^{}\bar v^{}<-t\right) \\
        \le~& 2\exp\left(-c\frac{q\theta^2 l^2}{4K_2^2B^4np^2}\right)~,
    \end{align*}
    where we used Lemma \ref{noise_quadratic_form_control}.
     
     \textbf{\textit{Conclusion.}} Noticing that control over the type I error tail gives us control over the type II error tail, we use the fact that $l\ge 1$ to establish the interval of $l$ that allows the type I and II errors to converge to $0$ for $C_{1}$ sufficiently large enough:
    \begin{align*}
        l \in \left[\frac{C_{1}K_2^2B^4}{\theta^2}\cdot \frac{np^2\log(n)}{q}, \|v\|_0\right]~.
    \end{align*}
\end{proof-of-theorem}

\subsection{Quadratic Thresholding Support Recovery Proof: Lower Bounded Spike}\label{quad_recov_proof_exact}

\begin{proof-of-theorem}[\ref{quad_form_recovery_thm}]
    We begin first by establishing control over the quadratic form for each $i\in[n]$. Once we have established this control, we will condition on this high probability event. We use the indexing $X^{(i)}$ to represent a matrix identical to $X$ but with zeros in the $i$-th row and column.
    Let $\bar v^{(i)}$ be the vector with support size $l$ and entries drawn from $\{\pm 1, 0\}$ correctly guessing the signs of $l$ of the largest true spike entries of $v^{(i)}$. Then, accounting for the possibility of $i\in\text{supp}(v)$, we see $l\le ||v||_0-1$. Thus, we have 
    \begin{align*}
        \bar v^{(i)T}X^{(i)}\bar v^{(i)}=~&\frac{\theta}{np}\langle \bar v^{(i)}, v^{(i)}\rangle^2 + \bar v^{(i)T}Y^{(i)}\bar v^{(i)} \\
        \ge~& \frac{\theta l^2}{B^2np} + \bar v^{(i)T}Y^{(i)}\bar v^{(i)}~.
    \end{align*}
    Let $T^{(i)}=v^{*(i)T}X^{(i)}v^{*(i)}$ where $v^{*(i)}:=\arg\max\limits_{w\in\mathcal I_{n,l}^{(i)}}w^TX^{(i)}w$. Because the value of the maximum does not change, we importantly select $v^{*(i)}$ such that $(v^{*(i)})_i=0$, which will ensure $\langle v^{*(i)}, v^{(i)} \rangle =\langle v^{*(i)}, v \rangle$. Establishing our thresholding value as $t=\frac{\theta l^2}{2B^2np}$, we see 
    \begin{align*}
        \mathbb P(T^{(i)}<t)\le~& \mathbb P(\bar v^{(i)T}X^{(i)}\bar v^{(i)}<t) \\
        \le~& \mathbb P(2t+ \bar v^{(i)T}Y^{(i)}\bar v^{(i)}<t) \\
        =~& \mathbb P(\bar v^{(i)T}Y^{(i)}\bar v^{(i)}<-t) \\
        \le~& 2\exp\left(-c\frac{q\theta^2 l^2}{4K_2^2B^4np^2}\right)~,
    \end{align*}
    where we used Lemma \ref{noise_quadratic_form_control}.
    Seeing as we must have control over this term for each iteration of $i\in[n],$ we take a union bound to see
    \begin{align*}
        \mathbb P\left(\exists i\in[n] : T^{(i)}<t\right)\le~& \mathbb P\left(\bigcup_{i\in[n]}\{ T^{(i)}<t\}\right) \\
        \le~& n\mathbb P(T^{(i)}<t) \\
        \le~& \exp\left(-c\left(\frac{q\theta^2 l^2-8K_2^2B^4np^2\log(n)}{4K_2^2B^4np^2}\right)\right)~.
    \end{align*}
    Therefore, as long as $q\theta^2l^2\gtrsim8K_2^2B^4np^2\log(n),$ we see this probability term converges to $0.$
    Going forward, we condition on the high probability event $\mathcal B_1 := \{T^{(i)}\ge t \quad \forall i\in[n]\}$.

    We now establish a second high probability event that will allow us to control the noise contribution of the maximizing vectors $v^{*(i)}$. Define
    \begin{align*}
        \mathcal B_2 :=
        \left\{
        \max_{i\in[n]}
        \max_{w\in\mathcal I_{n,l}^{(i)}}
        \left|w^TY^{(i)}w\right|
        \le
        \frac{\theta l^2}{4B^2np}
        \right\}~.
    \end{align*}
    For any fixed $w\in\mathcal I_{n,l}^{(i)}$, Lemma \ref{noise_quadratic_form_control} gives
    \begin{align*}
        \mathbb P\left(
        \left|w^TY^{(i)}w\right|
        \ge
        \frac{\theta l^2}{4B^2np}
        \right)
        \le
        2\exp\left(
        -c\frac{q\theta^2l^2}{16K_2^2B^4np^2}
        \right).
    \end{align*}
    Union bounding over $i\in[n]$ and $w\in\mathcal I_{n,l}^{(i)}$ gives
    \begin{align*}
        \mathbb P(\mathcal B_2^c)
        &\le
        2n(2n)^l
        \exp\left(
        -c\frac{q\theta^2l^2}{16K_2^2B^4np^2}
        \right) \\
        &\le
        \exp\left(
        -c\left(
        \frac{q\theta^2l^2
        -c_2K_2^2B^4np^2l\log(n)}
        {16K_2^2B^4np^2}
        \right)
        \right),
    \end{align*}
    for a sufficiently large absolute constant $c_2$. Therefore, as long as $q\theta^2l^2
        \gtrsim
        C_2K_2^2B^4np^2l\log(n),$ we have $\mathbb P(\mathcal B_2)\to1$. Going forward, we condition on the high probability event $\mathcal B_1\cap\mathcal B_2$. Therefore, we have 
    \begin{align*}
        v^{*(i)T}X^{(i)} v^{*(i)}
        =
        \frac{\theta}{np}\left\langle v^{*(i)}, v^{(i)}\right\rangle^2
        +v^{*(i)T}Y^{(i)}v^{*(i)} 
        \ge
        \frac{\theta l^2}{2B^2np},
    \end{align*}
    while, by the definition of $\mathcal B_2$,
    \begin{align*}
        \left|v^{*(i)T}Y^{(i)}v^{*(i)}\right|
        \le
        \frac{\theta l^2}{4B^2np}~.
    \end{align*}
    Thus,
    \begin{align*}
        \frac{\theta}{np}
        \left\langle v^{*(i)},v^{(i)}\right\rangle^2
        =
        v^{*(i)T}X^{(i)}v^{*(i)}
        -
        v^{*(i)T}Y^{(i)}v^{*(i)} \ge
        \frac{\theta l^2}{2B^2np}
        -
        \frac{\theta l^2}{4B^2np}
        =
        \frac{\theta l^2}{4B^2np}.
    \end{align*}
    Therefore,
    \begin{align*}
        \left\langle v^{*(i)},v^{(i)}\right\rangle^2
        \ge
        \frac{l^2}{4B^2}.
    \end{align*}
    By our choice of $v^{*(i)}$, we have
    $\langle v^{*(i)},v^{(i)}\rangle
    =
    \langle v^{*(i)},v\rangle$.
    Combining the bounds above, we obtain
    \begin{align}
        \mathbb P\left(
        \exists i\in[n]:
        \langle v^{*(i)},v\rangle^2
        \le
        \frac{l^2}{4B^2}
        \right)
        &\le~ \mathbb P(\mathcal B_1^c)+\mathbb P(\mathcal B_2^c) \notag \\
        &\le 2 \exp\left(
        -c\left(
        \frac{q\theta^2l^2
        -\max{\{8,c_2 l\}}K_2^2B^4np^2\log(n)}
        {4K_2^2B^4np^2}
        \right)
        \right)\label{QT_recov_C_2_tail}~.
    \end{align}

    Finally, now that we have a computed lower bound for the correlation, we take our computed $v^{*(i)}$ and multiply it back into the $i$-th row of $X$ to determine whether the entry is spiked or not. If $i\in$ supp$(v)$, then we can use the above bound to see
    \begin{align}\label{gaussian_bottleneck}
        \left|(Xv^{*(i)})_i\right|
        =
        \left|
        \frac{\theta}{np}v_i\langle v^{*(i)},v\rangle
        +e_i^TYv^{*(i)}
        \right|
        \ge
        \left|
        \frac{\theta}{np}
        \frac{1}{B}
        \frac{l}{2B}
        \right|
        -
        \left|e_i^TYv^{*(i)}\right|
        =
        \frac{\theta l}{2B^2np}
        -
        \left|e_i^TYv^{*(i)}\right|~.
    \end{align}
    If $i$ is not in the support, then an inequality is not necessary and we are left only with the latter term involving the noise. Therefore, in either case, we must control the latter term. Notice, this step highlights the importance of our leave-one-out method, as it allows us to introduce independence: $v^{*(i)}$ is created without any dependence on the $i$-th row and column of $X$ (and therefore $Y$). Thus, $e_i^TY$ and $v^{*(i)}$ are completely independent of each other. 
    Thus, we proceed with an application of Lemma 
    \ref{noise_quadratic_form_control} one final time and arrive at the following bound: 
    \begin{align*}
        \mathbb P\left(
        |e_i^TYv^{*(i)}|
        \ge
        \frac{\theta l}{6B^2np}
        \right)
        \le
        2\exp\left(
        -c\frac{nq}{K_2^2l}
        \frac{\theta^2l^2}{36B^4n^2p^2}
        \right)
        =
        2\exp\left(
        -c\frac{\theta^2ql}{36K_2^2B^4np^2}
        \right)~.
    \end{align*}
    Taking a union bound over all $i\in[n],$ we see
    \begin{align*}
        \mathbb P\left(
        |e_i^TYv^{*(i)}|
        \le
        \frac{\theta l}{6B^2np}
        \quad \forall i\in[n]
        \right)
        \ge
        1-
        \exp\left(
        -c\left(
        \frac{\theta^2ql
        -72K_2^2B^4np^2\log(n)}
        {36K_2^2B^4np^2}
        \right)
        \right)~,
    \end{align*}
    which converges to $1$ when
    \begin{align*}
        l>
        \frac{c_3K_2^2B^4}{\theta^2}
        \frac{np^2\log(n)}{q}~,
    \end{align*}
    for $c_3$ sufficiently large. Combining this tail bound with the bound on $\mathcal B_1^c$ and $\mathcal B_2^c$ from \eqref{QT_recov_C_2_tail}, we find that the overall probability converges to $1$ under the condition
    \begin{align*}
        l \in
        \left[
        \frac{C_2K_2^2B^4}{\theta^2}
        \cdot
        \frac{np^2\log(n)}{q},
        \|v\|_0-1
        \right]~,
    \end{align*}
    for sufficiently large $C_2$ that is chosen to subsume the different tail bounds.

    Therefore, in conclusion, we find 
    \begin{align}
        |(Xv^{*(i)})_i|
        \ge
        \frac{\theta l}{3B^2np}
        \quad \text{ for } i \in \text{supp}(v)
        \text{ with }
        \operatorname{sgn}((Xv^{*(i)})_i)
        =
        \operatorname{sgn}(v_i\langle v^{*(i)},v\rangle)~,\label{sign_start}
    \end{align}
    and 
    \begin{align*}
        |(Xv^{*(i)})_i|
        \le
        \frac{\theta l}{6B^2np}
        \quad \text{ for }i\notin\text{supp}(v)~,
    \end{align*}
    with probability
    \begin{align*}
        1-
        \exp\left(
        -\left(
        \frac{cq\theta^2l^2
        -C_2K_2^2B^4np^2\log(n)}
        {4K_2^2B^4np^2}
        \right)
        \right)~.
    \end{align*}

    \paragraph{Case 2: $\|v\|_0=1$.}
    In this specific case, the above proof method does not work as $l\le\|v\|_0-1.$ More specifically, if the sole spiked entry is $v_i,$ then it would be excluded from $v^{*(i)}.$ Thus, in the case $\|v\|_0=1,$ we perform diagonal thresholding without the use of $l.$ In order to do so, we must show all of the $n-1$ entries on the diagonal fall beneath the thresholding parameter and the spike entry is larger than this value. Using Lemma \ref{noise_quadratic_form_control}, we see every noise element on the diagonal falls beneath the thresholding parameter: 
    \begin{align}
        \mathbb P\left(
        \exists i\in[n] :
        |Y_{ii}|
        \ge
        \frac{\theta}{2B^2np}
        \right)
        \le
        n\exp\left(
        -\frac{c\theta^2}{4B^4K_2^2}
        \frac{q}{np^2}
        \right)~,
        \label{recovery_l1_noise}
    \end{align}
    under the condition $q\ge C_2 B^4K_2^2np^2\log(n)$ for sufficiently large $C_2$. Thus, we find a lower bound for the $n-1$ entries not in the spike support. As for the sole spiked entry, we use the above control on the noise terms to see
    \begin{align*}
        |X_{ii}|
        =
        \frac{\theta}{np}v_i^2-|Y_{ii}|
        \ge
        \frac{\theta}{np}
        \left|\frac{1}{B}\frac{1}{B}\right|
        -
        \frac{\theta}{2B^2np}
        \ge
        \frac{\theta}{2B^2np}~,
    \end{align*}
    with probability controlled by \eqref{recovery_l1_noise}.
\end{proof-of-theorem}

\bibliographystyle{plainnat}
\bibliography{bib.bib} 

\end{document}